\documentclass[opre,nonblindrev]{informs3aa}
\OneAndAHalfSpacedXI

\usepackage{times}
\let\argmax\relax
\let\argmin\relax
\usepackage{amsmath,amsfonts,bm}

\def\eqref#1{equation~\ref{#1}}
\def\Eqref#1{Equation~\ref{#1}}
\def\plaineqref#1{\ref{#1}}

\def\1{\bm{1}}

\DeclareMathAlphabet{\mathsfit}{\encodingdefault}{\sfdefault}{m}{sl}
\SetMathAlphabet{\mathsfit}{bold}{\encodingdefault}{\sfdefault}{bx}{n}

\newcommand{\R}{\mathbb{R}}

\DeclareMathOperator*{\argmax}{arg\,max}
\DeclareMathOperator*{\argmin}{arg\,min}

\usepackage{amsmath,amssymb}
\usepackage{bm}
\usepackage{booktabs}
\usepackage{float}
\usepackage{enumitem}
\usepackage{graphicx}
\usepackage{url}
\usepackage{tikz}
\usetikzlibrary{arrows.meta, positioning, shapes.multipart}
\graphicspath{{figures/}}

\usepackage{natbib}
\bibpunct[, ]{(}{)}{,}{a}{}{,}
\def\bibfont{\small}

\usepackage[breaklinks=true,bookmarks=false,colorlinks=true,allcolors=blue]{hyperref}

\theoremstyle{TH}
\newtheorem{theorem}{Theorem}
\newtheorem{proposition}[theorem]{Proposition}
\newtheorem{lemma}[theorem]{Lemma}

\theoremstyle{EX}
\newtheorem{definition}[theorem]{Definition}

\let\INFORMSproof\proof
\let\INFORMSendproof\endproof
\renewenvironment{proof}[1][Proof]
  {\INFORMSproof{#1.}}
  {\unskip\nobreak\hfill\ensuremath{\square}\INFORMSendproof}
\EquationsNumberedThrough

\DeclareMathOperator{\MNL}{MNL}
\newcommand{\Nin}{N^-}
\newcommand{\dep}{\mathrm{depth}}
\newcommand{\simplex}[1]{\Delta^{#1}}
\newcommand{\cG}{\mathcal{G}}

\newcommand{\err}{\operatorname{err}}
\newcommand{\ms}[2]{$#1$\,{\scriptsize$(#2)$}}
\newcommand{\msv}[2]{\begin{tabular}{@{}c@{}}$#1$\\[-2pt]{\scriptsize$(#2)$}\end{tabular}}
\newcommand{\ccmodel}{\textsc{cc}}

\RUNAUTHOR{Xiao and Jia}
\RUNTITLE{Consideration Circuits: Depth Separation and Universality Beyond a Single Softmax}
\TITLE{Consideration Circuits: Depth Separation and Universality Beyond a Single Softmax}
\ARTICLEAUTHORS{\AUTHOR{Junjie Xiao$^{1,2}$, Huiwen Jia$^{2}$}
\AFF{$^{1}$Department of Mathematics, Peking University\\
$^{2}$Department of Industrial Engineering and Operations Research, University of California, Berkeley}
}
\ABSTRACT{Most feature-based choice models, classical and deep, score items and
apply a single softmax. We introduce consideration circuits (\ccmodel{}), feature-based models of multi-stage choice defined by directed acyclic graphs of multinomial logit (MNL)
units. Source units assign probabilities to menu items, and internal units combine predecessor distributions using MNL weights computed from their probability-weighted feature summaries.
On a three-item compromise task with fixed non-collinear features,
menu-independent random-utility models (RUM), including a single MNL unit, suffer an error bounded away from zero. For \ccmodel{}, in contrast, we establish a sharp depth--norm
separation: increasing depth from $2$ to $3$ reduces the optimal maximum
taste-vector norm for error $\epsilon$ from
$\Theta(\log(1/\epsilon)/\epsilon)$ to $\Theta(\log(1/\epsilon))$.
The depth-$2$ lower bound holds for arbitrary width and menu-independent
routing biases, while a five-node depth-$3$ circuit with zero routing
biases attains the logarithmic rate.
More generally, we characterize
two geometric conditions that are necessary and sufficient for
approximating arbitrary deterministic choice tables on finite menu
families. Under these conditions, depth $3$ suffices, while depth $4$
achieves optimal logarithmic norm scaling whenever the family contains
a non-singleton menu.
In experiments, standalone tree circuits with
fewer than $600$ parameters attain the lowest mean test negative
log-likelihood (NLL) among the evaluated models on four fixed-pool
benchmarks and the Expedia temporal split.
As output heads, \ccmodel{} generalize the linear MNL readout and lower mean test NLL for every tested encoder on Expedia and Trivago.
}
\begin{document}
\maketitle

\section{Introduction}
\label{sec:intro}

Feature-based choice models retain an MNL-based choice rule while
varying how item utilities are specified: through linear functions
\citep{mcfadden1974conditional}, mixing distributions
\citep{mcfadden2000mixed,kamakura1989probabilistic}, or neural networks
\citep{sifringer2020enhancing,han2022neural,aouad2022representing}.
Context-dependent models further allow an item's utility to depend on
other items in the menu, for example through attention modules that
model interactions among items and update their feature representations
before computing utilities
\citep{wang2023transformer,zhang2026deephalo}. These approaches primarily
increase the flexibility of utility specification, including its
dependence on menu context. A complementary behavioral literature
studies staged choice, including consideration-set formation followed
by choice within the considered set
\citep{tversky1972elimination,payne1976task,sims2003implications,matejka2015rational}.
Classical consideration models often use item-specific parameters on a
fixed, finite item universe
\citep{manzini2014stochastic,masatlioglu2012revealed,brady2016menu},
limiting direct transfer to unseen items. They also typically involve
a single consideration stage. Building on this behavioral literature, we study a feature-based model of multistage choice that supports arbitrary depth and can be trained by stochastic gradient descent. We use the terms \emph{decision depth} and \emph{consideration depth} interchangeably.

We introduce \emph{consideration circuits} (\ccmodel{})
in \S\ref{sec:model}, which connect MNL units in a directed acyclic
graph. Each source unit assigns choice probabilities to the menu
items. Each internal unit combines its predecessors' choice
distributions using MNL weights computed from their
probability-weighted feature summaries. The sink's distribution
gives the final choice probabilities. A depth-2 circuit implements
a differentiable consider-then-choose process, while deeper
circuits introduce additional consideration stages. The units are standard MNL models, but their composition allows more
general choice behavior: even the smallest nontrivial circuit can
violate the regularity axiom of random-utility theory
\citep{block1960random}, reproducing the decoy effect
\citep{huber1982adding}.

A natural question is: what is gained by increasing decision depth, and at what cost? We characterize this tradeoff through upper and lower bounds on the
maximum taste-vector norm required to achieve approximation error
$\epsilon$ (see \S\ref{sec:power}).
On a three-item compromise task with fixed non-collinear
features, independence of irrelevant alternatives gives an error
lower bound of $2-\sqrt2$ for MNL, while regularity gives a lower
bound of $1/2$ for every context-independent random-utility model (RUM). We prove that
depth-$2$ circuits, with one consideration stage before the final
choice, achieve arbitrarily small error with optimal norm
$\Theta(\log(1/\epsilon)/\epsilon)$. Adding one consideration stage
reduces the optimal norm to $\Theta(\log(1/\epsilon))$ at depth $3$.
More generally, we identify two geometric conditions, arising from
linear scoring and softmax translation invariance, that are necessary
for a universality ganranttee. Under these conditions, a fixed depth-$3$ topology,
depending only on the catalog size, can approximate every
deterministic choice table on a finite menu family to arbitrary
accuracy. At depth $4$, the optimal norm is
$\Theta(\log(1/\epsilon))$ whenever the family contains a
non-singleton menu, matching the order required to select one item
from a single such menu.

We first compare tree-structured \ccmodel{} with classical
choice models, many of which assume a fixed item pool
(\S\ref{sec:universality}). With fewer than $600$ parameters, the
circuits achieve the lowest mean test negative log-likelihood (NLL)
among the evaluated models on all four standard benchmarks,
including SFwork and SFshop, previously used to evaluate PCMC and
CDM. We then compare them with deep choice architectures on two
temporally split datasets (\S\ref{sec:vs-deep}). On Trivago,
the strongest deep model improves on the $503$-parameter tree by
$0.023$ NLL with $74$ times as many parameters. On Expedia,
where diagnostics detect stronger temporal shift, every evaluated
deep baseline has higher mean test NLL than MNL.
The trees are the only compared models to outperform both MNL and
LC-MNL on both datasets. The relative robustness of CC under temporal shift suggests that modeling the choice process through staged consideration provides a
useful inductive bias when feature distributions change.

Consideration circuits can also serve as output heads for deep
choice models (\S\ref{sec:head}). Many such models use an encoder
to transform item features, followed by a linear MNL readout.
We retain the encoder architecture and replace its readout with
a tree circuit, training the encoder and circuit jointly.
This lowers mean test NLL for every tested encoder on both
datasets and yields our lowest NLL on Trivago.
In this sense, \ccmodel{} is a \emph{generalized MNL}:
it retains the same interface from item features to choice
probabilities, includes MNL as a special case, and represents
choice behavior that a single MNL cannot.

We summarize our contributions as follows:
\begin{itemize}
  \item \textbf{A feature-based extension of consider-then-choose.}
  We introduce consideration circuits (\ccmodel), which extend
  consider-then-choose to multiple consideration stages.
  Each stage applies an MNL rule to menu items or to the
  probability-weighted feature summaries produced by preceding stages.
  Although the constituent units are standard MNL models, their
  composition accommodates context effects that violate regularity.
  The model supports arbitrary depth and can be estimated by
  stochastic gradient descent.

  \item \textbf{Expressive power of the new model.}
  On a fixed three-item compromise task, we quantify how an additional
  consideration stage reduces the taste-vector norm required for
  approximation.
  Increasing depth from $2$ to $3$ reduces the optimal maximum
  taste-vector norm for error $\epsilon$ from
  $\Theta(\log(1/\epsilon)/\epsilon)$ to
  $\Theta(\log(1/\epsilon))$.
  The depth-$2$ lower bound holds for arbitrary width and
  menu-independent routing biases, whereas a five-node depth-$3$
  circuit with zero routing biases attains the logarithmic rate.
  We characterize exactly when consideration circuits can approximate
  every deterministic choice table on a finite menu family.
  Two geometric conditions, arising from linear scoring and softmax
  translation invariance, are necessary at every depth and sufficient
  at depth $3$.
  Under these conditions, depth $4$ further achieves optimal
  $\Theta(\log(1/\epsilon))$ taste-norm scaling whenever the family
  contains a non-singleton menu.
  These results distinguish restrictions that additional consideration
  stages can overcome from those imposed by the feature representation.

  \item \textbf{Numerical experiments supporting predictive performance.}
  With fewer than $600$ parameters, tree circuits attain the lowest
  mean test NLL among the evaluated models on four fixed-pool
  benchmarks and the Expedia temporal split.
  The same choice specification also applies to learned item
  representations: replacing the linear MNL readout with a circuit,
  and training the encoder and circuit jointly, lowers mean test NLL
  for every tested encoder on both hotel-choice datasets.
\end{itemize}

\subsection{Related Work}
\label{sec:related}

\paragraph{Classical choice models.}
Classical choice models build on the formulations of
\citet{luce1959individual} and \citet{mcfadden1974conditional}.
Extensions account for correlations between alternatives through
nested logit and generalized extreme-value models
\citep{mcfadden1978modeling}, preference heterogeneity through
latent classes and mixing distributions
\citep{kamakura1989probabilistic,mcfadden2000mixed,train2009discrete},
and alternative distributions for utility noise
\citep{alptekinoglu2016exponomial}. Another line of work models
context effects on a fixed set of identified items using Markov
chains \citep{blanchet2016markov,ragain2016pairwise} or context
matrices \citep{tversky1993context,seshadri2019discovering}.
Table~\ref{tab:fixedpool} compares our circuits with representative
models from these families.

\paragraph{Consider-then-choose.}
Behavioral research has long studied staged decision processes
\citep{tversky1972elimination,payne1976task,hauser1990evaluation}.
Econometric models describe consideration through random sets drawn
from a fixed item universe
\citep{manzini2014stochastic,masatlioglu2012revealed,brady2016menu},
identify it from asymmetric demand responses
\citep{abaluck2021consider}, or model attribute-based screening in
conjoint analysis \citep{gilbride2004choice}. Related work studies
assortment optimization under a specified consider-then-choose rule
\citep{aouad2021assortment}. The standard consider-then-choose
formulation has one consideration stage followed by a final choice.
Nested logit also introduces a hierarchy, but uses a predefined
nesting structure and remains a random-utility model
\citep{mcfadden1978modeling}, so our regularity-based lower bound
applies on the compromise task. Our model makes consideration
feature-based and differentiable, and composes it across multiple
stages. \S~\ref{sec:separation} quantifies how an additional
consideration stage reduces the taste-vector norm required for a
given approximation error.

\paragraph{Deep choice models.}
Neural choice models use learned functions to specify item utilities
\citep{sifringer2020enhancing,han2022neural,aouad2022representing}.
Context-dependent models further incorporate information from other
items in the menu through set architectures
\citep{zaheer2017deep,lee2019set} and attention modules
\citep{wang2023transformer,zhang2026deephalo}. These encoders
construct context-dependent item representations before computing
utilities. Consideration circuits also use attention: source units
assign choice probabilities to menu items, and subsequent units
attend to probability-weighted feature summaries from their
predecessors to combine their choice distributions. Attention
therefore implements the successive consideration stages in the
circuit.

\paragraph{Depth separation.}
Prior work studies depth separation through width
\citep{telgarsky2016benefits,eldan2016power} or norm-based
representation costs in unbounded-width ReLU networks
\citep{ongie2020function,parkinson2024depth}. We establish a sharp
norm separation for consideration circuits, whose MNL units combine
choice distributions through probability-weighted feature summaries.
On a fixed three-item compromise task with non-collinear features,
increasing depth from $2$ to $3$ reduces the optimal maximum
taste-vector norm from $\Theta(\log(1/\epsilon)/\epsilon)$ to
$\Theta(\log(1/\epsilon))$. The depth-$2$ lower bound holds for
arbitrary width and routing biases, while a five-node depth-$3$
circuit attains the logarithmic rate (\S~\ref{sec:separation}).

\section{Preliminaries}
\label{sec:prelim}

An \emph{assortment} (i.e., menu) $X=\{x_1,\ldots,x_B\}$ consists of
$B$ items with feature vectors $x_i\in\R^d$. Let
$\mathbf{X}\in\R^{B\times d}$ contain these vectors as rows. A
\emph{choice model} assigns a probability distribution to each menu,
\[
  X \;\longmapsto\; P(\cdot\mid X)\in\simplex{B-1},
\]
where $P(i\mid X)$ is the probability of choosing item $i$ from $X$.
We consider models in which items are identified only by their features.
Reordering the items must therefore reorder their choice probabilities
in the same way: the model is \emph{permutation equivariant}, satisfying
$P(\cdot\mid\Pi\mathbf{X})=\Pi P(\cdot\mid\mathbf{X})$ for every
$B\times B$ permutation matrix $\Pi$.

We use the multinomial logit
(MNL; \citealp{luce1959individual,mcfadden1974conditional}) as the
basic unit of our circuits. Given a taste vector $w\in\R^d$, a finite
set $Z=\{z_1,\ldots,z_m\}\subset\R^d$, and a bias vector $b\in\R^m$,
define
\begin{equation}
  \MNL(w,b;Z)_i \;=\;
  \frac{\exp(w^\top z_i + b_i)}{\sum_{i'=1}^{m}\exp(w^\top z_{i'} + b_{i'})},
  \qquad i=1,\ldots,m.
  \label{eq:mnl}
\end{equation}
This gives a choice distribution $\MNL(w,b;Z)\in\simplex{m-1}$ over
$Z$. Our circuits apply this operator to both menu items and intermediate
feature summaries. When $b=c\mathbf{1}$, the common bias cancels in the
softmax, giving the same distribution as zero bias. We denote this case
by $\MNL(w,\mathrm{u};Z)$.

\section{The Consideration Circuit}
\label{sec:model}

\begin{definition}[Consideration graphs]
\label{def:consideration-graphs}
Let $\cG_1$ denote the class of finite, weakly connected directed
acyclic graphs $G=(V,E)$ with a unique vertex $v^\star$ of out-degree
$0$, called the \emph{sink}. We write
$\Nin(v)=\{u:(u,v)\in E\}$ for the in-neighborhood of $v$. Vertices
with $\Nin(v)=\varnothing$ are \emph{sources}. The \emph{depth} of
$G$, denoted by $\dep(G)$, is the number of vertices on its longest
directed path.
\end{definition}

A \emph{consideration circuit} on $G\in\cG_1$ has a trainable taste
vector $w_v\in\R^d$ at each vertex $v\in V$ and a trainable bias
$b_{uv}\in\R$ on each edge $(u,v)\in E$. The total number of
trainable parameters is $|V|d+|E|$. We write
$b_{\cdot v}=(b_{uv})_{u\in\Nin(v)}$ for the biases on the incoming
edges of $v$. Given an assortment $X$, the circuit is evaluated in
topological order:
\begin{align}
  \text{source } v:\quad
  & p(v) = \MNL(w_v,\mathrm{u};X), \qquad x(v) = p(v)^\top\mathbf{X};
  \label{eq:fwd-source}\\
  \text{interior } v:\quad
  & q(v) = \MNL\bigl(w_v,b_{\cdot v};\{x(u):u\in\Nin(v)\}\bigr),
  \label{eq:fwd-interior}\\
  & p(v) = \textstyle\sum_{u\in\Nin(v)} q(v)_u\, p(u),
  \qquad x(v) = \textstyle\sum_{u\in\Nin(v)} q(v)_u\, x(u).
  \notag
\end{align}
The circuit outputs $p(v^\star)$. A single-vertex circuit reduces
to the bias-free MNL $\MNL(w,\mathrm{u};X)$.

\begin{lemma}[Consistency]
\label{lem:consistency}
For every $v\in V$, $x(v)=p(v)^\top\mathbf{X}$. Every intermediate
$p(v)$ is a valid choice distribution over the original $B$
alternatives, and the circuit is permutation equivariant.
\end{lemma}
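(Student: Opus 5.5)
The plan is a single induction over a topological ordering of $G$, proving three claims together for each vertex $v$: (i) $p(v)\in\simplex{B-1}$; (ii) $x(v)=p(v)^\top\mathbf{X}$; and (iii) the equivariance identities $p_{\Pi\mathbf{X}}(v)=\Pi\,p_{\mathbf{X}}(v)$ and $x_{\Pi\mathbf{X}}(v)=x_{\mathbf{X}}(v)$. Here the subscript records which row ordering of the menu the circuit is evaluated on. Since $G$ is a finite DAG, such an ordering exists, and every interior vertex has all its in-neighbours earlier in it. The statement for the sink $v^\star$ then gives the lemma.

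\emph{Base case (sources).} By \eqref{eq:fwd-source}, $p(v)=\MNL(w_v,\mathrm{u};X)$. This has strictly positive entries summing to $1$, so (i) holds, and (ii) holds by definition. For (iii), rows of $\Pi\mathbf{X}$ are the rows of $\mathbf{X}$ reordered, and the softmax in \eqref{eq:mnl} acts entrywise with a permutation-invariant normaliser. Hence $p_{\Pi\mathbf{X}}(v)=\Pi\,p_{\mathbf{X}}(v)$, and
\[
x_{\Pi\mathbf{X}}(v)=(\Pi p)^\top\Pi\mathbf{X}=p^\top\Pi^\top\Pi\mathbf{X}=p^\top\mathbf{X},
\]
using $\Pi^\top\Pi=I$.

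\emph{Inductive step (interior vertices).} Because $\Nin(v)\neq\varnothing$, $q(v)$ is a genuine probability vector over $\Nin(v)$. Then $p(v)=\sum_u q(v)_u\,p(u)$ is a convex combination of points of the simplex, which gives (i). Linearity gives (ii):
\[
p(v)^\top\mathbf{X}=\sum_u q(v)_u\,p(u)^\top\mathbf{X}=\sum_u q(v)_u\,x(u)=x(v),
\]
using the hypothesis $x(u)=p(u)^\top\mathbf{X}$ for each predecessor. For (iii), the inputs to \eqref{eq:fwd-interior} are the summaries $x(u)$, which by hypothesis do not change under $\Pi$. The taste vector $w_v$ and the edge biases $b_{uv}$ are attached to graph edges, not to item positions, so $q(v)$ is the same under $\mathbf{X}$ and $\Pi\mathbf{X}$. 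Therefore
\[
p_{\Pi\mathbf{X}}(v)=\sum_u q(v)_u\,\Pi\,p_{\mathbf{X}}(u)=\Pi\,p_{\mathbf{X}}(v),
\]
and $x(v)$ is again invariant.

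The only delicate point is formulating the equivariance hypothesis correctly. The summaries must be carried as \emph{invariant} quantities and the distributions as \emph{equivariant} ones. Otherwise the interior MNL step looks as if it could mix item indices. Once this pairing is built into the induction hypothesis, every step is routine.
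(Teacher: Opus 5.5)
Your proposal is correct and takes essentially the same route as the paper. Both run a topological induction in which each interior $p(v)$ is a convex combination of predecessor distributions, linearity transfers $x(u)=p(u)^\top\mathbf{X}$ to $x(v)$, and a row permutation permutes every distribution while leaving every summary, and hence every routing vector $q(v)$, unchanged. Building the invariant summaries and equivariant distributions explicitly into your induction hypothesis is simply a more detailed version of the paper's ``the same is then true successively at every interior node.''
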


\paragraph{A differentiable consider-then-choose process.}
Consider the simplest nontrivial structure: a depth-$2$ star with
$Q$ source units feeding a single sink. In the hard-max limit,
where each source MNL selects an argmax, $p(v)$ is one-hot and
$x(v)$ is the feature vector of the selected item. The sources
thus form a \emph{consideration set} from the menu, and the sink
chooses from this set. With softmax, the circuit implements a
differentiable version of this process. Each source acts as a
\emph{consideration proposal}, supplying the sink with a generalized
item represented by the probability-weighted feature summary
$x(v)=p(v)^\top\mathbf{X}$. The sink then chooses within the
resulting generalized consideration set. This interpretation also
explains the bias asymmetry. Source units have no position-specific
biases because menu positions have no intrinsic meaning. The sink
scores source units that may represent different preferences, such
as higher quality or lower price. Its edge biases $b_{\cdot v}$
therefore provide menu-independent priors over these units. A general
circuit repeats this process: each interior unit chooses among the
generalized items supplied by its predecessors and passes its own
summary $x(v)$ to its successors.

\paragraph{Comparison with mixtures of experts.}
On a depth-$2$ star, the circuit has the same mixture structure as a
mixture of experts (MoE): source units produce choice distributions,
and the sink combines them using MNL weights. The key difference is
how the menu determines these weights. In our circuit, the sink
scores source $v$ by $w^{\top}x(v)+b_v$, pairing a menu-dependent
\emph{key} $x(v)$, computed from the source's choice distribution,
with a learned \emph{query} $w$. In the corresponding MoE, expert
$v$ receives score $a_v^{\top}g(X)+b_v$: its key $a_v$ is learned
and menu-independent, while its query $g(X)$ encodes the menu.
Figure~\ref{fig:hme} compares these two mechanisms. The circuit
also requires fewer per-source parameters: a source learns only
its taste vector and uses its summary as its key, whereas an expert
in this MoE learns both a taste vector and a separate key. The
comparison extends to hierarchical mixtures of experts
(HMEs; \citealp{jacobs1991adaptive,jordan1994hierarchical}). In the
circuit, each interior unit chooses among its predecessors' summaries
and passes its weighted summary toward the sink. In the corresponding
HME, each gate uses a query encoding the menu to distribute mixture
weights from the root toward the experts.

\begin{figure}[htbp]
\centering
\includegraphics[width=0.9\linewidth]{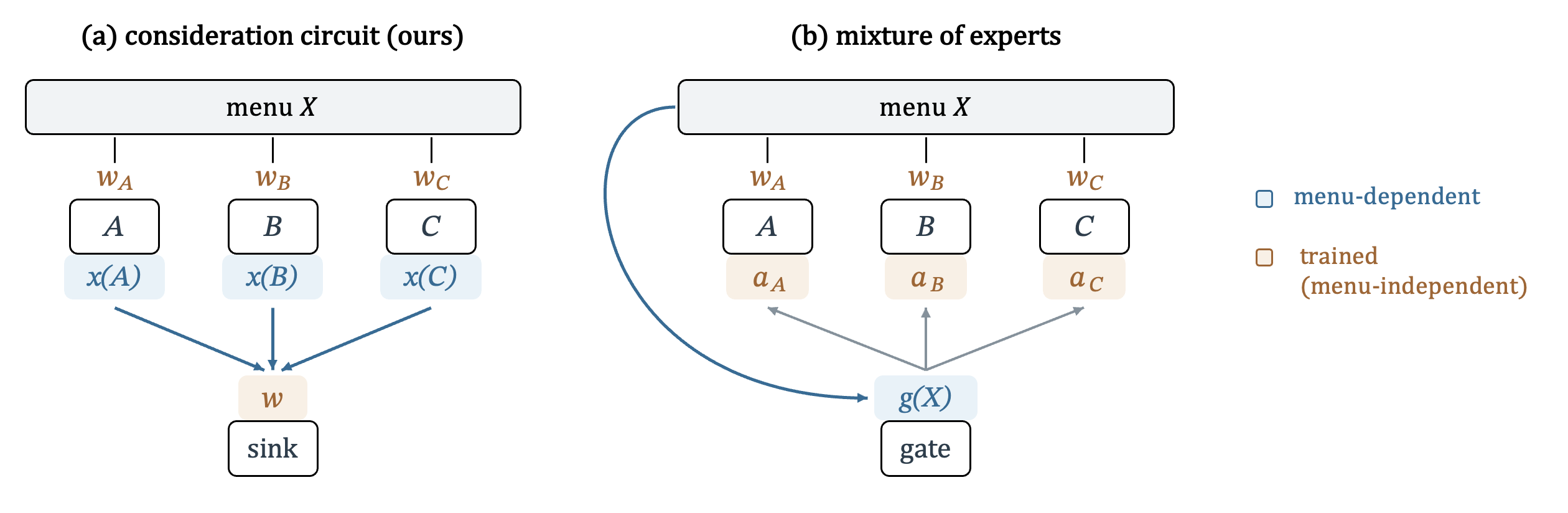}
\caption{Depth-$2$ consideration circuit versus mixture of experts.
Colors indicate menu-dependent (blue) and trained, menu-independent
(orange) quantities.
\textbf{(a)}~The circuit sink scores source $v$ by
$w^{\top}x(v)+b_v$, with menu-dependent key $x(v)$ and trained query
$w$.
\textbf{(b)}~The MoE gate scores expert $v$ by
$a_v^{\top}g(X)+b_v$, with trained key $a_v$ and menu-dependent query
$g(X)$.}
\label{fig:hme}
\end{figure}

\paragraph{Breaking regularity: the decoy effect.}
Even a circuit with just two sources $A$, $B$ and a sink, with both
edge biases fixed to zero, can produce behavior excluded by
context-independent random-utility models. Such models satisfy
regularity: adding an item to a menu cannot increase an existing
item's choice probability \citep{block1960random}. The empirically
observed decoy (asymmetric-dominance) effect violates this property
\citep{huber1982adding}. Figure~\ref{fig:decoy} shows how the circuit
reproduces this effect. On menu $\{1,2\}$, source $A$ favors item
$1$, source $B$ favors item $2$, and the sink assigns the sources
nearly equal weights, giving each item a $50\%$ share. Adding decoy
$3$, which is dominated by item $2$, does not change either source's
most probable item. It does, however, change their feature summaries.
Source $A$ assigns the decoy probability approximately $0.12$,
shifting its summary $x(A)=p(A)^\top\mathbf{X}$ and lowering the
sink's score for $A$ (Figure~\ref{fig:decoy}c). The sink chooses among
these summaries and shifts its attention from $A$ toward $B$,
raising item $2$'s share from $50\%$ to $60\%$. All item features
remain fixed: adding the decoy changes the source summaries and
hence the sink's weights. All probabilities in Figure~\ref{fig:decoy}
are computed from a fitted circuit that matches the target shares
to numerical precision (Appendix~\ref{app:decoy-fit}).

\begin{figure}[htbp]
\centering
\includegraphics[width=0.9\linewidth]{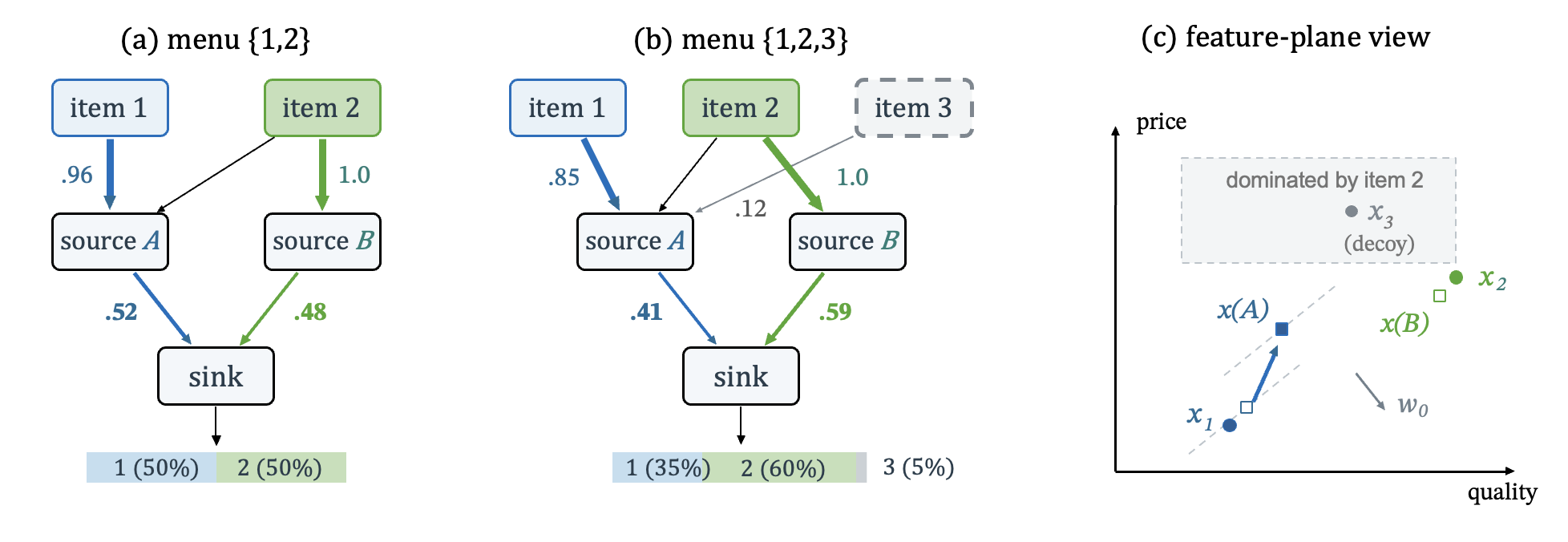}
\caption{The smallest consideration circuit fitted to a decoy
instance. All probabilities are computed from the fitted parameters
(Appendix~\ref{app:decoy-fit}); edge widths are proportional to
probabilities, edges below $0.01$ omitted. \textbf{(a)}~On the menu
$\{1,2\}$ both items have a $50\%$ share. \textbf{(b)}~The decoy $3$ shifts the
sink's attention toward source $B$: item $2$'s share rises to
$60\%$, a regularity violation. \textbf{(c)}~The mechanism
(schematic): the decoy draws part of $A$'s attention, moving $x(A)$
across the sink's iso-utility lines (dashed), while $x(B)$ remains close to $x_2$.}
\label{fig:decoy}
\end{figure}

\section{Structural Results}\label{sec:power}

We quantify the expressive power of consideration circuits and the
taste-vector norm required for approximation for different depth. On a three-item
compromise task, we establish a sharp separation between depths
$2$ and $3$. We then identify necessary and sufficient geometric
conditions for universality over deterministic choice tables, show
that depth $3$ suffices, and establish optimal logarithmic norm
scaling at depth $4$.

\subsection{A Depth Separation}
\label{sec:separation}

A single consideration stage already allows depth-$2$ circuits to
violate regularity. We now ask how an additional stage changes the
norm required to approximate a choice rule. Consider three items
with non-collinear features $x_1,x_2,x_3\in\R^d$. The compromise task
\citep{simonson1989choice} requires item $3$ to lose both pairwise
comparisons but be chosen from the full menu:
\begin{equation}
  \{1,3\}\mapsto 1,\qquad
  \{2,3\}\mapsto 2,\qquad
  \{1,2,3\}\mapsto 3.
  \label{eq:compromise-task}
\end{equation}
A model solves this task with error at most $\epsilon$ if it assigns
probability at least $1-\epsilon$ to the target item on each menu.
Figure~\ref{fig:five-node-hard} shows a fixed five-node depth-$3$
circuit that realizes the task exactly in the hard-max limit. Both
depths $2$ and $3$ permit arbitrarily small error; we compare the
taste-vector norm required. We measure this by $\max_v\|w_v\|$,
allowing arbitrary menu-independent edge biases without charging
them to this budget. Let $N_k^*(\epsilon)$ be the infimum of this norm
over all finite circuits of depth at most $k$ and error at most
$\epsilon$, with arbitrary width and fan-in:
\begin{equation}
  N^*_k(\epsilon)
  \;=\;
  \inf_{\substack{\text{circuits of depth}\,\le\,k\\
                  \text{error at most }\epsilon}}
  \;\max_{v}\,\|w_v\|.
  \label{eq:budget-def}
\end{equation}

\begin{theorem}[Depth separation]
\label{thm:separation}
For every non-collinear $x_1,x_2,x_3$,
\[
  N^*_2(\epsilon)
  =\Theta_X\!\left(\frac{\log(1/\epsilon)}{\epsilon}\right),
  \qquad
  N^*_3(\epsilon)
  =\Theta_X\!\left(\log\frac1\epsilon\right),
\]
where $\Theta_X$ hides constants depending only on the geometry of
the features ($\Omega_X$, $O_X$: the one-sided versions). The depth-$2$ lower bound holds for stars of arbitrary
width and arbitrary edge biases; the depth-$3$ upper bound is
attained by the five-node graph of
Figure~\ref{fig:five-node-hard} with all edge biases zero.
\end{theorem}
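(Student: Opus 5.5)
The four bounds are best handled separately: a universal logarithmic lower bound, two explicit constructions, and the depth-$2$ lower bound, which carries the content.

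\emph{Universal lower bound.} By Lemma~\ref{lem:consistency} and the forward pass~(\ref{eq:fwd-interior}), every $p(v)$ is a convex combination of source distributions. So $p(v^\star)$ is a convex combination of the distributions $p(s)=\MNL(w_s,\mathrm{u};X)$ over the sources $s$. If $p(v^\star)_1\ge 1-\epsilon$ on the menu $\{1,3\}$, then some source $s$ has $p(s)_1\ge 1-\epsilon$. This source has no bias, so
\[
\|w_s\|\,\|x_1-x_3\|\ge w_s^\top(x_1-x_3)\ge \log\frac{1-\epsilon}{\epsilon}.
\]
Hence $N_k^*(\epsilon)=\Omega_X(\log(1/\epsilon))$ for every $k$, which gives the depth-$3$ lower bound.

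\emph{Depth-$3$ upper bound.} Take the five-node circuit of Figure~\ref{fig:five-node-hard}, which is exact in the hard-max limit, and scale all its taste vectors by $t=C_X\log(1/\epsilon)$.
\begin{itemize}
\item In the hard-max configuration, on each of the three menus every unit's top score exceeds its competitors by a margin $\gamma_X>0$. This uses non-collinearity: the relevant summaries are distinct vertices of the triangle, so their linear scores can be separated.
\item With finite scale $t$, each source is $e^{-\Omega(t\gamma_X)}$-close to one-hot, so each summary $x(v)$ is within $O(e^{-ct})$ of its hard-max value.
\item By induction in topological order, each interior unit keeps margin at least $\gamma_X t-O(t\,e^{-ct})$. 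The accumulated error is then $O(e^{-c't})\le\epsilon$ for a suitable $C_X$.
\end{itemize}
All edge biases stay zero.

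\emph{Depth-$2$ upper bound.} Use a star with a few sources.
\begin{itemize}
\item One source sharply favors item $1$ and one sharply favors item $2$, at norm $O(\log(1/\epsilon))$.
\item A third source favors $3$ against each of $1$ and $2$ separately, but only by a soft margin.
\item In the full menu, the favoring sources' summaries move by an amount of order $\epsilon$, because their mass is split differently. The sink, with a taste vector of norm $\Theta(\log(1/\epsilon)/\epsilon)$ and tuned biases, converts this $\Theta(\epsilon)$ displacement of keys into a $\log(1/\epsilon)$ gap in scores.
\end{itemize}
I would write this down explicitly in the affine coordinates of the triangle.

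\emph{Depth-$2$ lower bound (main obstacle).} Fix a star with sink taste $w$ and arbitrary biases $b_s$. Let $u_s(X)=w^\top x_s(X)+b_s$, where the biases do not depend on the menu.
\begin{itemize}
\item Sources carrying mass on item $3$ in the full menu must have $w_s^\top x_3\gtrsim w_s^\top x_1,\ w_s^\top x_2$. Such sources also favor $3$ on $\{1,3\}$ and $\{2,3\}$, so the sink must give them weight $\le\epsilon$ there.
\item Across menus, the sink therefore has to reverse the ranking between these $3$-sources and the $1$- or $2$-favoring sources. It can do this only through the change in keys $x_s(X)$, since the biases are fixed.
\end{itemize}
I would prove a key-displacement lemma: for any source with $\|w_s\|\le N$, the relevant part of the change of $x_s$ from a pair menu to the full menu has size $O(1/N)$ unless the source is nearly one-hot. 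In the nearly one-hot case the keys sit near vertices, and the three required strict inequalities for a linear functional plus fixed offsets contradict each other, because item $3$ must lose both pairwise comparisons yet win overall. Quantitatively, the needed score gap is $\log(1/\epsilon)$ and the achievable key shift is $O(\epsilon)$. The shift is $O(\epsilon)$ because the dominating sources already have mass at least $1-\epsilon$ on their target. Hence $\|w\|\cdot O_X(\epsilon)\ge\log(1/\epsilon)$.

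Making this exchange argument rigorous is the hard step. It must hold uniformly over arbitrary width and arbitrary biases, and it must handle sources that are partially soft. I would first attempt it by averaging the sink's log-odds over the two pair menus against the full menu, which cancels the biases $b_s$.
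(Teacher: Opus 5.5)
Your universal $\Omega_X(\log(1/\epsilon))$ bound and your depth-$3$ softening argument agree with the paper. One harmless correction: on $\{1,3\}$ both predecessors of the sink report item $1$ with equal score, so you only need predecessors reporting a \emph{different} item to lose by a margin. Your three-source depth-$2$ construction also works. For example, take sink scores $(V,V,0)$ with $V\asymp\log(1/\epsilon)/\epsilon$, a $3$-source leaking $\epsilon/4$ onto each of items $1$ and $2$, and tuned biases on the two sharp sources; the paper instead uses two zero-bias sources.

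The genuine gap is the depth-$2$ lower bound. Your key quantitative claim is that the relevant key shifts are $O(\epsilon)$ because the winning sources have mass $\ge1-\epsilon$ on their targets. This is false for the pair-menu winners. A source sharply ranking $2\succ1\succ3$ is pure on item $1$ in $\{1,3\}$ but sits at $x_2$ on $\{1,2,3\}$. Its routing logit therefore moves by $t_2-t_1$, which can be of order $R:=\max_i t_i-\min_i t_i$, where $t_i=w_{v^\star}^\top x_i$. The $O(\epsilon R)$ bound holds for the sources pure on item $3$, not for the sources that win the pair menus. Your key-displacement lemma is also false as stated: a zero-taste source is not nearly one-hot, yet it moves its key by $\Theta(1)$. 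It also targets the wrong object, since the bound must constrain only the sink, for arbitrary source tastes. Averaging over the two pair menus does not rescue the argument. Biases already cancel in any single cross-menu comparison of one source, and averaging lets $|t_1-t_2|$ survive from whichever pair menu has the unfavorable sign, so it only yields $R\gtrsim\log(1/\epsilon)$.

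The missing idea is a one-sided comparison on a \emph{single} pair menu, chosen by the sign of $t_1-t_2$. This is the quantitative form of your hard-limit contradiction: in the one-hot limit, $\{1,3\}$ forces $t_1>t_2$ and $\{2,3\}$ forces $t_2>t_1$.

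Let $\upsilon_q$ be source $q$'s mass on item $2$ in the full menu and $r_q^{13}$ its mass on item $1$ in $\{1,3\}$. IIA gives $p_q^{123}=(1-\upsilon_q)p_q^{13}+\upsilon_q e_2$. Hence the sink routing logits satisfy, with the biases cancelling, $g_q(13)-g_q(123)=\upsilon_q[(t_1-t_2)+(1-r_q^{13})(t_3-t_1)]$.

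\begin{itemize}
\item If $t_1\le t_2$, this is at most $\eta R$ on $G_{13}:=\{q:r_q^{13}\ge1-\eta\}$.
\item It is at most $\eta R$ in absolute value on $G_3:=\{q:p_q^{123}(3)\ge1-\eta\}$.
\item If $t_1\ge t_2$, switch to $\{2,3\}$ and $G_{23}$ symmetrically.
\end{itemize}

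Arbitrary width and soft sources are then handled by grouping, not by a one-hot dichotomy. With $\eta=2\epsilon$, Markov's inequality applied to the sink's wrong-item mass gives routing mass $\ge1/2$ to $G_{13}$ and $<2\epsilon$ to $G_3$ on $\{1,3\}$, and the reverse on $\{1,2,3\}$. So the group log-odds $\log(U_{G_{13}}/U_{G_3})$, with $U_G=\sum_{q\in G}e^{g_q}$, drops by at least $2\log(1/(4\epsilon))$ from $\{1,3\}$ to $\{1,2,3\}$. Termwise log-sum-exp comparison bounds the same drop by $2\eta R$. This yields $R\ge\log(1/(4\epsilon))/(2\epsilon)$, and $R$ is at most the triangle's diameter times $\|w_{v^\star}\|$.
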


On this task, independence of irrelevant alternatives forces plain
MNL to have error at least $2-\sqrt{2}$, while regularity gives a
lower bound of $1/2$ for every context-independent random-utility
model. Depth $2$ permits arbitrarily small error, but its optimal norm
includes an additional $1/\epsilon$ factor. Neither arbitrary width
nor unrestricted edge biases can remove this factor; a fixed
five-node depth-$3$ circuit with zero edge biases does. The resulting
logarithmic norm scaling is optimal among circuits of any depth.
Indeed, the output is a convex combination of source distributions,
so a target probability of at least $1-\epsilon$ requires some source
to assign at least that probability. This already requires norm
$\Omega_X(\log(1/\epsilon))$. Proofs are given in
Appendix~\ref{app:power}.

\begin{figure}[htbp]
\centering
\includegraphics[width=0.9\linewidth]{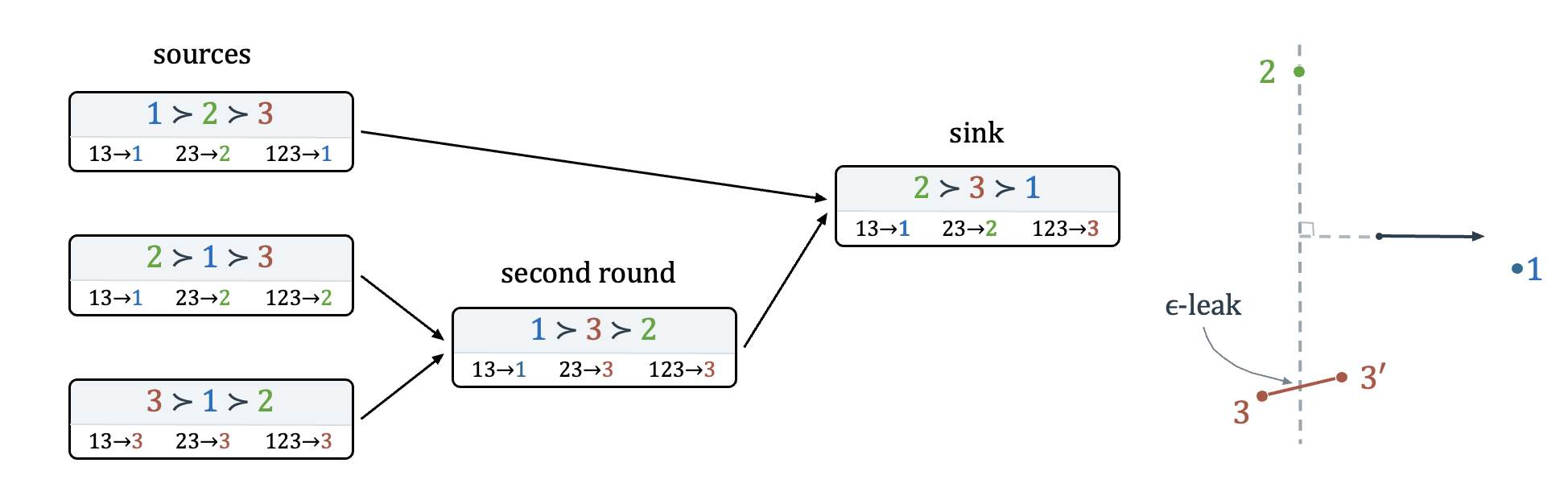}
\caption{\textbf{Left:} a hard structure realizing the task
(\plaineqref{eq:compromise-task}): the five-node depth-$3$ graph in
the hard-max limit, all edge biases zero. In this limit every summary
is itself an item, and every node reduces to a strict preference
ranking: each block shows a node's ranking (top) and the item it
reports on each menu (bottom); a node reports its highest-ranked
among the items nominated by its in-neighbors. The sink's row is
exactly the target. \textbf{Right:} a matching depth-$2$ construction for the same task
(schematic). This construction exploits an $\epsilon$-attention
leak: the source summary near $x_3$ changes by $O_X(\epsilon)$
across menus. The sink's indifference line through $x_2$ (dashed)
separates $x_3$ and $x_{3'}$, allowing the sink to favor the summary
near $x_2$ on $\{2,3\}$ and the summary near $x_3$ on
$\{1,2,3\}$. Amplifying this displacement requires an additional
$1/\epsilon$ factor in norm.}
\label{fig:five-node-hard}
\end{figure}

\paragraph{Depth on real data.}
We evaluate the effect of depth on Expedia and Trivago hotel-search
sessions at a fixed budget of $63$ nodes. For depths $2$ through $6$,
we compare trees with fully connected layered DAGs. Tree
connectivity can be obtained from the corresponding dense graph by
masking selected edges with routing biases of $-\infty$. Structures,
datasets, and the training protocol are given in
Appendix~\ref{app:protocol}.

Table~\ref{tab:depth-ladder} shows substantially larger gains from
increasing depth than from adding edges at fixed depth. Increasing
depth from $2$ to $3$ reduces test NLL by $0.0109$--$0.0160$, whereas
tree and dense circuits at the same depth differ by at most $0.0023$.
Further depth brings little improvement on Expedia, while mean test
NLL on Trivago decreases at every tested depth. We leave a theoretical
analysis of connectivity to future work. We use complete binary trees
in subsequent experiments: the depth-$6$ tree uses $503$ parameters
versus $1{,}123$ for its dense counterpart and achieves the lowest
mean test NLL on Trivago.

\begin{table}[t]
\centering\small
\begin{tabular}{c cc cc}
\toprule
& \multicolumn{2}{c}{Expedia} & \multicolumn{2}{c}{Trivago}\\
\cmidrule(lr){2-3}\cmidrule(lr){4-5}
depth & tree & dense & tree & dense\\
\midrule
2 & \ms{2.6101}{0.0008} & \ms{2.6100}{0.0011}
  & \ms{2.4540}{0.0004} & \ms{2.4539}{0.0005}\\
3 & \ms{2.5980}{0.0021} & \ms{2.5991}{0.0015}
  & \ms{2.4389}{0.0015} & \ms{2.4379}{0.0021}\\
4 & \ms{2.5973}{0.0022} & \ms{2.5983}{0.0017}
  & \ms{2.4357}{0.0020} & \ms{2.4334}{0.0019}\\
5 & \ms{2.5975}{0.0027} & \ms{2.5990}{0.0012}
  & \ms{2.4334}{0.0017} & \ms{2.4329}{0.0015}\\
6 & \ms{2.5988}{0.0014} & \ms{2.5991}{0.0010}
  & \ms{2.4308}{0.0019} & \ms{2.4323}{0.0016}\\
\bottomrule
\end{tabular}
\caption{Test NLL, mean {\scriptsize(sd)} over $10$ restarts, of
$63$-node circuits of depth $2$--$6$: trees vs.\ fully-connected
layered DAGs (dense); at depth $2$ the two coincide. Trees share one
parameter count ($503$); dense graphs pay one bias per edge (up to
$1153$). The depth-$6$ tree is a complete binary tree.}
\label{tab:depth-ladder}
\end{table}

\subsection{Universality}
\label{sec:universality}

We now characterize when consideration circuits can approximate
arbitrary deterministic choice tables. Two properties of linear MNL
units constrain universality at every depth. First, if an item's
feature lies in the convex hull of the other offered features, no
source MNL can select it with probability approaching one. Circuit
composition preserves this restriction. Second, softmax is invariant
to a common score shift, so translated menus receive identical
probabilities under the corresponding item translation. Their target
choices therefore cannot be assigned independently. An instance, i.e., a
catalog $x_1,\ldots,x_N\in\R^d$ and an observed menu family
$\mathcal{F}$, is \emph{admissible} if \textnormal{(E)} every
offered feature lies outside the convex hull of the other offered
features, and \textnormal{(T)} no two distinct non-singleton menus
are exact translates. These conditions are necessary at every depth.
The next theorem shows that they already suffice at depth $3$, with
a topology depending only on $N$.

\begin{theorem}[Universality at depth $3$]
\label{thm:universality3}
For every $N$ there is one topology $G^{(3)}_N$ of depth at most $3$
with at most $2^{N+1}$ nodes, depending only on $N$, such that on
every admissible instance, every deterministic choice table (an
arbitrary assignment of one target item to each observed menu) is
achieved to every accuracy $0<\epsilon<1$. For a fixed instance, a sufficient norm bound is
$\max_v\|w_v\|\le C_X\,\epsilon^{-\kappa}\log(1/\epsilon)$, with
$C_X,\kappa<\infty$ depending only on the geometry.
\end{theorem}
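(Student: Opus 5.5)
The plan is to build $G^{(3)}_N$ as three layers. Layer $1$ has $2^N$ sources $s_S$, one for each subset $S\subseteq[N]$. Layer $2$ has at most $2^N-1$ interior nodes $m_S$, one for each non-singleton subset, each receiving edges from every source. Layer $3$ is a single sink fed by all the $m_S$. This gives at most $2^{N+1}$ nodes, and the topology depends only on $N$. Nodes indexed by subsets outside $\mathcal F$ are disabled by edge biases of $-\infty$, or by large negative finite biases whose leakage is absorbed into $\epsilon$. The intended hard-max behaviour is as follows: on menu $S\in\mathcal F$, node $m_S$ reports the target $t(S)$, and the sink selects $m_S$. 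Singleton menus are trivial, since every node outputs the unique item.

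\textbf{Step 1 (sources nominate targets).} By (E), $x_{t(S)}$ is an exposed point of $\mathrm{conv}\{x_j:j\in S\}$. So there is a unit vector $a_S$ with margin $\gamma_S>0$ over the other items of $S$. Setting $w_{s_S}=\lambda a_S$ with $\lambda\asymp\gamma_S^{-1}\log(1/\epsilon)$ makes $p(s_S)$ put mass at least $1-\epsilon$ on $t(S)$ when the menu is $S$. On any other menu $S'$, $s_S$ reports some extreme point of $S'$. Up to $O(\epsilon)$, every source summary is therefore a catalog feature lying in the current menu. I will use this to reduce all later analysis to hard-max combinatorics plus perturbation bounds, using Lemma~\ref{lem:consistency} so that summaries are always $p(v)^\top\mathbf X$.

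\textbf{Step 2 (layer $2$ routes within a menu).} Node $m_S$ chooses among source summaries by a linear score $w_{m_S}^\top x(u)+b_{u m_S}$. I will choose its edge biases to heavily favour $s_S$, so that on menu $S$ it reports $x_{t(S)}$.

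\textbf{Step 3 (the sink identifies the menu).} This is the main obstacle. The sink must prefer $m_S$ exactly on menu $S$, using only the vector of summaries. To make this possible, I will arrange that on a foreign menu $S'\neq S$ the summary $x(m_S)$ is pushed in a direction the sink penalizes. The idea is that $m_S$ also watches the "probe" sources: $s_{\{j\}}$-type sources with generic directions, which report the extreme point of the current menu in that direction. With $w_{m_S}$ and biases tuned, $m_S$ switches from $s_S$ to a probe whenever some feature outside $S$, or a differently positioned configuration, is present.

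Condition (T) is exactly what guarantees the menu can be recognised from these reports. Score differences are invariant to common translations, but two distinct non-singleton menus that are not translates differ in some difference vector $x_i-x_j$. That difference is detectable by comparing two probe summaries through a linear functional. Separating all pairs $(S,S')$ yields finitely many strict linear inequalities with a geometric margin $\delta_X>0$.

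\textbf{Step 4 (quantitative error and norm bound).} Soft errors of size $\epsilon'$ at the sources move summaries by $O_X(\epsilon')$. The downstream comparisons have margins $\delta_X$, so downstream nodes need norm $O_X(\delta_X^{-1}\log(1/\epsilon))$, provided source errors satisfy $\epsilon'\lesssim\delta_X$. Where a comparison has to resolve displacements that are themselves of order $\epsilon'$, as in the depth-$2$ leak mechanism of Theorem~\ref{thm:separation}, I pay a polynomial factor instead. Choosing $\epsilon'=\epsilon^{\kappa'}$ and propagating through the fixed number of layers gives $\max_v\|w_v\|\le C_X\epsilon^{-\kappa}\log(1/\epsilon)$.

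I expect the delicate part to be Step 3. There I would first attempt to encode menu identity purely through which probe sources' extreme points coincide or differ. If that cannot separate some pair, I would fall back on the $\epsilon$-leak displacement argument, which is what forces the exponent $\kappa$.
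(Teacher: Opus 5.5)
Your Step~3 is where the theorem is actually proved, and as written it does not go through. You propose to recognise each menu from hard reports. Up to $O(\epsilon')$, every source summary is a catalog point, and each node is an argmax of linear scores with menu-independent biases. You then assert that (T) yields finitely many strict inequalities with a fixed margin $\delta_X>0$. If that were true, your own Step~4 (equivalently Lemma~\ref{lem:app-softening}) would give $\max_v\|w_v\|=O_X(\log(1/\epsilon))$ at depth~$3$ on \emph{every} admissible instance. But Proposition~\ref{prop:depth3-superlog} exhibits an admissible $60$-point instance on which every fixed finite depth-$3$ topology needs $\omega(\log(1/\epsilon))$ taste norm, even with unrestricted biases. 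That includes yours, since it depends only on $N$. Concretely, Lemma~\ref{lem:app-hidden-persistence} shows that a depth-$2$ node with $O(\log(1/\epsilon))$ tastes that reports $x$ almost purely on $A$ still reports $x$ almost purely on one of $B_\ell,B_r,U$. So your $m_A$ cannot ``switch to a probe'' on all foreign menus. The heuristic that non-translates ``differ in some difference vector $x_i-x_j$'' conflates pairwise distinguishability with what a node needs: one taste and fixed biases that isolate its menu from all others at once. What (T) actually gives you is a property of the full Gibbs laws (Lemma~\ref{lem:app-translation-characterization}), not of argmax reports.

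So the proof rests entirely on your unspecified fallback, and three ingredients are missing; they are the core of the paper's argument. First, you need a scalar statistic that provably takes a different value on $A$ than on every other observed menu. The paper uses the gain $g(S)=v^\top\bigl(\mu_S(Ta_{A,c(A)})-\mu_S(-Tv+\rho)\bigr)$ of a foreground source over a reference source concentrating on the $v$-worst item. Its separation comes from (T) through Lemma~\ref{lem:app-translation-characterization}: Gibbs means differ by a constant only for exact translates, by linear independence of exponentials. It also needs a generic probe $v$ (Lemma~\ref{lem:app-anchored-probe}). Second, you need a quantitative lower bound on the leak, $|g(A)-g(S)|\ge c_0e^{-\kappa_0T}$. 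This is Lemma~\ref{lem:app-exponential-noncollision}: the difference is an exponential polynomial in $T$, and the offset $\rho$ must be chosen generically so that a leading coefficient does not vanish. Without this bound, ``I pay a polynomial factor'' is unjustified; the exponent $\kappa$ in the theorem is inherited from $\kappa_0$. Third, you need a mechanism that isolates the single value $g(A)$ among finitely many. The paper's slot uses two one-sided cuts, with $M_A,K\asymp Te^{\kappa_0T}$:
\begin{itemize}
\item a hidden node of taste $-M_Av$ and foreground bias $M_A\tau_A$, which switches to the reference source for gains above $g(A)$;
\item a sink of taste $Kv$ with bias $K(-g(A)+\eta_A)$ against a relay of the reference, which rejects gains below $g(A)$.
\end{itemize}
Your three-layer shape could host such a band-pass filter. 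But the proposal does not identify the statistic, prove that it separates menus, or bound the separation from below.
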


Depth $3$ achieves universality, but logarithmic norm does not suffice
in general. There is an admissible planar instance ($60$ points,
$48$ menus) and a deterministic choice table for which \emph{every}
fixed finite depth-$3$ topology requires
$\omega(\log(1/\epsilon))$ taste-vector norm, even with arbitrary
menu-independent edge biases. This limitation therefore applies
beyond our construction. For affinely independent catalogs, however,
depth $3$ already achieves logarithmic norm.

\begin{proposition}[Affinely independent catalogs]
\label{prop:onehot}
For every $N$ there is one topology of depth at most $3$ with
$N2^{N-1}-N+2$ nodes, depending only on $N$, such that for every
affinely independent
catalog (one-hot items $x_i=e_i$ being the canonical case; such
catalogs are automatically admissible), every deterministic choice
table on every menu family is achieved to every accuracy $0<\epsilon<1$ with
$\max_v\|w_v\|\le C_X\log(1/\epsilon)$.
\end{proposition}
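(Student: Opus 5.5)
The plan is to build a fixed hard-max circuit first and then soften it, using affine independence to keep the norm at $O_X(\log(1/\epsilon))$.

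\emph{Step 1: affine independence turns every ranking into a taste vector with controlled norm.} Because the catalog is affinely independent, every target score vector $(s_1,\dots,s_N)$ is realized up to a common additive constant by some $w$ with $w^\top x_j = s_j + c$. One takes the affine functional $\ell(x)=w^\top x+c$ interpolating the $s_j$. The common constant cancels in every MNL unit, so the representation is exact. The map $s\mapsto w$ is linear with operator norm $C_X$, which depends only on the catalog. Consequently any strict ranking of the $N$ items, with consecutive score gaps equal to $\lambda$, is realizable with $\|w\|\le C_X N\lambda$.

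\emph{Step 2: extend scores from items to summaries.} Every summary is a convex combination $x(v)=p(v)^\top\mathbf X$ by Lemma~\ref{lem:consistency}. Hence $w^\top x(v)=\sum_j p(v)_j\, w^\top x_j$, which is affine in $p(v)$. If $p(v)$ puts mass at least $1-\delta$ on item $j$, its score is within $2\delta\max_k|w^\top x_k|=O(\delta N\lambda)$ of $w^\top x_j$. This stability is the essential difference from the depth-$2$ mechanism behind Theorem~\ref{thm:separation}, where the sink must amplify an $O(\epsilon)$ displacement. Here only gross differences between nearly one-hot summaries are used.

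\emph{Step 3: a combinatorial hard-max design.} In the hard-max limit every node reduces to a strict preference ranking, as in Figure~\ref{fig:five-node-hard}. Each node reports its top-ranked item among the items nominated by its in-neighbours. The graph and the rankings must be chosen so that the sink reports the target $t(M)$ for every menu $M$.

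The intended layout of the nodes is as follows.
\begin{itemize}
\item \textbf{Layer 1.} Sources rank all items. Their role is to nominate, for each menu, items that witness which elements are present.
\item \textbf{Layer 2.} There is one node for each pair $(i,S)$ with $i\in S\subseteq[N]$ and $|S|\ge 2$. Node $(i,S)$ is wired so that it nominates $i$ exactly when the menu ``looks like'' $S$, and otherwise nominates an item the sink ranks low. There are $N2^{N-1}-N$ such nodes.
\item \textbf{Extra nodes.} Two further nodes, the sink and one auxiliary node, handle singleton menus and serve as a default channel. Together with layer 2 this gives the stated count $N2^{N-1}-N+2$.
\item \textbf{Sink.} The sink ranks items so that, on menu $M$, the nomination of $(t(M),M)$ beats every other nomination it can receive on $M$.
\end{itemize}
Only the rankings depend on the choice table. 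The topology depends only on $N$.

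\emph{Step 4: softening.} Set all gaps to $\lambda=K\log(1/\epsilon)$. By induction over depth, each unit then places mass at least $1-\epsilon'$ on its hard-max nomination, with $\epsilon'=O_X(\epsilon)$. This uses Step 2 and the fact that each unit has bounded fan-in, at most $2^N$, so the denominator loses only a factor $2^N$. By Step 1 this gives $\max_v\|w_v\|\le C_X\log(1/\epsilon)$. Finally, rescale $\epsilon$ by a constant.

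The main obstacle is Step 3: the hard-max logic must resolve every menu, including supersets and subsets of each $S$, with only three layers. In particular, node $(i,S)$ must not fire on a menu $M\ne S$ in a way that defeats the target at the sink. I would first try to order the sink's ranking by nested inclusion: larger $|S|$ receives higher priority. Each $(i,S)$ node is made to fire only when all of $S$ is present. Then the nomination from the largest present $S=M$ wins, and it carries $t(M)$.
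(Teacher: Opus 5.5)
Your Steps 1, 2 and 4 are sound. They are the analytic tools the paper uses: Lemma~\ref{lem:app-affine-utilities}, and the softening Lemma~\ref{lem:app-softening}, which needs a strict margin between a maximal predecessor and every predecessor reporting a \emph{different} item. The gap is Step 3. It is the entire combinatorial content of the proposition, you leave it open, and the nested-inclusion fallback does not work. A node that does not fire still reports some offered item. The sink, which has one item ranking plus one fixed bias per edge, scores that report exactly like a genuine nomination. Take $x_i=e_i$ and the compromise table $\{1,3\}\mapsto1$, $\{2,3\}\mapsto2$, $\{1,2,3\}\mapsto3$. On $\{1,2,3\}$ your plan needs $(3,\{1,2,3\})$, reporting $3$, to outscore $(1,\{1,3\})$, reporting $1$. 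On $\{1,3\}$ the latter is the intended winner with the identical score, so $(3,\{1,2,3\})$ must report $1$ there, and likewise $2$ on $\{2,3\}$. That node would then realize the compromise table by itself, as the top of a depth-$2$ star with strict hard margins. Softening would then give a depth-$2$ circuit with logarithmic norm, contradicting Theorem~\ref{thm:separation}. Your node count is also inconsistent. The $N2^{N-1}-N$ layer-$2$ nodes plus the sink and one auxiliary node already exhaust the budget, leaving no room for the layer-$1$ sources the layer-$2$ nodes need. If the auxiliary node were the only source, every layer-$2$ node would be a unary relay.

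The missing idea is a default-plus-exception design in which every module that does not fire echoes the default item, so it can never win with a wrong report. The paper fixes a reference order $\tau$ and the baseline rule $r(S)$, the $\tau$-best offered item. This rule is computed by one shared source $R_\rho$ wired directly to the sink with bias $0$. For every non-singleton abstract menu $A$ it preallocates one slot: an interior node fed by $R_\rho$ and $|A|-1$ dedicated sources. This, not one node per potential target, is where $2+\sum_A|A|=N2^{N-1}-N+2$ comes from. The slot is parameterized only after $c(A)$ is known. When $c(A)=i\ne r(A)$, the slot is the star of Lemma~\ref{lem:app-terminal-primitive}. It has a main source ranking outsiders first and then $i$, one gate source per item of $A\setminus\{i,r(A)\}$ whose report outscores the main source whenever a required item is missing, and $R_\rho$ with a large negative bias. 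Its hard output is $i$ exactly on the prefix chain $P_1\subset\cdots\subset P_q=A$, and $r(S)$ on every other menu. At the sink, item utilities $v_j=-2\tau(j)$ and slot bias $\beta_A=v_{r(A)}+1-v_{c(A)}\le-1$ do three things. An echoing slot scores at most $v_{r(S)}-1$ with the harmless item $r(S)$. The slot for $A$ scores $v_{r(A)}+1$ on $A$, beating $R_\rho$ and every other slot by at least one. On a false prefix $P_j$, $j<q$, the slot loses to $R_\rho$ because $v_{r(P_j)}\ge v_{r(A)}+2$. All hard margins are at least one, so your Step 4 then gives $C_X\log(1/\epsilon)$ for small $\epsilon$. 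Covering all $0<\epsilon<1$ additionally needs the paper's zero-taste and compact-interval argument, which rescaling $\epsilon$ does not provide.
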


One additional layer achieves logarithmic norm on every admissible instance:

\begin{theorem}[Universality at depth $4$ with optimal norm]
\label{thm:universality4}
Fix $N$ and an abstract menu family
$\mathcal{F}\subseteq 2^{[N]}$ with $|\mathcal{F}|=m$. There is one
topology $G^{(4)}_{\mathcal{F}}$ of depth at most $4$ with
$m^2+m+3$ nodes, depending only on $\mathcal{F}$, independently of the features and
target table, such that for every
catalog $x_1,\ldots,x_N$ making the instance admissible, every
deterministic choice table on $\mathcal{F}$ is achieved to every accuracy
$0<\epsilon<1$ with $\max_v\|w_v\|\le C_X\log(1/\epsilon)$; for any
family containing a non-singleton menu this is the optimal order,
by the single-menu floor of \S\ref{sec:separation}.
\end{theorem}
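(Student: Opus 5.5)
The plan is to give an explicit ``identify the menu, then select its target'' construction. Every decision in it is made with a margin bounded below by a constant $\gamma_X>0$ that depends only on the (finite) instance. Then taste vectors of norm $\lambda=C_X\log(1/\epsilon)$ turn each unit into an $\epsilon$-accurate hard-max. Lower-level errors propagate only additively, because each summary is displaced by $O_X(\epsilon)$ and this moves scores by $O_X(\epsilon\lambda)$, which is much smaller than $\gamma_X\lambda$. Optimality is immediate from the single-menu floor of \S\ref{sec:separation}: the output is a convex combination of source distributions, so some source must put mass $1-\epsilon$ on the target of a non-singleton menu, and this needs norm $\Omega_X(\log(1/\epsilon))$. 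The work is therefore the upper bound. The node count $m^2+m+3$ suggests the layout: $m$ target-selecting sources, $m^2$ pairwise menu-comparison nodes, and three auxiliary nodes, one of which is the sink.

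\textbf{Layer 1 (selectors).} For each $S\in\mathcal F$, condition (E) makes $x_{t(S)}$ an exposed point of $\mathrm{conv}\{x_j:j\in S\}$. I will pick $u_S$ with $u_S^\top x_{t(S)}\ge u_S^\top x_j+\gamma_X$ for $j\in S\setminus\{t(S)\}$, perturbed generically so that $u_S$ has a unique maximizer on every menu. The source $a_S$ with taste $\lambda u_S$ then reports a near-vertex $x(a_S)\approx x_{\sigma_S(X)}$ on every menu $X$, with $\sigma_S(S)=t(S)$. Two auxiliary sources serve as references: for instance a source with a generic direction $r$ that reports the $r$-maximal vertex, and possibly the uniform source ($w=0$) reporting the centroid.

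\textbf{Layers 2--3 (menu identification).} Softmax is translation invariant, so a unit choosing between two predecessors $u,v$ sees only $w^\top(x(u)-x(v))$. These differences are translation-invariant descriptors of the offered menu, in the same way as the vertex differences $x(a_S)-x(a_{S'})$. Condition (T) says that distinct non-singleton menus are not translates. Combined with genericity of the directions, this should yield, for each ordered pair $(S,S')$, a taste vector $v_{SS'}$ and predecessors such that the node $n_{SS'}$ routes to $a_S$ by margin $\gamma_X$ when the menu is $S$, and routes away (to a reference) when the menu is $S'$. Singletons are trivial and are handled by the same node. A layer-3 node $g_S$ then aggregates the nodes $n_{SS'}$, $S'\ne S$. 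The sink (depth 4) chooses among the $g_S$ using the one remaining auxiliary direction and zero or suitable edge biases, arranged so that on menu $S$ the summary of $g_S$ equals $x_{t(S)}$ while the other $g_{S'}$ report ``rejected'' summaries that the sink's direction ranks strictly lower. The resulting output is $\approx t(S)$.

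\textbf{Main obstacle.} I expect the hard step to be the pairwise separation gadget with a \emph{constant} margin, together with making a single sink direction rank the correct $g_S$ first simultaneously on all menus. The reason is that these units act on summaries, not labels, and the rejected branches must be steered to points that lose uniformly. My first attempt would choose all directions $u_S,r,v_{SS'}$ generically from a finite set. Uniform gaps then follow from finiteness of the instance, and the comparison for $(S,S')$ is made by testing a vertex-difference vector that (T) guarantees differs between $S$ and $S'$. The three auxiliary nodes would be used to supply a common ``loser'' summary that every rejecting node falls back to. If a single comparison cannot separate an ordered pair, I would spend the $m^2$ budget on multiplexing instead: node $n_{SS'}$ selects between $a_S$ and $a_{S'}$. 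This still fits the count, and the error analysis above is unchanged.
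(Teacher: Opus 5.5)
Your optimality argument matches the paper's. The upper-bound construction, however, has a genuine gap at the pair-separation step you flagged. You propose to separate $S$ from $S'$ by ``a vertex-difference vector that (T) guarantees differs.'' But (T) only says that the \emph{full} point sets are not translates. A source with taste $\lambda u$ has its summary within $O(e^{-c\lambda})$ of one reported vertex, so the circuit only sees the few vertices its sources report.

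Suppose $S'=S\setminus\{k\}$ and no source reports $k$ on $S$. Then every source summary moves by $O(e^{-c\lambda})$ between $S$ and $S'$. Edge biases cancel in this comparison and all tastes are $O(\lambda)$, so every routing weight, and finally the sink output, moves by an exponentially small amount. This is the persistence mechanism behind the paper's superlogarithmic depth-$3$ lower bound, and it rules out any margin $\gamma_X\lambda$ whenever $c(S)\ne c(S')$.

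Your layout runs into this: it has only $m+2$ sources, and the selector directions are pinned by the table. Take a generic strictly convex $n$-gon $S$ with vertices labelled cyclically, and the admissible family $\mathcal F=\{S\}\cup\{S\setminus\{k\}:k\in S\}$. Set $c(S)=3$, $c(S\setminus\{1\})=2$, and $c(S\setminus\{k\})=1$ otherwise. For $k\notin\{1,2,n\}$, the exposing cone of vertex $1$ in $S\setminus\{k\}$ equals its cone in $S$, so that selector also reports $1$ on $S$. If both references are $\lambda$-scaled, as your error analysis assumes, then only vertices among $1,2,3,n$ and the two reference reports are ever reported on $S$. For $n\ge7$, some $S\setminus\{k\}$ with target $1\ne3$ is therefore indistinguishable from $S$. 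Multiplexing $a_S$ against $a_{S'}$ uses the same vertex information and does not help.

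The missing idea is a reference source $R$ whose taste $r_0$ is \emph{not} scaled with $T$, so that its Gibbs mean $\mu_S(r_0)$ depends on every offered item. Fix a direction $v$ strictly ordering every observed menu, with worst items $y_S$ and minimal gap $\Gamma$. The paper's Gibbs-mean lemma (real analyticity plus linear independence of exponentials with distinct exponents) shows that $\mu_A(r)-\mu_S(r)$ is constant on an open set only for exact translates. With (T), a generic $r_0$ in a small ball around $-Kv$ does two things at once:
\begin{itemize}
\item it avoids the finitely many collisions $x_{c(A)}-\mu_A(r_0)=x_{q(S)}-\mu_S(r_0)$, where $q(S)$ is the hard trace of the selector $Q_A$ on $S$;
\item it satisfies $0\le v^\top(\mu_S(r_0)-x_{y_S})<\Gamma/8$ on every menu.
\end{itemize}
Write $d_A,d_S$ for the two sides of the collision equation. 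A gate on $\{Q_A,R\}$ with taste $T(d_A-d_S)$ and a suitable bias then has logit margin $\tfrac T2\|d_A-d_S\|_2^2$ on both rows. This is the constant margin you wanted. It is also exactly what removes the $\epsilon^{-\kappa}$ factor of the depth-$3$ construction, where both compared sources are $T$-scaled and the gains differ only by $e^{-\kappa_0T}$.

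Your uniform source is the right kind of object, but $w=0$ is not generic: nothing in (E)--(T) excludes $x_{t(S)}-\bar x_S=x_{\sigma_S(S')}-\bar x_{S'}$. Its centroid is also not near the $v$-worst point, which your two remaining steps need.

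First, a single sink direction cannot place every target strictly above the rejected summaries. For the admissible cyclic table $\{a,b\}\mapsto a$, $\{b,c\}\mapsto b$, $\{c,a\}\mapsto c$, some target is $v$-worst. The paper therefore builds modules only for rows with $c(A)\ne y_A$. A separate baseline source $P$ of taste $-Tv$, with sink bias $T\Gamma/2$, wins on all other rows.

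Second, your aggregator becomes a soft-min node $D_A$ of taste $-Tv$ over $Q_A$ and the gates $G_{A,S}$, $S\ne A$. On an off-row, one gate is near $R$, so the Gibbs soft-minimum bound gives $v^\top\mu_{D_A}^S\le v^\top\mu_S(r_0)+\log m/T+o(1)$. Hence $D_A$ loses to $P$ by at least $T\Gamma/4$, while on its own row it beats $P$ by at least $T\Gamma/4$.
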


Together, these results give necessary and sufficient conditions for
universality over deterministic choice tables. Depth $3$ suffices for
approximation, while depth $4$ achieves the optimal logarithmic norm
dependence on $1/\epsilon$, matching the order required to select a
single item from a single non-singleton menu. Proofs are deferred to
Appendix~\ref{app:power}.

\paragraph{Comparison with established choice models.}
Built entirely from MNL units, consideration circuits provide a
direct comparison with established choice models. Many of these
models assign parameters to item identities and therefore assume a
fixed item pool. We evaluate on four standard fixed-pool benchmarks:
SFwork and SFshop, which use item identities, and Swissmetro and
ModeCanada, which include numeric attributes. We use two complete
binary trees, $\mathrm{Tree}_{15}$ and $\mathrm{Tree}_{63}$.
Item identities allow per-item source biases in our circuits
(\S\ref{sec:model}); on the featured datasets, every model includes
item intercepts.

Table~\ref{tab:fixedpool} reports test NLL over $10$ restarts. The
circuits achieve the lowest mean NLL on all four datasets:
$\mathrm{Tree}_{15}$ on SFwork and SFshop, and
$\mathrm{Tree}_{63}$ on Swissmetro and ModeCanada. The largest gains
occur on the featured datasets, where $\mathrm{Tree}_{63}$ improves
over the strongest baseline by $0.0251$ and $0.0253$ NLL,
respectively. The circuits use $98$--$566$ parameters, compared with
$6$--$288$ for the baselines (Appendix~\ref{app:fixedpool-params}).

\begin{table}[htbp]
\centering\small
\setlength{\tabcolsep}{6pt}
\begin{tabular}{l cc cc}
\toprule
& \multicolumn{2}{c}{featureless} & \multicolumn{2}{c}{featured}\\
\cmidrule(lr){2-3}\cmidrule(lr){4-5}
model & SFwork & SFshop & Swissmetro & ModeCanada\\
\midrule
MNL            & \ms{0.8299}{.0002} & \ms{1.5887}{.0007} & \ms{0.8085}{.0038} & \ms{0.6304}{.0041}\\
NestedLogit    & \ms{0.8321}{.0011} & \ms{1.5898}{.0010} & \ms{0.8079}{.0038} & \ms{0.6268}{.0010}\\
LC-MNL         & \ms{0.8262}{.0005} & \ms{1.5669}{.0010} & \ms{0.7518}{.0083} & \ms{0.5929}{.0034}\\
MMNL           & \ms{0.8291}{.0010} & \ms{1.5866}{.0009} & \ms{0.7696}{.0004} & \ms{0.6212}{.0011}\\
Exponomial     & \ms{0.8315}{.0007} & \ms{1.5841}{.0014} & \ms{0.7850}{.0006} & \ms{0.6295}{.0014}\\
ConsiderMNL    & \ms{0.8268}{.0018} & \ms{1.5873}{.0021} & \ms{0.7786}{.0235} & \ms{0.5946}{.0099}\\
CDM-id         & \ms{0.8239}{.0016} & \ms{1.5708}{.0028} & \ms{0.7875}{.0033} & \ms{0.6221}{.0017}\\
LowRankCDM     & \ms{0.8211}{.0015} & \ms{1.5648}{.0017} & \ms{0.7573}{.0023} & \ms{0.5843}{.0027}\\
PCMC           & \ms{0.8243}{.0007} & \ms{1.5754}{.0006} & \ms{0.8615}{.0016} & \ms{0.9122}{.0006}\\
MCCM           & \ms{0.8233}{.0004} & \ms{1.5688}{.0003} & \ms{0.8564}{.0019} & \ms{0.9177}{.0010}\\
\midrule
$\mathrm{Tree}_{15}$ & \ms{\bm{0.8209}}{.0014} & \ms{\bm{1.5644}}{.0025} & \ms{0.7413}{.0087} & \ms{0.5719}{.0093}\\
$\mathrm{Tree}_{63}$ & \ms{0.8215}{.0012} & \ms{1.5661}{.0023} & \ms{\bm{0.7267}}{.0066} & \ms{\bm{0.5590}}{.0063}\\
\bottomrule
\end{tabular}
\caption{Test NLL, mean {\scriptsize(sd)} over $10$ restarts, on four
fixed-pool benchmarks; best per column in bold.
$\mathrm{Tree}_{15}$/$\mathrm{Tree}_{63}$ are complete binary trees
with $15$/$63$ nodes; on the featured pair every model carries item
intercepts. LC-MNL and LowRankCDM show the best size per dataset
($K\in\{2,3,8,16,32\}$, rank $\in\{1,2,3,4\}$); full grids in
Appendix~\ref{app:protocol}.}
\label{tab:fixedpool}
\end{table}

\section{Numerical Experiments}\label{sec:six}

We evaluate consideration circuits in two roles: as compact choice
models compared with deep architectures (\S\ref{sec:vs-deep}), and
as output heads that integrate with learned feature representations
(\S\ref{sec:head}).

\subsection{Comparison with Deep Choice Architectures}
\label{sec:vs-deep}

We compare $\mathrm{Tree}_{15}$ and $\mathrm{Tree}_{63}$ with five
deep choice architectures on Expedia and Trivago, using the temporal
splits of \S\ref{sec:separation} and the protocols in
Appendix~\ref{app:protocol}. MNL and its latent-class extension
LC-MNL serve as classical baselines. The circuit architectures
remain unchanged. Table~\ref{tab:deep} reports test NLL and
parameter counts.

\begin{table}[H]
\centering\small
\setlength{\tabcolsep}{3.5pt}
\begin{tabular}{l cc ccccc cc}
\toprule
& \multicolumn{2}{c}{classical} & \multicolumn{5}{c}{deep}
& \multicolumn{2}{c}{ours}\\
\cmidrule(lr){2-3}\cmidrule(lr){4-8}\cmidrule(lr){9-10}
& MNL & LC-MNL & DeepMNL & RUMnet & DeepSets & DeepHalo & TCNet
& $\mathrm{Tree}_{15}$ & $\mathrm{Tree}_{63}$\\
\midrule
params & 8 & 64 & 4.7k & 4.5k & 3.7k & 201k & 20k/37k & 119 & 503\\
\midrule
Expedia & \msv{2.6574}{.0008} & \msv{2.6153}{.0006}
        & \msv{3.0804}{.5486} & \msv{2.6666}{.0412}
        & \msv{2.9113}{.2282} & \msv{2.9592}{.2421}
        & \msv{2.6911}{.0157}
        & \msv{2.6009}{.0026} & \msv{\bm{2.5983}}{.0018}\\
\addlinespace
Trivago & \msv{2.6400}{.0006} & \msv{2.4631}{.0006}
        & \msv{2.4746}{.0006} & \msv{2.4441}{.0013}
        & \msv{2.4179}{.0032} & \msv{2.4140}{.0027}
        & \msv{\bm{2.4101}}{.0033}
        & \msv{2.4417}{.0015} & \msv{2.4331}{.0023}\\
\bottomrule
\end{tabular}
\caption{Test NLL, mean {\scriptsize(sd)}, on the two temporally
split datasets; best per dataset in bold. MNL, LC-MNL, DeepSets,
and our circuits use $10$ restarts; the other deep baselines use $3$
per configuration, selected by mean validation NLL. TCNet's selected
size differs per dataset.}
\label{tab:deep}
\end{table}

On Trivago, TCNet achieves the lowest test NLL, improving on
$\mathrm{Tree}_{63}$ by $0.023$ with $74$ times as many parameters.
On Expedia, every deep baseline has higher mean test NLL than MNL.

The distribution diagnostics distinguish the two settings: a
random-forest two-sample classifier achieves AUC $0.84$ on Expedia and $0.56$ on
Trivago, indicating stronger detectable temporal shift on Expedia
(Appendix~\ref{app:protocol}). Parameter count alone does not explain
the performance gap: circuits scaled to $44$k parameters maintain
similar test NLL, while TCNet and DeepHalo remain worse than MNL
when reduced to approximately $1.5$k--$1.6$k parameters
(Appendix~\ref{app:scale}). The circuit forms its intermediate
representations from the current menu's own features; this pattern is consistent with staged choice providing a useful
inductive bias under temporal shift. The trees are the only compared models to outperform both
classical baselines on both datasets, and $\mathrm{Tree}_{63}$
achieves the best result on Expedia with just $503$ parameters.

\subsection{The Consideration Circuit as an Output Head}
\label{sec:head}

Consideration circuits also integrate with deep choice models as
output heads. Section~\ref{sec:universality} shows that their
approximation limits depend on feature geometry, motivating an
encoder that learns the representations supplied to the circuit.
A feed-forward network, set encoder, or transformer produces a
feature vector for each item. We retain this encoder architecture
and replace its linear MNL readout with a consideration circuit,
training the encoder and circuit jointly. A schematic of this
encoder--circuit architecture is provided in
Appendix~\ref{app:head-matrix} (Figure~\ref{fig:pipeline}).
This composition separates feature processing from staged choice:
the encoder learns how item attributes are represented, while the
circuit models how those representations are combined through
successive rounds of consideration.

Table~\ref{tab:head} compares six encoders, including the identity,
with their native linear readout and with $\mathrm{Tree}_{63}$.
The circuit head lowers mean test NLL for every encoder on both
datasets. On Trivago, it further improves the strongest encoder,
TCNet, by $0.008$ to $2.4013$, the lowest Trivago NLL reported in
this paper. On Expedia, DeepHalo improves by $0.192$, while its
restart standard deviation decreases from $0.174$ to $0.012$,
bringing its mean test NLL below the MNL baseline.

We compare against
an MoE with the same number of leaves and an HME on the same tree.
These gated readouts perform similarly on Trivago, but
$\mathrm{Tree}_{63}$ achieves lower mean NLL than both for five of
the six encoders on Expedia. On raw features, their NLLs are
$2.699$ and $2.702$, compared with $2.598$ for the circuit; the full
comparison appears in Appendix~\ref{app:protocol}.
\ccmodel{} generalizes the MNL readout: it retains the same interface from item features to choice probabilities, includes a single MNL as a special case, and allows multiple stages of consideration.

\begin{table}[htbp]
\centering\small
\begin{tabular}{l cc cc}
\toprule
& \multicolumn{2}{c}{Expedia} & \multicolumn{2}{c}{Trivago}\\
\cmidrule(lr){2-3}\cmidrule(lr){4-5}
encoder & alone & $+\mathrm{Tree}_{63}$ & alone & $+\mathrm{Tree}_{63}$\\
\midrule
none       & \ms{2.6574}{.0008} & \ms{\bm{2.5983}}{.0018}
           & \ms{2.6400}{.0006} & \ms{2.4331}{.0023}\\
MLP(16)    & \ms{2.6798}{.0398} & \ms{2.6469}{.0152}
           & \ms{2.4809}{.0007} & \ms{2.4185}{.0027}\\
MLP(64,64) & \ms{2.7006}{.0401} & \ms{2.6621}{.0181}
           & \ms{2.4751}{.0010} & \ms{2.4148}{.0016}\\
DeepSets   & \ms{2.9113}{.2282} & \ms{2.8957}{.1787}
           & \ms{2.4179}{.0032} & \ms{2.4114}{.0118}\\
DeepHalo   & \ms{2.8272}{.1744} & \ms{2.6351}{.0117}
           & \ms{2.4151}{.0027} & \ms{2.4133}{.0024}\\
TCNet      & \ms{2.7377}{.0686} & \ms{2.6611}{.0322}
           & \ms{2.4093}{.0031} & \ms{\bm{2.4013}}{.0027}\\
\bottomrule
\end{tabular}
\caption{Encoders with their native linear readout (alone) vs.\ the
circuit head $\mathrm{Tree}_{63}$; test NLL, mean
{\scriptsize(sd)} over $10$ restarts; best per dataset in bold. On
Expedia the TCNet encoder uses its one-layer configuration; encoder
details in Appendix~\ref{app:protocol}.}
\label{tab:head}
\end{table}

\section{Conclusion}
\label{sec:conclusion}

Consideration circuits extend consider-then-choose to a differentiable, multistage process built from MNL units. We establish a sharp norm separation between depths 2 and 3, characterize the geometric conditions for depth-3 universality over deterministic choice tables, and prove optimal logarithmic norm scaling at depth 4. Compact circuits outperform the evaluated classical baselines on four fixed-pool benchmarks and achieve the lowest mean test NLL among the compared models on temporally shifted Expedia. As output heads, they lower mean test NLL for every tested encoder on both datasets, demonstrating the value of staged choice beyond a single MNL layer.

\bibliography{references}
\bibliographystyle{informs2014}

\begin{APPENDICES}
\renewcommand*{\theHsection}{appendix.\Alph{section}}
\section{Proof of Lemma~\ref{lem:consistency}}
\label{app:model-proofs}

\begin{proof}[Proof of Lemma~\ref{lem:consistency}]
At a source, both claims follow directly from
\eqref{eq:fwd-source}. Suppose they hold for every predecessor of an
interior node $v$. Since $q(v)$ is a probability vector,
\[
  p(v)=\sum_{u\in\Nin(v)}q(v)_u p(u)
\]
is a convex combination of probability vectors and is therefore itself a
probability vector. Moreover,
\[
  x(v)
  =\sum_{u\in\Nin(v)}q(v)_u x(u)
  =\sum_{u\in\Nin(v)}q(v)_u p(u)^\top\mathbf X
  =p(v)^\top\mathbf X.
\]
Topological induction proves the identity at every node. Finally, a
permutation of the rows of $\mathbf X$ permutes every source distribution
and leaves every source summary unchanged. The same is then true
successively at every interior node, so the sink distribution is permuted
in the same way as the menu.
\end{proof}

\section{Proofs for Section~\ref{sec:power}}
\label{app:power}

This appendix proves the theoretical results in Section~\ref{sec:power}. We retain the
notation of Definition~\ref{def:consideration-graphs}. When the labeled menu is
$S$, we write $p_v^S$ and $\mu_v^S$ for the distribution and feature
summary produced by node $v$; thus
\begin{equation*}
  \mu_v^S=\sum_{i\in S}p_v^S(i)x_i.
\end{equation*}
For an interior node, $q_v^S(u)$ denotes the routing probability assigned
to predecessor $u\in\Nin(v)$.
The set of source ancestors of $v$ is denoted by $\mathcal S(v)$,
with a source counted as its own ancestor.

\paragraph{Formal statements of the quantities of
Section~\ref{sec:power}.}
Cost is measured by
$\|\theta\|_{\rm taste}:=\max_{v\in V}\|w_v\|_2$, where $\theta$
collects all node tastes and edge biases; edge biases may be arbitrary
menu-independent reals and are not charged to this budget. For the
compromise task, writing $c_\triangle$ for the three choices of
\eqref{eq:compromise-task},
\begin{gather*}
  \err_\triangle(G,\theta)
  :=\max_{S\in\{\{1,3\},\{2,3\},\{1,2,3\}\}}
       \bigl[1-p^S(v^\star)_{c_\triangle(S)}\bigr],\\
  N_k^*(\epsilon)
  :=\inf\left\{\|\theta\|_{\rm taste}:
       \dep(G)\le k,\ \err_\triangle(G,\theta)\le\epsilon\right\},
\end{gather*}
the infimum over all finite graphs of the stated depth, with no
restriction on width or fan-in. Fix a labeled catalog
$X=(x_1,\ldots,x_N)\in(\R^d)^N$, write $[N]=\{1,\ldots,N\}$, and identify
a nonempty menu with its label set $S\subseteq[N]$. An observed menu family
is a nonempty set
$\mathcal F\subseteq 2^{[N]}\setminus\{\varnothing\}$. A deterministic
choice table on $\mathcal F$ is a map $c$ satisfying $c(S)\in S$ for every
$S\in\mathcal F$. When the graph is evaluated on
$X_S:=\{x_i:i\in S\}$, write $p^S(v)$ for the distribution produced at
node $v$, indexed by the labels in $S$ (the same object as $p_v^S$;
both forms are used), and define
\begin{equation*}
  \err_{\mathcal F}(G,\theta;c)
  :=\max_{S\in\mathcal F}\bigl[1-p^S(v^\star)_{c(S)}\bigr].
\end{equation*}

Two limits are built into the MNL unit itself. First, no linear score can
strictly expose a point contained in the convex hull of the other offered
features. Second, a softmax is invariant to a common shift of all its
scores. We therefore call $(X,\mathcal F)$ \emph{admissible} when it
satisfies both of the following conditions:
\begin{enumerate}[leftmargin=2.3em,itemsep=2pt]
  \item[\textnormal{(E)}] \emph{Local exposure.} For every
  $S\in\mathcal F$ and $i\in S$,
  \begin{equation*}
    x_i\notin\operatorname{conv}\{x_j:j\in S\setminus\{i\}\},
  \end{equation*}
  with the condition understood as vacuous when $S=\{i\}$.
  \item[\textnormal{(T)}] \emph{Translation separation.} There do not exist distinct
  non-singleton menus $A,B\in\mathcal F$, a bijection $\phi:A\to B$, and
  $t\in\R^d$ such that
  \begin{equation*}
    x_{\phi(i)}=x_i+t\qquad(i\in A).
  \end{equation*}
\end{enumerate}
The singleton exception in (T) is necessary because a singleton has only
one feasible choice. Conditions (E) and (T) are necessary for universality
over all deterministic choice tables at every finite depth. The results
below show that together they are sufficient.

\subsection{Two invariants used throughout}

\begin{lemma}[Source-mixture invariant]
\label{lem:app-source-mixture}
For every graph, node $v$, and menu $S$, there are coefficients
$\alpha_{vs}^S\ge0$, $s\in\mathcal S(v)$, summing to one, such that
\begin{equation}
  p_v^S=\sum_{s\in\mathcal S(v)}\alpha_{vs}^S p_s^S.
  \label{eq:app-source-mixture}
\end{equation}
Consequently, no node can assign an item more probability than the largest
probability assigned to that item by an upstream source.
\end{lemma}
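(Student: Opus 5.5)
The plan is a topological induction on the graph, mirroring the proof of Lemma~\ref{lem:consistency}. For a source $s$ we have $\mathcal S(s)=\{s\}$. We set $\alpha_{ss}^S=1$, and \eqref{eq:app-source-mixture} holds trivially. For an interior node $v$, assume each predecessor $u\in\Nin(v)$ admits coefficients $\alpha_{us}^S\ge0$, $s\in\mathcal S(u)$, summing to one. We extend each such vector by zero to all of $\mathcal S(v)=\bigcup_{u\in\Nin(v)}\mathcal S(u)$ and define
\[
  \alpha_{vs}^S:=\sum_{u\in\Nin(v)} q_v^S(u)\,\alpha_{us}^S,\qquad s\in\mathcal S(v).
\]

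These coefficients are nonnegative because $q_v^S$ and the $\alpha_{us}^S$ are. Exchanging the order of the finite double sum, their total is $\sum_u q_v^S(u)\cdot 1=1$, since $q_v^S$ is an MNL distribution over $\Nin(v)$. Substituting the inductive representations into the forward rule $p_v^S=\sum_u q_v^S(u)p_u^S$ gives exactly \eqref{eq:app-source-mixture}. Acyclicity makes the topological order well defined, so the induction covers every node.

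For the consequence, fix an item $i\in S$. Because $p_v^S(i)$ is a convex combination of the values $p_s^S(i)$, $s\in\mathcal S(v)$, it is bounded by $\max_{s\in\mathcal S(v)}p_s^S(i)$.

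No step is a real obstacle. The only point needing care is bookkeeping: a source can be an ancestor of $v$ through several predecessors. This is handled by summing the contributions as in the definition of $\alpha_{vs}^S$ rather than listing sources with multiplicity.
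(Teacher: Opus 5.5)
Your proof is correct and follows the paper's argument essentially verbatim. Both use topological induction: substitute the predecessors' source-mixture representations into $p_v^S=\sum_{u\in\Nin(v)}q_v^S(u)p_u^S$, so the new coefficients are sums of products of routing weights and old coefficients, and they sum to one. Your zero-extension to $\mathcal S(v)=\bigcup_{u\in\Nin(v)}\mathcal S(u)$ simply spells out bookkeeping that the paper leaves implicit.
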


\begin{proof}
The assertion is immediate at a source. If it holds at every predecessor of
an interior node $v$, substitute their source-mixture representations into
$p_v^S=\sum_{u\in\Nin(v)}q_v^S(u)p_u^S$. The new coefficients are sums of
products of nonnegative routing weights and old mixture coefficients;
they sum to one. Topological induction proves the claim.
\end{proof}

\begin{lemma}[Source-purity lower bound]
\label{lem:app-source-purity}
Suppose $|S|\ge2$ and the sink assigns item $i\in S$ probability at least
$1-\epsilon$, where $0<\epsilon<1/2$. Then some source $s$ satisfies
$p_s^S(i)\ge1-\epsilon$, and for every $j\in S\setminus\{i\}$,
\begin{equation}
  \|w_s\|_2
  \ge
  \frac{1}{\|x_i-x_j\|_2}
  \log\frac{1-\epsilon}{\epsilon}.
  \label{eq:app-source-purity-bound}
\end{equation}
In particular every graph attaining such a row obeys
$\|\theta\|_{\rm taste}=\Omega_X(\log(1/\epsilon))$.
\end{lemma}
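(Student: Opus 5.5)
The plan has two parts. First, Lemma~\ref{lem:app-source-mixture} gives a source carrying the target item. Second, a two-item log-ratio bound on that source's softmax yields the norm estimate.

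\emph{Step 1 (existence of a pure source).} Let $v^\star$ be the sink. By Lemma~\ref{lem:app-source-mixture} we have $p_{v^\star}^S=\sum_{s\in\mathcal S(v^\star)}\alpha_{v^\star s}^S p_s^S$ with nonnegative weights summing to one. Evaluating at item $i$ shows that $p_{v^\star}^S(i)$ is a convex combination of the values $p_s^S(i)$. Hence
\[
1-\epsilon\le p_{v^\star}^S(i)\le\max_s p_s^S(i),
\]
and some source $s$ satisfies $p_s^S(i)\ge1-\epsilon$.

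\emph{Step 2 (log-ratio bound).} Sources use the bias-free unit $\MNL(w_s,\mathrm u;X_S)$, so for any $j\in S\setminus\{i\}$
\[
\frac{p_s^S(i)}{p_s^S(j)}=\exp\bigl(w_s^\top(x_i-x_j)\bigr).
\]
From $p_s^S(i)\ge1-\epsilon$ we get $p_s^S(j)\le1-p_s^S(i)\le\epsilon$, so the ratio is at least $(1-\epsilon)/\epsilon$. Taking logarithms and applying Cauchy--Schwarz gives
\[
\|w_s\|_2\,\|x_i-x_j\|_2\ge w_s^\top(x_i-x_j)\ge\log\frac{1-\epsilon}{\epsilon}.
\]
Since $\epsilon<1/2$, the right side is strictly positive. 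This forces $x_i\neq x_j$, so dividing by $\|x_i-x_j\|_2$ is legitimate and yields \eqref{eq:app-source-purity-bound}. The bound holds for every $j\in S\setminus\{i\}$, and such a $j$ exists because $|S|\ge2$.

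\emph{Step 3 (order of growth).} Fix any $j\ne i$ in $S$ and set $D_X=\max_{k,l}\|x_k-x_l\|_2$. Then
\[
\|\theta\|_{\rm taste}\ge\|w_s\|_2\ge D_X^{-1}\log\frac{1-\epsilon}{\epsilon}.
\]
For $\epsilon$ bounded away from $1/2$, say $\epsilon\le1/4$, the quantity $\log((1-\epsilon)/\epsilon)$ is at least a constant multiple of $\log(1/\epsilon)$. This gives $\Omega_X(\log(1/\epsilon))$, which is the meaning of the asymptotic claim.

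There is no real obstacle. The only points needing care are the positivity of $\log((1-\epsilon)/\epsilon)$, which uses $\epsilon<1/2$ and rules out coincident features, and the observation that edge biases never enter the source softmax, so leaving them unconstrained does not weaken the bound.
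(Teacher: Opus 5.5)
Your proof is correct and follows the paper's argument: the source-mixture invariant gives a source with $p_s^S(i)\ge 1-\epsilon$, and the bias-free MNL odds together with Cauchy--Schwarz give the bound. Your remarks that positivity of $\log((1-\epsilon)/\epsilon)$ forces $x_i\neq x_j$, and that bounding by the catalog diameter yields the $\Omega_X(\log(1/\epsilon))$ claim, only make explicit what the paper leaves implicit (the diameter step matches the paper's optimality argument for Theorem~\ref{thm:universality4}).
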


\begin{proof}
\Eqref{eq:app-source-mixture} implies that at least one source has
$p_s^S(i)\ge1-\epsilon$. Since $p_s^S(j)\le\epsilon$, the source MNL odds
give
\[
  w_s^\top(x_i-x_j)
  =\log\frac{p_s^S(i)}{p_s^S(j)}
  \ge\log\frac{1-\epsilon}{\epsilon}.
\]
Cauchy--Schwarz yields \eqref{eq:app-source-purity-bound}.
\end{proof}

We shall also use a standard hard-to-soft argument, stated here with the
precise tie convention needed below. A \emph{hard report} at a source is its
unique utility maximizer. At an interior node, first replace every
predecessor summary by its hard reported item. The resulting
collection of hard reports, one per node and menu, is the \emph{hard
assignment} of the parameters. It is
\emph{locally output-strict} on a finite menu family if all score-maximizing
predecessors report the same item and every predecessor reporting a
different item has a strictly smaller score. Ties between predecessors that
report the same item do not change the hard output.

\begin{lemma}[Exponential softening]
\label{lem:app-softening}
Fix a finite graph, a finite menu family $\mathcal F$, and a locally
output-strict hard assignment that reports a table $c$. Multiply every
taste and every edge
bias in that assignment by $T$. There are constants $C,c_0>0$, depending
only on the fixed assignment, such that
\begin{equation}
  \err_{\mathcal F}(G,T\theta;c)\le Ce^{-c_0T}
  \label{eq:app-softening}
\end{equation}
for all sufficiently large $T$. In particular, the resulting taste budget
is $O(\log(1/\epsilon))$ at error $\epsilon$.
\end{lemma}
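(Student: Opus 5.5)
We prove the bound by induction along a topological order, tracking a quantitative margin at every node and every menu $S\in\mathcal F$. Fix the hard assignment $\theta$ and let $h_v^S\in S$ denote the hard report of node $v$ on menu $S$. First we extract a uniform gap $\delta>0$ from the finitely many strict inequalities in the hard assignment. At a source $s$, $w_s^\top x_{h_s^S}-w_s^\top x_j\ge\delta$ for all $j\in S\setminus\{h_s^S\}$. At an interior node $v$, let $U_v^S$ be the set of score-maximizing predecessors; local output-strictness says they all report the same item $h_v^S$. Every $u\in\Nin(v)\setminus U_v^S$ then reports a different item and satisfies $w_v^\top x_{h_v^S}+b_{u^\ast v}-w_v^\top x_{h_u^S}-b_{uv}\ge\delta$, where $u^\ast\in U_v^S$. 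There are finitely many nodes, menus and comparisons, so such a $\delta$ exists. Set $D=\max_{i,j\in\cup\mathcal F}\|x_i-x_j\|$ and $W=\max_v\|w_v\|$.

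Next we state the inductive claim for the scaled parameters $T\theta$: $1-p_v^S(h_v^S)\le C_v e^{-c_v T}$ for every node $v$ and every menu $S$. At a source this is the MNL tail bound. Each competitor has odds at most $e^{-T\delta}$, so the error is at most $(|S|-1)e^{-T\delta}$. At an interior node $v$, first use the inductive hypothesis and consistency, $x(u)=p(u)^\top\mathbf X$, to get
\[
\|\mu_u^S-x_{h_u^S}\|\le D\,(1-p_u^S(h_u^S))\le D C_u e^{-c_uT}.
\]
The true routing score of $u$ is therefore within $TW D C_u e^{-c_uT}$ of its hard score. This perturbation is the key point: it is multiplied by $T$, yet $Te^{-cT}\to0$, so for all large $T$ it is below $T\delta/4$. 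Hence every predecessor outside $U_v^S$ receives routing weight $q_v^S(u)\le e^{-T\delta/2}$. This uses the fact that the bias differences also scale by $T$ and are already accounted for in $\delta$.

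We then bound the error at $v$. Write $p_v^S=\sum_u q_v^S(u)p_u^S$. The predecessors in $U_v^S$ all have hard report $h_v^S$, so ties among them are harmless. This gives
\[
1-p_v^S(h_v^S)\le\sum_{u\in U_v^S}q_v^S(u)\bigl(1-p_u^S(h_v^S)\bigr)+\sum_{u\notin U_v^S}q_v^S(u)\le\max_u C_ue^{-c_uT}+|\Nin(v)|e^{-T\delta/2}.
\]
This closes the induction with $c_v=\min(\delta/2,\min_u c_u)$. At the sink $h_{v^\star}^S=c(S)$, and taking the maximum over the finitely many $S\in\mathcal F$ yields \eqref{eq:app-softening}.

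Finally, the taste budget of $T\theta$ is $TW$. Choosing $T=c_0^{-1}\log(C/\epsilon)$ gives error at most $\epsilon$ with budget $O(\log(1/\epsilon))$. The only delicate step is the interior-node argument: the summaries are only approximately the hard items, and the resulting score error is amplified by the factor $T$. This has to be absorbed by the exponential decay inherited from the predecessors, which is why we prove a uniform exponential rate inductively rather than mere convergence. That rate lets the threshold "sufficiently large $T$" be chosen once, node by node in topological order.
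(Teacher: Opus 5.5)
Your proof is correct and follows essentially the paper's argument: a uniform hard gap $\delta$, the bound $\|\mu_u^S-x_{h_u^S}\|\le D\,(1-p_u^S(h_u^S))$ that turns exponentially small predecessor error into a routing-logit perturbation of order $Te^{-cT}$, a surviving margin of $T\delta/2$, and topological induction. One harmless slip: local output-strictness does not force non-maximizing predecessors to report a different item (the paper's own constructions use same-report predecessors with strictly smaller scores), but your argument never uses this, because the gap from the maximum to every non-maximizer is positive by finiteness and your error bound already charges non-maximizers their full routing mass.
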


\begin{proof}
Choose $\delta>0$ no larger than every source winner-versus-loser gap
and every gap between a maximal hard routing score and the score of a
predecessor reporting a different item. Such a choice exists by finiteness
and local output-strictness; menus or nodes with no such comparison impose
no constraint on $\delta$. A source assigns total mass at most
$(N-1)e^{-T\delta}$ outside its hard item.

Suppose every predecessor of an interior node $v$ has mass at most
$\eta_T$ outside its own hard item, where $\eta_T$ decays exponentially.
Its summary is then within $D_X\eta_T$ of its hard feature, where
$D_X:=\max_{i,j}\|x_i-x_j\|_2$. Each routing logit under $T\theta$
therefore differs from $T$ times its unscaled hard routing score by at most
$T\|w_v\|_2D_X\eta_T$. Choose a predecessor
$u_\star$ with maximal hard routing score. For sufficiently large $T$,
its score exceeds that of every predecessor reporting a different item by
at least
\[
  T\delta-2T\|w_v\|_2D_X\eta_T\ge T\delta/2.
\]
The total routing mass on those predecessors is thus at most
$(|\Nin(v)|-1)e^{-T\delta/2}$. The mass outside $v$'s hard item is bounded
by this quantity plus $\eta_T$, so it also decays exponentially.
A finite topological induction proves \eqref{eq:app-softening}.
\end{proof}

\subsection{Affine changes of coordinates for a three-item menu}

Let $H_X:=\operatorname{span}\{x_2-x_1,x_3-x_1\}$. Non-collinearity means
$\dim H_X=2$. Fix the regular-triangle coordinates
\begin{equation}
  \bar x_1=(1,0),\qquad
  \bar x_2=(-1/2,\sqrt3/2),\qquad
  \bar x_3=(-1/2,-\sqrt3/2).
  \label{eq:app-regular-triangle}
\end{equation}

\begin{lemma}[Affine normalization]
\label{lem:app-affine-normalization}
There is an affine isomorphism from $x_1+H_X$ to $\R^2$ sending
$x_i$ to $\bar x_i$. Every taste may first be projected onto $H_X$ without
changing any choice probability or increasing its norm. The projected
tastes admit an invertible linear correspondence with tastes in $\R^2$
that preserves all choice probabilities under the affine map.
Corresponding taste norms differ by multiplicative constants depending
only on $(x_1,x_2,x_3)$.
\end{lemma}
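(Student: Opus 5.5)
The plan is to handle the two claims in turn: first the reduction to $H_X$, then the correspondence with $\R^2$.

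\emph{Projection onto $H_X$.} All three items lie in the affine plane $x_1+H_X$. By Lemma~\ref{lem:consistency}, every summary $\mu_v^S$ is a convex combination of items, so it also lies in $x_1+H_X$. Every MNL unit, at a source or at an interior node, compares scores $w_v^\top z$ for points $z$ in this plane. Write $w_v=\pi(w_v)+w_v^\perp$, where $\pi$ is orthogonal projection onto $H_X$ and $w_v^\perp\perp H_X$. Then
\[
  w_v^\top z=\pi(w_v)^\top z+(w_v^\perp)^\top x_1 .
\]
The second term is a common shift across all candidates at that unit, and softmax translation invariance removes it. Edge biases are untouched. Replacing every $w_v$ by $\pi(w_v)$ therefore leaves every routing weight unchanged. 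By topological induction over the evaluation order, every $p_v^S$ is unchanged as well, and $\|\pi(w_v)\|\le\|w_v\|$.

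\emph{The affine map.} Let $L:H_X\to\R^2$ be the linear map with $L(x_2-x_1)=\bar x_2-\bar x_1$ and $L(x_3-x_1)=\bar x_3-\bar x_1$. It is invertible because $\dim H_X=2$ and the target triangle is nondegenerate. Define $A(z)=\bar x_1+L(z-x_1)$ on $x_1+H_X$, so that $A(x_i)=\bar x_i$. Since $A$ is affine, it maps convex combinations to convex combinations. Hence, if the circuit runs on the catalog $\bar x_i$ with the same routing weights, the summaries become $A(\mu_v^S)$.

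\emph{Taste correspondence.} For $w\in H_X$, set $\bar w:=(L^{-1})^{\top}w\in\R^2$, using the inner product on $H_X$ inherited from $\R^d$. For $z\in x_1+H_X$,
\[
  \bar w^\top A(z)=\bar w^\top\bar x_1+w^\top(z-x_1).
\]
This equals $w^\top z$ plus a constant that does not depend on $z$. Softmax invariance then shows that each unit produces identical probabilities in the two coordinate systems.

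\emph{Induction and norms.} A topological induction makes this rigorous. Assume the predecessors have equal distributions in the two systems, so their summaries correspond under $A$. Then the interior scores differ only by a common shift, so the routing weights $q_v^S$ agree, and hence so do $p_v^S$. The map $w\mapsto\bar w$ is a linear bijection $H_X\to\R^2$. Therefore $\|\bar w\|\le\|L^{-1}\|\,\|w\|$ and $\|w\|\le\|L\|\,\|\bar w\|$, and these operator norms depend only on $(x_1,x_2,x_3)$.

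\emph{Main obstacle.} The only delicate point is the bookkeeping of constant shifts at interior nodes. Each node's candidates are summaries rather than items, so one must first confirm, through the consistency lemma, that they all lie in the same affine plane. Only then is the shift truly common to all candidates, and only then does the edge-bias vector pass through unchanged.
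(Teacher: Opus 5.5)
Your proposal is correct and follows essentially the same route as the paper. You project each taste onto $H_X$, using that every summary is a convex combination of items in $x_1+H_X$, so the orthogonal component adds a common shift. You then use the affine map $\bar x_1+L(\cdot-x_1)$ with the taste correspondence $L^\top\bar w=w$ inductively through the circuit, and get norm equivalence from invertibility of $L$; writing the correspondence as $(L^{-1})^\top$ from $H_X$ to $\R^2$, rather than the paper's $L^\top$ in the other direction, describes the same bijection.
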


\begin{proof}
There is a unique invertible linear map
$L:H_X\to\R^2$ satisfying
$L(x_i-x_1)=\bar x_i-\bar x_1$ for $i=2,3$. Set
$\Phi(x)=\bar x_1+L(x-x_1)$. For a transformed taste $a\in\R^2$, choose
$w\in H_X$ satisfying $L^\top a=w$. Then
$a^\top\Phi(x)=w^\top x+\gamma_a$ on $x_1+H_X$, where $\gamma_a$ is
constant across alternatives and therefore cancels from every softmax.
Convex combinations commute with $\Phi$, so the argument applies
inductively to all summaries. Conversely, an arbitrary ambient taste
$w\in\R^d$ may be replaced by its orthogonal projection onto $H_X$
without changing any probability: every menu item and every summary
lies in the affine plane $x_1+H_X$, so the component of $w$ orthogonal
to $H_X$ contributes the same constant to every score and cancels from
every softmax, while the projection does not increase $\|w\|_2$.
The map $L^\top:\R^2\to H_X$ is invertible, so there are constants
$0<c_X\le C_X<\infty$ such that, for every $a\in\R^2$,
\[
  c_X\|a\|_2\le\|L^\top a\|_2\le C_X\|a\|_2.
\]
\end{proof}

\subsection{Proof of Theorem~\ref{thm:separation}}
\label{app:separation-proof}

By Lemma~\ref{lem:app-affine-normalization}, it suffices to prove all rates
for the regular triangle \eqref{eq:app-regular-triangle}; constants change
only by factors depending on the original three features. For a taste $w$,
put $t_i=w^\top\bar x_i$ and
\begin{equation*}
  \operatorname{osc}(t):=\max_i t_i-\min_i t_i.
\end{equation*}
Since the triangle has diameter $\sqrt3$,
$\operatorname{osc}(t)\le\sqrt3\|w\|_2$. Conversely, after subtracting
the common mean from any prescribed score triple, the formula
\begin{equation}
  w=\frac23\sum_{i=1}^3t_i\bar x_i
  \label{eq:app-triangle-score-realization}
\end{equation}
realizes the centered scores, and gives
$\|w\|_2\le2\operatorname{osc}(t)$.

\paragraph{The MNL and random-utility floors.}
For a single MNL, let $a_i=e^{w^\top\bar x_i}$ and put
$r=a_1/a_3$, $s=a_2/a_3$. Its three target probabilities are
\[
  \frac{r}{1+r},\qquad \frac{s}{1+s},\qquad
  \frac{1}{1+r+s}.
\]
Writing $z=r+s$, at least one of the first two is no larger than
$z/(z+2)$. Therefore the largest possible minimum target probability is
at most
\[
  \max_{z>0}\min\left\{\frac{z}{z+2},\frac{1}{z+1}\right\}
  =\sqrt2-1,
\]
where the two terms agree at $z=\sqrt2$. Equality is attained at
$r=s=1/\sqrt2$. Hence plain MNL has the exact floor
$\epsilon\ge2-\sqrt2$.

A context-independent random-utility model is a probability distribution
over complete rankings, choosing from each menu the highest-ranked offered
item. Such a model satisfies regularity: if $S\subseteq T$ and $i\in S$,
then $P(i\mid T)\le P(i\mid S)$. In the compromise task,
\[
  P(3\mid123)\le P(3\mid23)=1-P(2\mid23)\le\epsilon.
\]
Accuracy also requires $P(3\mid123)\ge1-\epsilon$, and hence
$\epsilon\ge1/2$. This is tight: the equal mixture of the rankings
$1\succ2\succ3$ and $3\succ2\succ1$ gives target probability $1/2$ on
all three displayed menus.

\paragraph{A width-free depth-$2$ lower bound.}
Consider an arbitrary finite depth-$2$ star with sources
$q=1,\ldots,W$ and sink $v^\star$; this is no loss of generality,
since the only other graphs of depth at most $2$ are single vertices,
i.e.\ plain MNL, which the floor $\epsilon\ge2-\sqrt2$ already
excludes at these accuracies. On the triple menu write
\[
  p_q^{123}=(\xi_q,\upsilon_q,\zeta_q),
  \qquad \xi_q+\upsilon_q+\zeta_q=1.
\]
Source IIA gives
\begin{equation*}
  r_q^{13}:=p_q^{13}(1)=\frac{\xi_q}{\xi_q+\zeta_q},
  \qquad
  r_q^{23}:=p_q^{23}(2)=\frac{\upsilon_q}{\upsilon_q+\zeta_q}.
\end{equation*}
Let $t_i=w_{v^\star}^\top\bar x_i$, let
$R=\operatorname{osc}(t)$, and write the sink routing logit and weight as
\begin{equation*}
  g_q(S)=b_{qv^\star}+\sum_{i\in S}p_q^S(i)t_i,
  \qquad
  \gamma_q^S=\frac{e^{g_q(S)}}{\sum_a e^{g_a(S)}}.
\end{equation*}
The biases may be arbitrary.

Fix $0<\epsilon\le1/8$, set $\eta=2\epsilon$, and define source groups
\begin{align*}
  G_{13}&=\{q:r_q^{13}\ge1-\eta\},&
  G_{23}&=\{q:r_q^{23}\ge1-\eta\},&
  G_3&=\{q:\zeta_q\ge1-\eta\}.
\end{align*}
If the sink has error at most $\epsilon$, Markov's inequality applied to
the wrong-item mass gives
\begin{align}
  \gamma^{13}(G_{13}),\ \gamma^{23}(G_{23}),\
  \gamma^{123}(G_3)&\ge\frac12,
  \label{eq:app-good-group-mass}\\
  \gamma^{123}(G_{13}),\ \gamma^{123}(G_{23}),\
  \gamma^{13}(G_3),\ \gamma^{23}(G_3)&<2\epsilon,
  \label{eq:app-cross-group-mass}
\end{align}
where $\gamma^S(G)=\sum_{q\in G}\gamma_q^S$. For example, outside
$G_{13}$ each source has pair-menu wrong mass greater than $\eta$, so the
outside routing mass is at most $\epsilon/\eta=1/2$. A source in $G_{13}$
has $\zeta_q\le\eta$, so on the triple menu it creates at least
$1-\eta$ wrong mass and receives routing mass less than $2\epsilon$.
The other inequalities are identical; for $q\in G_3$ one uses
$r_q^{13},r_q^{23}\le\eta/(1-\eta)\le1/3$.

For a group $G$, put $U_G(S)=\sum_{q\in G}e^{g_q(S)}$. Global softmax
normalization cancels from group odds:
$\gamma^S(G)/\gamma^S(H)=U_G(S)/U_H(S)$. Deleting item $2$ from the
triple-menu source law gives
\[
  p_q^{123}=(1-\upsilon_q)p_q^{13}+\upsilon_q e_2,
\]
and therefore, with the edge bias cancelling,
\begin{equation}
  g_q(13)-g_q(123)
  =\upsilon_q\left[(t_1-t_2)
       +(1-r_q^{13})(t_3-t_1)\right].
  \label{eq:app-delete-two-score}
\end{equation}
Similarly,
\begin{equation}
  g_q(23)-g_q(123)
  =\xi_q\left[(t_2-t_1)
       +(1-r_q^{23})(t_3-t_2)\right].
  \label{eq:app-delete-one-score}
\end{equation}

If $t_1\le t_2$, \eqref{eq:app-delete-two-score} is at most
$\eta R$ on $G_{13}$ and has absolute value at most $\eta R$ on $G_3$.
Write $\ell(S)=\log[U_{G_{13}}(S)/U_{G_3}(S)]$. Termwise comparison gives
$\ell(13)-\ell(123)\le2\eta R$, while the group-mass bounds
\eqref{eq:app-good-group-mass}--\eqref{eq:app-cross-group-mass} give
\[
  2\log\frac1{4\epsilon}
  \le\ell(13)-\ell(123)\le2\eta R.
\]
If $t_1\ge t_2$, the identical argument uses
\eqref{eq:app-delete-one-score}, $G_{23}$, and $G_3$. In both cases,
\begin{equation*}
  2\epsilon R\ge\log\frac1{4\epsilon}.
\end{equation*}
Since $R\le\sqrt3\|w_{v^\star}\|_2$, this proves, uniformly over width and
all edge biases,
\begin{equation}
  N_2^*(\epsilon)
  \ge\frac{1}{2\sqrt3\,\epsilon}
       \log\frac1{4\epsilon}.
  \label{eq:app-depth2-norm-lower}
\end{equation}
A depth-$1$ graph can be embedded in a depth-$2$ star by adding a
single-predecessor sink, so the bound covers depth at most two.

\paragraph{A matching two-source construction.}
Let $0<\epsilon\le1/16$ and define
\[
  \delta=\frac\epsilon4,\qquad m=\frac\epsilon{64},\qquad
  M=\log\frac1m,\qquad
  L=\log\frac{1-\delta}{\delta},
\]
\begin{equation*}
  K=2\log\frac8\epsilon,\qquad
  V=\frac{2K}{\delta}=\frac{16}{\epsilon}\log\frac8\epsilon.
\end{equation*}
Use two sources with item-score triples
\begin{equation*}
  s_A=(0,-M,L),\qquad s_C=(0,M,-M),
\end{equation*}
and a zero-bias sink with item scores $t=(V,K,0)$. Formula
\eqref{eq:app-triangle-score-realization} realizes all three score triples
by tastes. Up to normalization, the source item weights are
\[
  (1,m,(1-\delta)/\delta)
  \quad\text{and}\quad (m,1,m^2).
\]
The intended sink winner on $13,23,123$ is respectively $C,C,A$.
Their exact score margins are
\begin{align*}
  \Delta_{13}
  &=\frac{V}{1+m}-V\delta,\notag\\
  \Delta_{23}
  &=\frac{K}{1+m^2}
    -\frac{Km}{m+(1-\delta)/\delta},\notag\\
  \Delta_{123}
  &=\frac{2K+Km\delta}{1+m\delta}
    -\frac{2Km/\delta+K}{1+m+m^2}.
\end{align*}
For $\epsilon\le1/16$, the first two exceed $K/2$. For the last,
$2m/\delta=1/8$, so its second term is at most $9K/8$, whereas
$m\delta\le1/1024$ makes its first term larger than $13K/8$; hence it too
exceeds $K/2$. The wrong sink-route probability is therefore at most
$e^{-K/2}=\epsilon/8$. The selected-source errors on $13,23,123$ are at
most $m,m^2,2\delta$, respectively, so total error is below $\epsilon$.
The two source score ranges are $O(\log(1/\epsilon))$ and the sink range is
$V$. \Eqref{eq:app-triangle-score-realization} therefore gives
\begin{equation}
  N_2^*(\epsilon)
  \le C\frac{\log(1/\epsilon)}{\epsilon}
  \label{eq:app-depth2-upper}
\end{equation}
for an absolute constant $C$ on the regular triangle.

\paragraph{The five-node depth-$3$ construction.}
Use the graph in Figure~\ref{fig:five-node-hard}. Its three source rankings
are
\[
  1\succ2\succ3,\qquad
  2\succ1\succ3,\qquad
  3\succ1\succ2.
\]
The middle node receives the last two sources and ranks
$1\succ3\succ2$; the sink receives the first source and the middle node
and ranks $2\succ3\succ1$. Reading the highest-ranked reported item gives
exactly $13\mapsto1$, $23\mapsto2$, and $123\mapsto3$. Give every node
the appropriate permutation of the score triple $(2,1,0)$ and set all
edge biases to zero. In this hard assignment, every source winner gap
and every gap from a maximal routing score to a different report is at
least $1$. Scaling all tastes by $T$ makes these hard gaps at least $T$.
Lemma~\ref{lem:app-softening} therefore gives error at most
$Ce^{-cT}$, while \eqref{eq:app-triangle-score-realization} gives taste
budget $O(T)$. Taking $T=C_0\log(C_0/\epsilon)$ proves
$N_3^*(\epsilon)=O(\log(1/\epsilon))$.

Finally, Lemma~\ref{lem:app-source-purity}, applied to the triple-menu
target, gives $N_3^*(\epsilon)=\Omega_X(\log(1/\epsilon))$. Together with
\eqref{eq:app-depth2-norm-lower} and \eqref{eq:app-depth2-upper}, and then
undoing the affine normalization, this proves
Theorem~\ref{thm:separation}.

\subsection{Analytic lemmas for the universality theorems}
\label{app:general-universality}

For a source with taste $u$ and a labeled menu $S$, define its partition
function and mean feature by
\begin{equation}
  Z_S(u):=\sum_{i\in S}e^{u^\top x_i},
  \qquad
  \mu_S(u):=\nabla\log Z_S(u)
  =\sum_{i\in S}\frac{e^{u^\top x_i}}{Z_S(u)}x_i.
  \label{eq:app-partition-mean}
\end{equation}
Thus a source $s$ with taste $w_s=u$ has summary
$\mu_s^S=\mu_S(u)$.  We first record a simple error estimate that will
be used repeatedly.

\begin{lemma}[Mixture-error bookkeeping]
\label{lem:app-mixture-error}
Fix a menu $S$ and item $i\in S$.  If an interior node gives total routing
mass at least $1-\delta$ to predecessors $u$ satisfying
$p_u^S(i)\ge1-\eta$, then
\begin{equation*}
  1-p_v^S(i)\le\delta+\eta.
\end{equation*}
Consequently, on a fixed finite graph, exponentially small source errors
and exponentially small routing mass on wrong-report branches give an
exponentially small sink error.
\end{lemma}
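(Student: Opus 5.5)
The bound follows from the convex-combination structure of an interior node. By definition, $p_v^S=\sum_{u\in\Nin(v)}q_v^S(u)\,p_u^S$ with $q_v^S$ a probability vector. Split the in-neighbourhood into the \emph{good} predecessors $U_g=\{u:p_u^S(i)\ge1-\eta\}$ and the rest $U_b$. By hypothesis $\sum_{u\in U_g}q_v^S(u)\ge1-\delta$. Dropping the nonnegative contribution of $U_b$ gives
\[
  p_v^S(i)\;\ge\;\sum_{u\in U_g}q_v^S(u)\,p_u^S(i)\;\ge\;(1-\delta)(1-\eta)\;\ge\;1-\delta-\eta,
\]
which is the claimed inequality. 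No feature geometry is involved; only nonnegativity of the weights and of $p_u^S$ is used.

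For the consequence, I would run a topological induction on a fixed finite graph. It mirrors the bookkeeping at the end of the proof of Lemma~\ref{lem:app-softening}. Fix the menu and the item $i_v$ that node $v$ is meant to report. Assume every source's mass outside its intended item is at most $C_se^{-c_sT}$. At each interior node, assume the routing mass on predecessors whose intended report differs from $i_v$ is at most $C_re^{-c_rT}$. Then, inductively, every predecessor reporting $i_v$ has error at most $\eta_u\le C_ue^{-c_uT}$. Applying the inequality above with $\delta=C_re^{-c_rT}$ and $\eta=\max_u\eta_u$ gives error at $v$ at most a sum of two exponentials. This is again $\le C_ve^{-c_vT}$, with $c_v=\min(c_r,\min_u c_u)$.

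Since the graph and the menu family are finite, taking the maximum of the constants and the minimum of the rates over all nodes and menus yields an exponentially small sink error. There is no real obstacle here. The only point needing care is that the constants are finite because there are finitely many nodes and menus, which is exactly the standing assumption of a fixed finite graph.
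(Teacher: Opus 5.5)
Your proposal is correct and matches the paper's proof. The paper likewise drops the nonnegative contribution of the other predecessors to get $p_v^S(i)\ge(1-\delta)(1-\eta)\ge1-\delta-\eta$, then iterates along a topological ordering, so that only finitely many exponentially small terms accumulate; your explicit tracking of constants and rates just spells out that last step.
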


\begin{proof}
The defining mixture gives
$p_v^S(i)\ge(1-\delta)(1-\eta)\ge1-\delta-\eta$.
Iterating this inequality along a topological ordering adds only finitely
many exponentially small terms.
\end{proof}

\begin{proposition}[Necessity of (E) and (T)]
\label{prop:app-admissibility-necessary}
If (E) fails, some deterministic table has error at least $1/2$ under
every finite graph and every parameter choice.  If (E) holds but (T)
fails, the same conclusion holds.  Thus (E)--(T) are necessary for
all-table approximation to arbitrary accuracy.
\end{proposition}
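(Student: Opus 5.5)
The plan is to build, in each failure case, an explicit deterministic table whose target row(s) cannot be met with probability above $1/2$, using only the source-mixture invariant (Lemma~\ref{lem:app-source-mixture}) and the translation invariance of the softmax.

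\textbf{Failure of (E).} Suppose there are $S\in\mathcal F$ and $i\in S$ with $x_i\in\operatorname{conv}\{x_j:j\in S\setminus\{i\}\}$. Then $|S|\ge2$, so the set of other items is nonempty.
\begin{enumerate}[leftmargin=2.3em,itemsep=2pt]
  \item Fix any source $s$ with taste $w_s$. Since $w_s^\top x_i$ is a convex combination of the values $w_s^\top x_j$ for $j\in S\setminus\{i\}$, some $j\neq i$ satisfies $w_s^\top x_j\ge w_s^\top x_i$.
  \item Both exponentials $e^{w_s^\top x_i}$ and $e^{w_s^\top x_j}$ appear in the source partition function $Z_S(w_s)$, so
  \[
    p_s^S(i)\le\frac{e^{w_s^\top x_i}}{e^{w_s^\top x_i}+e^{w_s^\top x_j}}\le\frac12 .
  \]
  \item This holds for every source and every parameter choice. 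By Lemma~\ref{lem:app-source-mixture}, the sink probability of $i$ is a convex combination of source probabilities of $i$, hence $p^S(v^\star)_i\le1/2$.
  \item Any table with $c(S)=i$ therefore has error at least $1/2$.
\end{enumerate}
Coincident features ($x_i=x_j$) are covered by the same inequality.

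\textbf{Failure of (T).} Take distinct non-singleton $A,B\in\mathcal F$, a bijection $\phi:A\to B$, and $t\in\R^d$ with $x_{\phi(i)}=x_i+t$ for all $i\in A$. The key step is the identity
\[
  p^B(v)_{\phi(i)}=p^A(v)_i \quad\text{and}\quad \mu_v^B=\mu_v^A+t \qquad\text{for every node } v \text{ and every } i\in A,
\]
which I prove by topological induction.
\begin{enumerate}[leftmargin=2.3em,itemsep=2pt]
  \item \emph{Sources.} Every item score on $B$ is the score of its $\phi$-preimage plus the common constant $w_s^\top t$. This cancels in the softmax, so $p_s^B(\phi(i))=p_s^A(i)$. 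The summary then shifts by $t$, because probabilities sum to one.
  \item \emph{Interior nodes.} If all predecessors satisfy the identity, each routing logit $b_{uv}+w_v^\top\mu_u^B$ equals its $A$-counterpart plus the common shift $w_v^\top t$. This holds because the bias is menu-independent. Hence $q_v^B=q_v^A$, and both parts of the identity propagate through the mixture.
\end{enumerate}

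Now pick $i\neq j$ in $A$, which is possible since $|A|\ge2$, and set $c(A)=i$, $c(B)=\phi(j)$. The table on the rest of $\mathcal F$ is arbitrary. By the identity,
\[
  p^A(v^\star)_i+p^B(v^\star)_{\phi(j)}=p^A(v^\star)_i+p^A(v^\star)_j\le1 .
\]
So one of the two target probabilities is at most $1/2$, giving error at least $1/2$ for every graph and every parameter choice.

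The only point needing care is that $A$ and $B$ are distinct menus that may share labels. The table is still well defined, since it assigns one value to each menu, and the argument never requires $A\cap B=\varnothing$. I expect no real obstacle. The translation-equivariance induction is the main step and is routine once the edge biases are observed to be menu-independent.
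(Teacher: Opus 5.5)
Your proposal is correct and matches the paper's proof: in the (E) case you cap each source at $1/2$ and propagate the cap to the sink via Lemma~\ref{lem:app-source-mixture}, and in the (T) case you use topological induction to show that every node's law and summary are translation equivariant, then target distinct coordinates $i$ and $\phi(j)$ on $A$ and $B$. The only difference is cosmetic: in the (E) case you bound $p_s^S(i)$ by a single competitor $j$ with $w_s^\top x_j\ge w_s^\top x_i$, while the paper applies weighted AM--GM against the full partition function, and both give the same cap.
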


\begin{proof}
Suppose first that
$x_i\in\operatorname{conv}\{x_j:j\in S\setminus\{i\}\}$ for an observed
menu $S$.  Write $x_i=\sum_{j\ne i}\lambda_jx_j$, where
$\lambda_j\ge0$ and $\sum_j\lambda_j=1$.  For every source taste $w$,
weighted AM--GM gives
\[
  e^{w^\top x_i}
  =\prod_{j\ne i}\bigl(e^{w^\top x_j}\bigr)^{\lambda_j}
  \le\sum_{j\ne i}\lambda_je^{w^\top x_j}
  \le\sum_{j\ne i}e^{w^\top x_j}.
\]
Hence every source assigns probability at most $1/2$ to $i$ on $S$.
Lemma~\ref{lem:app-source-mixture} propagates the same cap to the sink,
so the row $c(S)=i$ cannot be approximated below error $1/2$.

Now suppose distinct non-singleton menus $A,B$ violate (T), with bijection
$\phi:A\to B$ and translation $t$.  We claim, by induction over the graph,
that every node satisfies
\begin{equation}
  p_v^B(\phi(i))=p_v^A(i)\quad(i\in A),
  \qquad \mu_v^B=\mu_v^A+t.
  \label{eq:app-translation-equivariance}
\end{equation}
At a source, translation adds the common scalar $w_v^\top t$ to all item
scores.  If the identities hold for every predecessor $u$ of an interior
node $v$, then its incoming score on $B$ is
\[
  w_v^\top\mu_u^B+b_{uv}
  =w_v^\top\mu_u^A+b_{uv}+w_v^\top t.
\]
The last term is common to all predecessors, so their routing weights are
the same on $A$ and $B$, proving \eqref{eq:app-translation-equivariance}.
Choose distinct $i,k\in A$, set $c(A)=i$ and $c(B)=\phi(k)$, and complete
the table arbitrarily.  The two required probabilities are distinct
coordinates of the same probability vector, so one is at most $1/2$.
\end{proof}

The next lemma identifies exactly why condition (T), rather than a
stronger generic-position assumption, is the relevant obstruction.

\begin{lemma}[Gibbs means characterize translations]
\label{lem:app-translation-characterization}
Let the feature points within each of two menus $A,B$ be distinct.  If
$\mu_A(u)-\mu_B(u)$ is constant on a nonempty open subset of $\R^d$, then
the two feature supports are exact translates.  Conversely, exact
translation makes this mean difference constant.
\end{lemma}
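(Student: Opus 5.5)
The converse is a short computation, so I will dispose of it first and then spend the effort on the forward direction. If $x_{\phi(i)}=x_i+t$ for a bijection $\phi:A\to B$, then every score on $B$ is the corresponding score on $A$ shifted by the common constant $u^\top t$. Hence $Z_B(u)=e^{u^\top t}Z_A(u)$ and $\log Z_B-\log Z_A=u^\top t$, so $\mu_B(u)-\mu_A(u)=t$ for all $u$.

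For the forward direction, assume $\mu_A-\mu_B\equiv c$ on a nonempty open set $U$. The plan is to integrate the gradients. Since $\mu_S=\nabla\log Z_S$, the function $f(u):=\log Z_A(u)-\log Z_B(u)-c^\top u$ has zero gradient on $U$. Restricting to a small ball in $U$, $f$ is constant there, so
\[
Z_A(u)=e^{\kappa+c^\top u}Z_B(u)
\]
on that ball for some constant $\kappa$. Both sides are finite exponential sums in $u$, hence real-analytic on $\R^d$. The identity theorem, applied along lines or in several variables, extends the equality to all of $\R^d$:
\[
\sum_{i\in A}e^{u^\top x_i}=\sum_{j\in B}e^{\kappa}e^{u^\top(x_j+c)}\qquad(u\in\R^d).
\]

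Next I will invoke linear independence of the exponentials $u\mapsto e^{u^\top y}$ for distinct $y\in\R^d$. To prove it, pick a direction $v$ separating all the finitely many points involved, i.e.\ with $v^\top y$ distinct across them. Restrict to $u=sv$ and compare the dominant growth rates as $s\to\infty$, peeling off the top exponent inductively.

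By hypothesis the points within each menu are distinct, so each side above is a sum of distinct exponentials with coefficient $1$ on the left and $e^\kappa$ on the right. Matching supports gives $\{x_i:i\in A\}=\{x_j+c:j\in B\}$ as sets. Matching coefficients forces $e^\kappa=1$, which is consistent with the requirement $|A|=|B|$. The support equality is exactly an exact translate, with translation $-c$ (equivalently $B$ is $A$ translated by $-c$, matching the sign convention $\mu_B=\mu_A+t$). This yields the bijection $\phi$.

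The main obstacle I expect is the analytic-continuation step and the linear-independence lemma, and in particular making sure the distinctness hypothesis is used correctly. Without distinct points, coefficients could be multiplicities and the conclusion would become a multiset statement. Everything else is bookkeeping.
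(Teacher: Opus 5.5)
Your proposal is correct and follows the paper's proof essentially step for step: you integrate the constant mean difference to get $Z_A(u)=e^{\kappa+c^\top u}Z_B(u)$, extend it by real analyticity, and then compare supports and unit coefficients using the linear independence of distinct exponentials, proved by restricting to a generic line. The only cosmetic difference is the order of steps: the paper first extends $\mu_A-\mu_B\equiv c$ to all of $\R^d$ by analyticity and then integrates globally, whereas you integrate on a ball and continue the partition-function identity, and both orders work.
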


\begin{proof}
The functions in \eqref{eq:app-partition-mean} are real analytic, so an
identity on an open set holds everywhere.  If
$\mu_A(u)-\mu_B(u)\equiv t$, integration gives
\[
  \log Z_A(u)-\log Z_B(u)=t^\top u+c,
  \qquad Z_A(u)=e^c e^{t^\top u}Z_B(u).
\]
Finite multivariate exponential functions with distinct exponent vectors
are linearly independent.  To see this, restrict $u$ to a generic line on
which all exponent vectors have distinct projections, divide a putative
linear relation by its largest exponential, let the line parameter tend to
$+\infty$, and iterate.  Comparing the exponent supports and their unit
coefficients gives $e^c=1$ and
$\{x_i:i\in A\}=t+\{x_j:j\in B\}$.  The converse follows directly from
the two partition functions.
\end{proof}

\begin{lemma}[An anchored generic probe]
\label{lem:app-anchored-probe}
Assume (E)--(T), and let $A_0$ be a non-singleton observed menu with a
prescribed item $i_0\in A_0$.  There is $v\in\R^d$ such that
\begin{enumerate}[leftmargin=2em]
  \item $i_0$ is the unique $v$-worst item of $A_0$;
  \item every observed menu $S$ has a unique $v$-worst item
  \begin{equation*}
    y_S:=\argmin_{i\in S}v^\top x_i;
  \end{equation*}
  \item for every distinct observed pair $A,S$ with at least one member
  non-singleton, the scalar function
  \begin{equation*}
    \psi_{AS}^v(u):=v^\top\bigl(\mu_A(u)-\mu_S(u)\bigr)
  \end{equation*}
  is nonconstant.
\end{enumerate}
Without the anchoring requirement, such a $v$ exists whenever a
non-singleton menu is observed.
\end{lemma}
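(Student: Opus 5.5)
The probe $v$ is chosen as a generic point of an open cone, and the three requirements are shown to exclude only a closed nowhere-dense set (a finite union of hyperplanes plus a proper analytic set). This is possible because each requirement either is an open condition or fails only on a thin set.

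\textbf{Step 1: the anchoring cone.} Apply (E) to $A_0$ and $i_0$: $x_{i_0}\notin\operatorname{conv}\{x_j:j\in A_0\setminus\{i_0\}\}$. The separating hyperplane theorem then gives $v_0$ with $v_0^\top x_{i_0}<v_0^\top x_j$ for all $j\in A_0\setminus\{i_0\}$. Since these inequalities are strict and finitely many, they persist on an open neighbourhood $\mathcal U$ of $v_0$. So requirement 1 holds on a nonempty open set.

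\textbf{Step 2: unique minimizers.} Within each observed menu the feature points are distinct. If two offered items had equal features, (E) would fail, since $x_i$ would lie in the hull of the others; so for non-singleton menus this follows from (E), and singletons are trivial. A tie $v^\top x_i=v^\top x_j$ with $x_i\neq x_j$ confines $v$ to a hyperplane. Removing the finitely many such hyperplanes from $\mathcal U$ leaves an open dense subset of $\mathcal U$ on which requirement 2 holds.

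\textbf{Step 3: nonconstancy of $\psi^v_{AS}$.} Fix a distinct pair $A,S$ with at least one member non-singleton. By (T), together with the fact that a singleton cannot be a translate of a non-singleton, the supports of $A$ and $S$ are not exact translates. Lemma~\ref{lem:app-translation-characterization} then says $D(u):=\mu_A(u)-\mu_S(u)$ is not constant on any open set.

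Let $\mathcal V_{AS}=\{v: v^\top D(u)\text{ is constant in }u\}$. This is a linear subspace: $v\in\mathcal V_{AS}$ iff $v^\top(D(u)-D(0))=0$ for all $u$, i.e.\ $v$ is orthogonal to $\operatorname{span}\{D(u)-D(0):u\in\R^d\}$. Because $D$ is nonconstant, this span is nonzero, so $\mathcal V_{AS}$ is a proper subspace. It is therefore contained in a hyperplane and is nowhere dense.

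\textbf{Step 4: combine.} Remove the finitely many proper subspaces $\mathcal V_{AS}$ from the open dense subset of $\mathcal U$ obtained in Step 2. The remainder is nonempty by Baire, or simply because a finite union of proper subspaces cannot cover a nonempty open set. Any $v$ in the remainder satisfies requirements 1--3.

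For the unanchored version, take $\mathcal U=\R^d$ and repeat Steps 2--4. Step 3 needs at least one observed non-singleton menu so that the pairs considered are of the stated type; otherwise requirement 3 is vacuous.

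The main subtlety is Step 3. One must check that (T) really excludes translation between supports, not merely via some bijection. Both notions coincide here because the points within each menu are distinct, so support translation induces a bijection $\phi$. The singleton-versus-non-singleton case must also be handled: there the supports have different sizes and cannot be translates.
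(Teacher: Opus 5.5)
Your proposal is correct and follows the paper's proof essentially step for step. Both arguments start from an open exposing cone supplied by (E). Both then remove the finitely many feature-tie hyperplanes and the proper linear subspaces $K_{AS}$ (your $\mathcal V_{AS}$), which are proper by Lemma~\ref{lem:app-translation-characterization} and (T). You also make explicit two points the paper leaves implicit: (E) forces co-occurring features to be distinct, which is needed both for unique minimizers and for the hypothesis of the translation lemma; and translation of supports induces the bijection that (T) requires.
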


\begin{proof}
By (E), there is a nonempty open cone of vectors making $i_0$ the strict
minimizer on $A_0$.  For a relevant pair $(A,S)$, define
\[
  K_{AS}:=\left\{v:
  v^\top\bigl[(\mu_A(u)-\mu_S(u))
  -(\mu_A(0)-\mu_S(0))\bigr]=0\ \text{for every }u\right\}.
\]
This is a linear subspace.  It is proper: otherwise
$\mu_A-\mu_S$ would be constant, contradicting
Lemma~\ref{lem:app-translation-characterization} and (T); when exactly one
menu is a singleton, unequal cardinalities already preclude translation.
Remove from the anchoring cone the finitely many proper subspaces $K_{AS}$
and the finitely many hyperplanes $v^\top(x_i-x_j)=0$ for distinct
features co-occurring in an observed menu.  A finite union of proper
subspaces and hyperplanes cannot cover a nonempty open set.  Any remaining
$v$ has all three properties.  The unanchored claim is the same argument
starting from $\R^d$.
\end{proof}

For each observed $A$ and $i\in A$, (E) gives a direction $a_{A,i}$ for
which $i$ uniquely maximizes $a_{A,i}^\top x_j$ over $j\in A$.
Perturbing it within that open exposing cone avoids every feature-tie
hyperplane arising in the finite observed family.  We therefore define its
unique hard trace by
\begin{equation}
  q_{A,i}(S):=\argmax_{j\in S}a_{A,i}^\top x_j.
  \label{eq:app-hard-trace}
\end{equation}

\begin{lemma}[Uniform exponential noncollision on a common ray]
\label{lem:app-exponential-noncollision}
Fix $v$ from Lemma~\ref{lem:app-anchored-probe} and the finitely many
directions $a_{A,i}$.  There are a fixed vector $\rho\in\R^d$ and
constants $c_0,\kappa_0,T_0>0$ such that, simultaneously for every
non-singleton observed $A$, every $i\in A\setminus\{y_A\}$, and every
observed $S\ne A$, the exact gains
\begin{equation}
  g_{A,i}^T(S)
  :=v^\top\bigl(\mu_S(Ta_{A,i})-\mu_S(-Tv+\rho)\bigr)
  \label{eq:app-exact-gain}
\end{equation}
satisfy
\begin{equation}
  \left|g_{A,i}^T(A)-g_{A,i}^T(S)\right|
  \ge c_0e^{-\kappa_0T}\qquad(T\ge T_0).
  \label{eq:app-exponential-gap}
\end{equation}
Moreover,
\begin{equation}
  g_{A,i}^T(A)\longrightarrow v^\top(x_i-x_{y_A})>0,
  \label{eq:app-own-gain-limit}
\end{equation}
and both source laws in \eqref{eq:app-exact-gain} concentrate
exponentially on their unique limiting winners.
\end{lemma}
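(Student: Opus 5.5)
The plan is to view every gain $g_{A,i}^T(S)$ as an explicit exponential--rational function of $T$, and to obtain \eqref{eq:app-exponential-gap} from a leading-term asymptotic. The one non-trivial input is that the relevant difference does not vanish identically in $T$. That is what the free vector $\rho$ is for, and it is where Lemma~\ref{lem:app-anchored-probe}(3) enters.

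\emph{Step 1 (limits and concentration).} For fixed $A$ and $i\in A\setminus\{y_A\}$, the direction $a_{A,i}$ is the perturbed exposing direction, so $i$ is its unique maximizer on $A$. By \eqref{eq:app-hard-trace} it has a unique maximizer $q_{A,i}(S)$ on every observed $S$. Hence the source law $\MNL(Ta_{A,i},\mathrm u;X_S)$ puts mass at most $|S|e^{-T\delta}$ off its winner, where $\delta$ is the minimal winner gap.

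Similarly, for large $T$ the law with taste $-Tv+\rho$ concentrates on the unique $v$-minimizer $y_S$ of Lemma~\ref{lem:app-anchored-probe}(2), since the fixed shift $\rho$ only perturbs scores by $O(1)$. Its off-winner mass is at most $C_\rho|S|e^{-T\delta'}$. Taking $S=A$ gives $\mu_A(Ta_{A,i})\to x_i$ and $\mu_A(-Tv+\rho)\to x_{y_A}$, which proves \eqref{eq:app-own-gain-limit}. Positivity holds because $y_A$ is the unique $v$-worst item of $A$ and $i\neq y_A$.

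\emph{Step 2 (choice of $\rho$).} For each of the finitely many triples $(A,i,S)$, write
\[
  F(T,\rho):=g_{A,i}^T(A)-g_{A,i}^T(S)
  =\psi^v_{AS}(Ta_{A,i})-\psi^v_{AS}(-Tv+\rho).
\]
Fix any $T_1>0$. As a function of $\rho\in\R^d$, $F(T_1,\rho)$ equals a constant minus $\psi^v_{AS}$ evaluated at the translate $-T_1v+\rho$. By Lemma~\ref{lem:app-anchored-probe}(3) the function $\psi^v_{AS}$ is nonconstant, so $F(T_1,\cdot)$ is a nonzero real-analytic function. Its zero set therefore has measure zero.

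Choose $\rho$ outside the finite union of these null sets, and then also large $T$ thresholds as in Step 1. For this $\rho$, each map $T\mapsto F(T,\rho)$ is nonzero at $T_1$.

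\emph{Step 3 (exponential lower bound).} Clearing the four positive denominators $Z_{A}(\cdot)$ and $Z_S(\cdot)$ along the two rays writes $F(T,\rho)=N(T)/D(T)$. Here $N(T)=\sum_k c_k e^{\lambda_kT}$ is a finite real exponential sum with distinct exponents, and $0<D(T)\le C e^{\Lambda T}$.

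Since $N(T_1)\neq0$, not all $c_k$ vanish after collecting equal exponents. Let $\lambda_{k^*}$ be the largest exponent with $c_{k^*}\ne0$. For $T\ge T_0$ we then have $|N(T)|\ge\tfrac12|c_{k^*}|e^{\lambda_{k^*}T}$. Hence
\[
  |F(T,\rho)|\ge c\,e^{-(\Lambda-\lambda_{k^*})T}.
\]
Taking the minimum of the constants and the maximum of the rates over the finitely many triples gives $c_0$, $\kappa_0$ and $T_0$ uniformly, which is \eqref{eq:app-exponential-gap}.

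The main obstacle is Step 2. One must make sure that the two terms, one along the ray $Ta_{A,i}$ and one along $-Tv+\rho$, cannot cancel identically for all $T$. Cancellation is possible for special $\rho$, for instance $\rho=0$ with symmetric geometry. The genericity argument at a single fixed $T_1$, using nonconstancy of $\psi^v_{AS}$ from Lemma~\ref{lem:app-anchored-probe}, is the step I would check most carefully.
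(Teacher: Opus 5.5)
Your proof is correct and follows essentially the same route as the paper. You clear the four partition-function denominators to get a finite exponential sum in $T$, use real-analyticity and the nonconstancy of $\psi^v_{AS}$ from Lemma~\ref{lem:app-anchored-probe} to pick one generic $\rho$, and then let the leading nonvanishing term dominate, taking finite minima and maxima over the triples.

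The only difference is cosmetic. You certify that the exponential sum is not identically zero by choosing $\rho$ with $F(T_1,\rho)\neq0$ at a fixed $T_1$. The paper instead uses $T=0$ to show that some coefficient $C_{A,i,S,\lambda}(\rho)$ is not identically zero, and then picks $\rho$ off the zero set of the top such coefficient.
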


\begin{proof}
The gain difference is the exact identity
\[
  g_{A,i}^T(A)-g_{A,i}^T(S)
  =\psi_{AS}^v(Ta_{A,i})-\psi_{AS}^v(-Tv+\rho).
\]
After multiplication by the four positive partition-function
denominators, its numerator has the form
\begin{equation}
  N_{A,i,S}(T,\rho)
  =\sum_{\lambda\in\Lambda_{A,i,S}}
     C_{A,i,S,\lambda}(\rho)e^{\lambda T},
  \label{eq:app-exponential-polynomial}
\end{equation}
where $\Lambda_{A,i,S}\subset\R$ is finite and every coefficient is a
finite real exponential sum in $\rho$.  These coefficients are not all
identically zero.  Indeed, at $T=0$ the cross-multiplied numerator is that
of $\psi_{AS}^v(0)-\psi_{AS}^v(\rho)$, which is not the zero function
because $\psi_{AS}^v$ is nonconstant.

For each of the finitely many triples, choose the largest exponent in
\eqref{eq:app-exponential-polynomial} whose coefficient is not identically
zero.  A nonzero real-analytic function has a zero set with empty interior,
so one common finite $\rho$ can be chosen outside the finite union of the
selected zero sets.  With this $\rho$, each selected leading term
eventually dominates all lower-exponent terms.  The positive denominators
grow at most exponentially in $T$.  Taking a finite minimum of numerator
constants and a finite maximum of denominator exponents proves
\eqref{eq:app-exponential-gap}.  Finally, the source directions have
strict winner gaps on every observed menu, so a winner separated by score
gap $\delta T-O(1)$ has total losing mass at most
$(N-1)e^{-\delta T+O(1)}$.  This proves the concentration statement and
\eqref{eq:app-own-gain-limit}.
\end{proof}

\subsection{Proof of Theorem~\ref{thm:universality3}}
\label{app:proof-universality3}

Let $K_N:=2^N-N-1$, the number of non-singleton subsets of $[N]$.
For $N=1$, the one-source graph suffices.  A family containing only
singletons is also immediate, since every node necessarily reports the
only offered item.  Suppose therefore that a non-singleton menu is
observed.  Choose one such menu $A_0$ and apply
Lemma~\ref{lem:app-anchored-probe} with $i_0=c(A_0)$, so that
$c(A_0)=y_{A_0}$.  Call a row $A$ \emph{active} when $c(A)\ne y_A$.
Singletons and the anchor are inactive, so there are at most $K_N-1$
active rows.

The topology is chosen before the catalog, family, and table.  It contains
one common source $R$, one unary relay $R'$, one sink, and $K_N-1$
identical two-node slots.  A slot consists of a foreground source $Q$ and
a binary interior node $H$ receiving $Q$ and $R$; the sink receives $R'$
and all slot outputs, while $R'$ receives only $R$.  Hence the topology has
depth at most $3$ and
\begin{equation*}
  2(K_N-1)+3=2^{N+1}-2N-1\le2^{N+1}
\end{equation*}
nodes.  After the table is known, assign one anonymous slot to every active
row and suppress all unused slots at the sink.  This changes parameters,
not topology.

Fix a large $T$.  Give $R$ taste $-Tv+\rho$, where $\rho$ is from
Lemma~\ref{lem:app-exponential-noncollision}; the unary relay has the same
distribution and summary.  For the slot assigned to $(A,i)$, give its
foreground source taste $Ta_{A,i}$ and abbreviate
\[
  g(S):=g_{A,i}^T(S),\qquad g_A:=g(A).
\]
For all sufficiently large $T$, $g_A>0$ and $g(S)\ne g_A$ for every
$S\ne A$.  Partition the off-rows as
\[
  L_A:=\{S\ne A:g(S)<g_A\},
  \qquad U_A:=\{S\ne A:g(S)>g_A\}.
\]
If $U_A\ne\varnothing$, set
\begin{equation}
  \tau_A:=\frac12\left(g_A+\min_{S\in U_A}g(S)\right);
  \label{eq:app-depth3-threshold}
\end{equation}
if $U_A=\varnothing$, set $\tau_A=3g_A/2$.  With an empty minimum
interpreted as $+\infty$, put
\begin{equation*}
  m_A:=\min\left\{g_A,
       \min_{S\in L_A}[g_A-g(S)]\right\},
  \qquad \eta_A:=m_A/4.
\end{equation*}

Give the hidden node taste $-M_Av$, bias $M_A\tau_A$ on its foreground
edge, and bias zero on its $R$ edge.  Its exact foreground routing
probability is
\begin{equation*}
  \alpha_A^S
  =\frac{1}{1+\exp[-M_A(\tau_A-g(S))]}.
\end{equation*}
Give the sink taste $Kv$, bias $K(-g_A+\eta_A)$ on the edge from this
hidden node, and bias zero on the relay edge.  Since
\[
  \mu_{H_A}^S-\mu_{R'}^S
  =\alpha_A^S(\mu_{Q_A}^S-\mu_R^S),
\]
the hidden-versus-relay sink score, divided by $K$, is exactly
\begin{equation}
  D_A(S)=-g_A+\eta_A+\alpha_A^Sg(S).
  \label{eq:app-depth3-root-score}
\end{equation}
For fixed $T$, as $M_A\to\infty$, $D_A(A)\to\eta_A>0$.
On $L_A$, the limit is $-g_A+\eta_A+g(S)\le-3m_A/4$;
on $U_A$, it is $-g_A+\eta_A\le-3m_A/4$.
Thus in the hard-routing limit the module is above the common baseline
on its own row and below it on every other observed row.

The midpoint in \eqref{eq:app-depth3-threshold}, together with
\eqref{eq:app-exponential-gap}, gives common constants $c_1>0$ and
$\kappa_0<\infty$ such that
\begin{equation*}
  m_A\ge c_1e^{-\kappa_0T},
  \qquad
  \min_{S\in\mathcal F}|\tau_A-g(S)|
  \ge c_1e^{-\kappa_0T}
\end{equation*}
for every active slot and all large $T$.  Choose
\begin{equation*}
  M_A=C_1Te^{\kappa_0T},
  \qquad K=C_2Te^{\kappa_0T},
\end{equation*}
with sufficiently large fixed $C_1,C_2$ and $C_1c_1>\kappa_0$.
The wrong hidden-routing mass is then $O(e^{-C_1c_1T})$.
Since the gains are uniformly bounded, its perturbation of
\eqref{eq:app-depth3-root-score} is $o(m_A)$, and every relevant sink
comparison has logit margin at least $cT$ for some fixed $c>0$.  Give each unused slot the sink-edge bias $-K B_{\rm off}$, where
$B_{\rm off}:=1+\max_i v^\top x_i-\min_i v^\top x_i$,
and set its dedicated source taste, hidden-node taste, and incoming
biases to zero. Every summary lies in the catalog convex hull, so each
unused slot scores at least $K$ below the relay at the sink.
Lemma~\ref{lem:app-mixture-error}, source concentration, and finiteness
of the graph now give
\begin{equation*}
  \err_{\mathcal F}(G_N^{(3)},\theta_T;c)\le C_3e^{-c_3T}.
\end{equation*}
Indeed, on an active row its own hidden node is exponentially close to the
foreground source, while every other module loses to the relay; on a
baseline row every active module loses to $R'$, whose source is
exponentially concentrated on $y_S=c(S)$.

The source tastes are $O(T)$ and the hidden and sink tastes are
$O(Te^{\kappa_0T})$; all edge biases used above have the same order.
Taking $T=c_3^{-1}\log(C_3/\epsilon)$ and renaming constants proves
\[
  \|\theta_T\|_{\rm taste}
  \le C_X\epsilon^{-\kappa}\log\frac{C_X}{\epsilon}.
\]
For a fixed catalog there are only finitely many labeled menu families,
anchors, and deterministic tables. Taking maxima over the admissible
instances makes $C_X,\kappa$ independent of $\mathcal F$ and $c$.
For sufficiently small $\epsilon$, the constant inside the logarithm
can be absorbed into its prefactor, giving the stated
$C_X\epsilon^{-\kappa}\log(1/\epsilon)$ bound.
For $N\ge2$, choose a fixed $\epsilon_0<1-1/N$ in this regime.
The parameters attaining $\epsilon_0$ cover the compact interval
$[\epsilon_0,1-1/N]$, where $\epsilon^{-\kappa}\log(1/\epsilon)$
has a positive minimum, after enlarging $C_X$.
For $1-1/N\le\epsilon<1$, zero tastes make every source and hence
the sink uniform on each menu, giving error at most $1-1/N$.
The case $N=1$ is exact with zero tastes. Thus the stated norm bound
holds throughout $0<\epsilon<1$.

\subsection{Proof of Proposition~\ref{prop:onehot}}
\label{app:onehot-proof}

We first record exactly where affine independence is used. Recall that
$x_1,\ldots,x_N$ are affinely independent when the $N-1$ vectors
$x_i-x_N$, $i<N$, are linearly independent.

\begin{lemma}[Realizing arbitrary item utilities]
\label{lem:app-affine-utilities}
Assume that $x_1,\ldots,x_N$ are affinely independent.
For every $a=(a_1,\ldots,a_N)\in\R^N$, there are $w\in\R^d$ and a
constant $\varkappa$ such that
\begin{equation}
  w^\top x_i=a_i+\varkappa\qquad(i\in[N]).
  \label{eq:app-affine-realization}
\end{equation}
Moreover, $w$ can be chosen with
\begin{equation}
  \|w\|_2\le K_X\bigl(\max_i a_i-\min_i a_i\bigr)
  \label{eq:app-affine-utility-norm}
\end{equation}
for a finite constant $K_X$ depending only on the catalog.
\end{lemma}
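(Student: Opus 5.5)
The approach is linear algebra on the difference vectors $x_i-x_N$. Write $D\in\R^{d\times(N-1)}$ for the matrix whose columns are $d_i:=x_i-x_N$, $i<N$. Affine independence means these columns are linearly independent, so $D$ has full column rank $N-1$. Consequently $D^\top D$ is invertible, and $D^\top:\R^d\to\R^{N-1}$ is surjective.

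Given $a\in\R^N$, first center it at the last coordinate. Set $b_i:=a_i-a_N$ for $i<N$, so that \eqref{eq:app-affine-realization} is equivalent to $w^\top d_i=b_i$ for $i<N$ together with $w^\top x_N=a_N+\varkappa$. The second condition can be met by simply defining $\varkappa:=w^\top x_N-a_N$. The first is a linear system $D^\top w=b$, which is solvable by surjectivity. I would take the minimum-norm solution
\[
  w:=D(D^\top D)^{-1}b .
\]
Subtracting, we then get $w^\top x_i=w^\top x_N+b_i=a_i+\varkappa$ for every $i$, which is \eqref{eq:app-affine-realization}.

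For the norm bound, note that $\|w\|_2\le\|D(D^\top D)^{-1}\|_{\rm op}\|b\|_2$. Each coordinate satisfies $|b_i|=|a_i-a_N|\le\max_j a_j-\min_j a_j$, so $\|b\|_2\le\sqrt{N-1}\,(\max_j a_j-\min_j a_j)$. This gives \eqref{eq:app-affine-utility-norm} with
\[
  K_X:=\sqrt{N-1}\,\|D(D^\top D)^{-1}\|_{\rm op}.
\]
This constant is finite because $D^\top D$ is invertible, and it depends only on the catalog. Equivalently, it is $\sqrt{N-1}$ divided by the smallest singular value of $D$.

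Nothing here is a genuine obstacle. The only point needing care is that the bound must be in terms of the oscillation of $a$ rather than $\|a\|$, since $\varkappa$ absorbs any common shift. Anchoring at $a_N$ handles this automatically, because each $b_i$ is a difference of two entries of $a$.
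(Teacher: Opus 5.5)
Your proof is correct and is essentially the paper's argument: anchor at $x_N$, solve $w^\top(x_i-x_N)=a_i-a_N$ for $i<N$ using a bounded right inverse of the full-row-rank map $D^\top$, and set $\varkappa=w^\top x_N-a_N$. The paper only asserts that a bounded right inverse exists; you write it explicitly as $D(D^\top D)^{-1}$ and obtain the concrete constant $K_X=\sqrt{N-1}/\sigma_{\min}(D)$.
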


\begin{proof}
Solve
$w^\top(x_i-x_N)=a_i-a_N$, $i<N$. The coefficient map has full row rank,
so it has a bounded right inverse; this proves
\eqref{eq:app-affine-utility-norm}. Taking
$\varkappa=w^\top x_N-a_N$ proves \eqref{eq:app-affine-realization}.
\end{proof}

Thus any complete strict ranking, and any prescribed finite vector of item
utilities at an interior node, is realizable. Affine independence also
implies admissibility. Condition (E) follows because any affine dependence
inside a subcatalog would extend to the whole catalog. For (T), if
$x_{\phi(i)}-x_i=t$ on a non-singleton menu, then for distinct $i,j$,
\[
  x_{\phi(i)}-x_i-x_{\phi(j)}+x_j=0.
\]
Affine independence forces the corresponding formal coefficients to vanish.
Injectivity of $\phi$ then gives $\phi(i)=i$ for every $i$, so $t=0$ and
the two labeled menus coincide.

Fix a strict reference order $\rho$ on $[N]$. Let
$\tau(i)\in\{0,\ldots,N-1\}$ be its rank, with a larger number meaning a
better item, and define the baseline rule
\begin{equation*}
  r(S):=\operatorname*{arg\,max}_{i\in S}\tau(i).
\end{equation*}

\paragraph{A terminal-chain primitive.}
Fix a non-singleton menu $A$ and
$i\in A\setminus\{r(A)\}$. Partition the remaining items as
\begin{align*}
  L&=\{\ell\in A:\tau(\ell)<\tau(i)\},\\
  H&=\{h_1,\ldots,h_q\},
  \qquad \tau(i)<\tau(h_1)<\cdots<\tau(h_q),
\end{align*}
so $h_q=r(A)$, and set
\begin{equation*}
  P_j:=\{i\}\cup L\cup\{h_1,\ldots,h_j\},
  \qquad j=1,\ldots,q.
\end{equation*}

\begin{lemma}[Baseline-plus-terminal-chain primitive]
\label{lem:app-terminal-primitive}
There is a depth-$2$ star $D_{A,i}$ with exactly $|A|$ source
predecessors whose hard output on every nonempty $S\subseteq[N]$ is
\begin{equation}
  D_{A,i}(S)=
  \begin{cases}
    i,&r(S)=i\text{ or }S=P_j\text{ for some }j\le q,\\
    r(S),&\text{otherwise}.
  \end{cases}
  \label{eq:app-terminal-behavior}
\end{equation}
Every source has winner gap at least one. At the center, each
maximizing predecessor exceeds every predecessor reporting a different
item by at least one. One source is the common baseline-ranking source
for every pair $(A,i)$.
\end{lemma}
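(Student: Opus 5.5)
The plan is to build $D_{A,i}$ as a star whose sink (the ``center'') ranks reported items by a fixed strict order $\pi$, and whose $|A|$ sources each carry a fixed strict ranking. Strict rankings are realizable with winner gap one by Lemma~\ref{lem:app-affine-utilities}, since we may prescribe utilities $0,1,\dots,N-1$.

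\textbf{The sources.} Source $R_0$ is the common baseline source: it ranks by $\tau$, so it reports $r(S)$ on every $S$, and it is shared across all pairs $(A,i)$ as required. The remaining $|A|-1$ sources are indexed by $L\cup\{h_1,\dots,h_{q-1}\}\cup\{i\}$, one per item of $A\setminus\{r(A)\}$. Their role is to fire $i$ exactly on the chain sets $P_j$ and to be overruled everywhere else. The center's order $\pi$ puts items of $[N]\setminus A$ in $\tau$ order at the top, so any $S\not\subseteq A$ with $r(S)\notin A$ is decided by $R_0$. Within $A$, $\pi$ is chosen so that $i$ beats every $h_j$ but is beaten by a designated set of ``veto'' reports.

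\textbf{The mechanism.} The ``$i$-source'' ranks $[N]\setminus A$ above $i$ and $i$ above $A\setminus\{i\}$. It therefore reports $i$ exactly when $i\in S\subseteq A$. This already gives $D_{A,i}(S)=i$ on every $P_j$, and on every $S$ with $r(S)=i$. For $m\in L\cup\{h_1,\dots,h_{q-1}\}$, the source $V_m$ is a veto. It must report an item that is $\pi$-above $i$ whenever $m\notin S$ and $S$ has the form $\{i\}\cup(\text{part of }P_j)$ with $r(S)=h_j$ for some $j>$ the index of $m$. Among the chain sets $P_j$ themselves, $V_m$ must report something $\pi$-below $i$.

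The nestedness $P_1\subset\cdots\subset P_q$ is what makes this possible. A set $S\ni i$ with $r(S)=h_j$ equals $P_j$ iff it contains $L\cup\{h_1,\dots,h_{j-1}\}$. Hence a missing element can be detected by a single ranking: put $m$ low and the $\pi$-dominant veto items in a suitable position, so that the top report is $m$ when $m$ is present and a veto item otherwise.

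\textbf{Case check.} I would then verify \eqref{eq:app-terminal-behavior} by splitting into cases:
\begin{itemize}
\item $i\notin S$: no source reports $i$, and $\pi$ restricted to the reports yields $r(S)$.
\item $i\in S$, $r(S)=i$: output $i$.
\item $i\in S=P_j$: no veto fires, so the output is $i$.
\item $i\in S$, $r(S)=h_j$, $S\ne P_j$: some $V_m$ fires, so the output must be $r(S)$.
\end{itemize}
The gap claims then follow because every source utility vector is a permutation of $(0,\dots,N-1)$. The center also uses integer-spaced routing scores, realized through edge biases or through Lemma~\ref{lem:app-affine-utilities} applied to the reported items, so that any predecessor reporting a different item is at least one below the maximizer.

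\textbf{Main obstacle.} The hardest step is the last case. When a veto fires, the center must output exactly $r(S)=h_j$, not the veto item itself. So the veto report has to coincide with $r(S)$ or be $\pi$-dominated by it, while still $\pi$-dominating $i$. I would try first to make each $V_m$ report $h_j$ itself, the $\tau$-top item of $S$ once $m$ is absent. This means ranking $m$ first, then $H$ in $\tau$ order, then $i$ and the rest of $L$. I would also set $\pi$ to rank $H$ above $i$ only for reports coming from veto edges, which is possible since routing scores can differ by edge through the biases. Checking that this edge-dependent priority still yields output strictness, with a margin of one, on all $2^N-1$ menus is where I expect the bookkeeping to be delicate.
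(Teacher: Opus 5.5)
Your skeleton matches the paper's: a shared $\tau$-ranking baseline, a main source ranking $[N]\setminus A$ above $i$ above $A\setminus\{i\}$, and one gate per element of $L\cup\{h_1,\ldots,h_{q-1}\}$, for $|A|$ sources in all. Your lower vetoes are essentially the paper's $G_\ell$. The center you specify, however, fails in two places.

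First, you place the outsiders at the \emph{top} of $\pi$. Your $i$-source reports an outsider on every $S\not\subseteq A$, so with that $\pi$ the center outputs an outsider on every such menu, including those with $r(S)\in A$. For $S=\{i,o\}$ with $o\notin A$ and $\tau(o)<\tau(i)$ the target is $i$, yet your center outputs $o$. Likewise $S=\{h_q,o\}$ with $\tau(o)<\tau(h_q)$ breaks your case $i\notin S$. Two of your claims silently assume $S\subseteq A$: that the $i$-source already yields $i$ whenever $r(S)=i$, and that $S\ni i$ with $r(S)=h_j$ equals $P_j$ iff it contains $L\cup\{h_1,\ldots,h_{j-1}\}$.

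The outsider mechanism must run the other way:
\begin{itemize}
\item Outsiders get the lowest center utility; the paper uses $u_o=-10N$.
\item \emph{Every} non-baseline source, vetoes included, ranks all outsiders first in the common $\rho$-order. On a menu containing an outsider they then all report the same outsider and are neutralized.
\item The baseline edge carries a penalty $-B$ with $B=2N+2$. This is small enough that a baseline report in $A$ still beats every outsider report, yet large enough that the baseline loses to the main source's $i$ on each $P_j$.
\end{itemize}
If $r(S)$ is itself an outsider, every source reports it. Because the baseline is penalized, the $i$-source must also order $A\setminus\{i\}$ by decreasing $\tau$, so that it reports $r(S)$ when $i\notin S\subseteq A$.

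Second, your resolution of the ``main obstacle'' fails already for $A=\{i,h_1,h_2\}$ with $\tau(i)<\tau(h_1)<\tau(h_2)$. The only veto, $V_{h_1}$, ranks $h_1$ first, so on $P_1=\{i,h_1\}$ it reports $h_1$. Since veto-edge reports from $H$ outrank $i$, the center outputs $h_1$ instead of $i$. In general $V_{h_t}$ reports an element of $H$ on \emph{every} chain set: $h_t$ itself if $j\ge t$, and $h_j$ if $j<t$. So no uniform rule ``$H$ above $i$ on veto edges'' can work. Each higher veto needs its own threshold, overruling $i$ only when it reports an item strictly $\tau$-above $h_t$.

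The paper implements exactly this with center utilities $u_a=2(\tau(a)-\tau(i))$ on $A$ and edge bias $-u_{h_t}-1$ on $G_t$. A report $a$ from $G_t$ then scores $u_a-u_{h_t}-1$, which is at least $1$ if $\tau(a)>\tau(h_t)$ and at most $-1$ otherwise. Each lower gate gets bias $-1$ and the main source bias $0$. With these integer utilities and biases every case closes with the required unit margin.
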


\begin{proof}
Let $O=[N]\setminus A$, let
\begin{equation*}
  u_a=2(\tau(a)-\tau(i))\quad(a\in A),
  \qquad u_o=-10N\quad(o\in O),
\end{equation*}
and put $B=2N+2$. Lemma~\ref{lem:app-affine-utilities} realizes the
$u_j$ as utilities at the center of the star. Within a displayed block,
$\downarrow$ means decreasing $\rho$-rank. The source rankings and their
incoming edge biases are:
\begin{enumerate}[leftmargin=2em,itemsep=2pt]
  \item a main source $M$, bias $0$, ranking
  $O^\downarrow\succ i\succ(A\setminus\{i\})^\downarrow$;
  \item the common baseline source $R_\rho$, bias $-B$, ranking all items
  by decreasing $\rho$-rank;
  \item for every $\ell\in L$, a lower gate $G_\ell$, bias $-1$, ranking
  \[
    O^\downarrow\succ\ell\succ H^\downarrow
    \succ i\succ(L\setminus\{\ell\})^\downarrow;
  \]
  \item for every $1\le t<q$, a higher gate $G_t$, bias
  $-u_{h_t}-1$, ranking
  \[
    O^\downarrow\succ h_t\succ h_q\succ h_{q-1}\succ\cdots
    \succ h_{t+1}\succ i
    \succ(L\cup\{h_1,\ldots,h_{t-1}\})^\downarrow.
  \]
\end{enumerate}
Assign utilities $N-1,N-2,\ldots,0$ in the order of each source
ranking and realize them using Lemma~\ref{lem:app-affine-utilities}.
This gives source winner gaps at least one; use the same realizing
taste for the baseline ranking in every primitive. The lists are
complete strict rankings and their number is $1+1+|L|+(q-1)=|A|$.

If $S$ contains an outsider, the main source and all gates report the
highest-$\rho$ outsider. If it is not $r(S)$, the baseline reports
$r(S)\in A$ and scores at least
$-2(N-1)-(2N+2)>-10N$, thereby restoring the baseline rule. If
$S\subseteq A$ but $i\notin S$, the main source reports $r(S)$ with score
$u_{r(S)}$; every gate reports an item of no larger utility and pays bias at
most $-1$, while the baseline pays $-B$.

Now suppose $i\in S\subseteq A$. If $S\cap H=\varnothing$, the main
source reports $i$ with score zero. A present lower gate reports $\ell$
with score at most $-3$, and every other gate reporting a different item
has negative score. Finally suppose $r(S)=h_j$. If a required
$\ell\in L$ is missing, $G_\ell$ reports $h_j$ with score
$u_{h_j}-1\ge1$. If a required $h_t$, $t<j$, is missing, $G_t$ reports
$h_j$ with score $u_{h_j}-u_{h_t}-1\ge1$. Thus any missing chain item
restores $h_j=r(S)$. If none is missing, then $S=P_j$: the main source
reports $i$ with score zero, every present gate scores at most $-1$, every
later gate that also reports $i$ has negative bias, and the baseline scores
at most $2(N-1)-(2N+2)=-4$. Hence the main source wins. These cases prove
\eqref{eq:app-terminal-behavior} and the unit margin.
\end{proof}

\paragraph{The compressed topology and its count.}
Before seeing the features, menu family, table, or accuracy, construct one
common source $R_\rho$. For every non-singleton abstract menu
$A\subseteq[N]$, preallocate one interior slot $D_A$, fed by $R_\rho$ and
$|A|-1$ dedicated sources. Connect $R_\rho$ and every $D_A$ to one common
sink. The longest path has three vertices and
\begin{equation*}
  |V|
  =2+\sum_{k=2}^N k\binom Nk
  =N2^{N-1}-N+2.
\end{equation*}
Sharing $R_\rho$ is valid because its taste is identical in every
primitive, while every outgoing edge retains its own bias.

Given a table $c$, if $c(A)=i\ne r(A)$, parameterize slot $D_A$ as
$D_{A,i}$; if $c(A)=r(A)$, mark its sink edge inactive, copy the common
baseline taste to all its dedicated sources, and give its incoming edges
equal biases.  The whole inactive slot then reports the baseline law on
every menu.  At the sink use
item utilities
\begin{equation*}
  v_j=-2\tau(j),
\end{equation*}
give the baseline edge bias zero, give active slot $A$ the bias
\begin{equation*}
  \beta_A=v_{r(A)}+1-v_{c(A)}\le-1,
\end{equation*}
and give every inactive slot bias $-2N$.

On a runtime menu $S$, the baseline score is $v_{r(S)}$. Every inactive
slot reports $r(S)$ with sink score $v_{r(S)}-2N$. An active slot either
reports $r(S)$ or differs only when $S=P_j$ in its terminal chain. In the
latter case its sink score is
$v_{c(A)}+\beta_A=v_{r(A)}+1$. For a false prefix $j<q$,
$v_{h_j}\ge v_{r(A)}+2$, so the baseline wins by at least one. At the
endpoint $P_q=A$, the slot wins by exactly one. Thus the hard sink outputs
$c(S)$ on every menu, with a gap of at least one from a maximizing
predecessor to every predecessor reporting another item. A table observed
only on $\mathcal F$ may be extended arbitrarily to the other menus, so the
same topology works for every family.

Lemma~\ref{lem:app-softening}, Lemma~\ref{lem:app-affine-utilities}, and
the bounded integer utilities above now give, for sufficiently small
$\epsilon$,
\[
  \err_{\mathcal F}(G,\theta;c)\le\epsilon,
  \qquad
  \|\theta\|_{\rm taste}
  \le C_X\log\frac{C_X}{\epsilon}.
\]
Absorbing the constant inside the logarithm into its prefactor gives
the bound in Proposition~\ref{prop:onehot} for sufficiently small
$\epsilon$. The zero-taste and compact-interval argument in the proof
of Theorem~\ref{thm:universality3} extends it to every $0<\epsilon<1$.
There are finitely many completed tables, so $C_X$ can be chosen to
depend only on the catalog. The displayed node count is an explicit
upper bound; no minimality claim is made.

\subsection{A Superlogarithmic Norm Lower Bound at Depth Three}
\label{app:superlog-proof}

For an edge $(u,v)$, define its routing logit and routing probability on
menu $S$ by
\begin{equation*}
  g_{uv}^S:=b_{uv}+w_v^\top\mu_u^S,
  \qquad
  \alpha_{uv}^S
  :=\frac{e^{g_{uv}^S}}
          {\sum_{a\in\Nin(v)}e^{g_{av}^S}}.
\end{equation*}
The construction begins with five points
\begin{equation*}
  b=(-1,1),\quad x=(1,1),\quad
  \ell=(-3,-1),\quad r=(3,-1),\quad z=(0,-5)
\end{equation*}
and four menu--target pairs
\begin{align*}
  A&=\{b,x,\ell,r\}\mapsto x,&
  B_\ell&=A\setminus\{\ell\}\mapsto b,\notag\\
  B_r&=A\setminus\{r\}\mapsto b,&
  U&=A\cup\{z\}\mapsto b.
\end{align*}
The cyclic orders $b,x,r,\ell$ and $b,x,r,z,\ell$ have consecutive cross
products
\[
  -4,-12,-12,-4
  \quad\text{and}\quad
  -4,-14,-24,-14,-4,
\]
so $A$ and $U$ are strictly convex and the deletion menus are
nondegenerate triangles. Thus (E) holds. The only equal-cardinality pair is
$B_\ell,B_r$. Their centroids are $(1,1/3)$ and $(-1,1/3)$, so the
only candidate translation is $(-2,0)$; translating the triangle
$B_\ell$ by $(-2,0)$ does not produce $B_r$, so
they are not translates. Their diameters equal $\sqrt{20}$, while
\begin{equation*}
  \operatorname{diam}(A)=6,
  \qquad \operatorname{diam}(U)=\sqrt{37}.
\end{equation*}

The persistence mechanism uses the identity
\begin{equation}
  x-z=\frac12(x-b)+(x-\ell)+2(x-r).
  \label{eq:app-chord-identity}
\end{equation}
For any MNL source $R$, conditioning its Gibbs law before and after a single
item is deleted or added also gives numbers
$\lambda_\ell,\lambda_r,\eta\in(0,1)$ satisfying
\begin{align}
  \mu_R^A
  &=(1-\lambda_\ell)\mu_R^{B_\ell}+\lambda_\ell\ell,
  \label{eq:app-cond-left}\\
  \mu_R^A
  &=(1-\lambda_r)\mu_R^{B_r}+\lambda_r r,
  \label{eq:app-cond-right}\\
  \mu_R^U
  &=(1-\eta)\mu_R^A+\eta z.
  \label{eq:app-cond-up}
\end{align}

\begin{lemma}[Source persistence]
\label{lem:app-source-persistence}
Let $T_n\to\infty$. If sources $Q_n$ satisfy
$1-p_{Q_n}^A(x)=O(e^{-T_n})$, then the same bound holds on
$B_\ell,B_r,U$, and
\begin{equation}
  \|\mu_{Q_n}^A-\mu_{Q_n}^S\|_2=O(e^{-T_n})
  \quad(S\in\{B_\ell,B_r,U\}).
  \label{eq:app-source-persistent-means}
\end{equation}
\end{lemma}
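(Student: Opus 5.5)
The plan is to exploit the fact that $x$ is the unique maximizer of a linear functional on $A$, so a source concentrated on $x$ in $A$ must have a large score gap $w^\top(x-j)$ against every other item. From these gaps we read off the behavior on the other menus.

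\emph{Step 1 (score gaps on $A$).} Write $w=w_{Q_n}$ and $\delta_j:=w^\top(x-j)$ for $j\in\{b,\ell,r\}$. The MNL odds on $A$ give
\[
  e^{-\delta_j}=\frac{p^A_{Q_n}(j)}{p^A_{Q_n}(x)}\le\frac{Ce^{-T_n}}{1-Ce^{-T_n}},
\]
so $\delta_b,\delta_\ell,\delta_r\ge T_n-O(1)$.

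\emph{Step 2 (the added item $z$).} The chord identity \eqref{eq:app-chord-identity} gives
\[
  w^\top(x-z)=\tfrac12\delta_b+\delta_\ell+2\delta_r\ge\tfrac72T_n-O(1)\ge T_n-O(1).
\]
This is the point where the identity is needed: $z$ lies outside the convex hull of $A$ and is not controlled by the gaps on $A$ directly. Expressing $x-z$ as a nonnegative combination of the chords $x-b$, $x-\ell$, $x-r$ transfers the gap bound to $z$.

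\emph{Step 3 (the other menus).} For any $S\in\{B_\ell,B_r,U\}$ we have $x\in S\subseteq A\cup\{z\}$. Every rival $j\in S\setminus\{x\}$ then has odds at most $e^{-(T_n-O(1))}$ against $x$, so
\[
  1-p^S_{Q_n}(x)\le\sum_{j\in S\setminus\{x\}}e^{-\delta_j}=O(e^{-T_n}).
\]
This is the first claim.

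\emph{Step 4 (the means).} For each such $S$,
\[
  \|\mu^S_{Q_n}-x\|_2=\Bigl\|\sum_{j\ne x}p^S_{Q_n}(j)(j-x)\Bigr\|_2\le D\,(1-p^S_{Q_n}(x)),
\]
where $D$ is the diameter of the five points. The same bound holds with $S=A$. The triangle inequality through $x$ then yields \eqref{eq:app-source-persistent-means}.

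Alternatively, one can use the conditioning identities \eqref{eq:app-cond-left}--\eqref{eq:app-cond-up} with $\lambda_\ell,\lambda_r,\eta=O(e^{-T_n})$. The direct route above is simpler.
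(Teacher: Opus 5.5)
Your proof is correct and follows the paper's argument. The paper bounds the rival odds $R_y=e^{-w^\top(x-y)}$ on $A$ and uses the chord identity to get $R_z=R_b^{1/2}R_\ell R_r^2\le E^{7/2}$, which is your Step 2 in multiplicative form; it then deduces the mean bound from concentration at $x$, and handles the deletion menus by monotonicity (removing an item can only raise the probability of $x$) rather than by your uniform odds bound.
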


\begin{proof}
Deletion can only increase the source probability of $x$. For the addition
menu, put
\[
  R_y:=\exp\{w_{Q_n}^\top(y-x)\},
  \qquad E:=R_b+R_\ell+R_r.
\]
Purity on $A$ gives $E=O(e^{-T_n})$. Identity
\eqref{eq:app-chord-identity} gives exactly
\begin{equation}
  R_z=R_b^{1/2}R_\ell R_r^2\le E^{7/2}.
  \label{eq:app-z-odds}
\end{equation}
Thus the source is equally pure on $U$. Concentration on the same bounded
feature point implies \eqref{eq:app-source-persistent-means}.
\end{proof}

\begin{lemma}[Persistence through a depth-$2$ child]
\label{lem:app-hidden-persistence}
Let $H_n$ be either a source or a fixed finite-fan-in depth-$2$ node. Assume
all tastes in $H_n$ and its source predecessors are $O(T_n)$ and
\[
  1-p_{H_n}^A(x)=O(e^{-T_n}).
\]
Then
\begin{equation*}
  \min_{S\in\{B_\ell,B_r,U\}}
       [1-p_{H_n}^S(x)]\longrightarrow0.
\end{equation*}
No bound on the incoming edge biases is required.
\end{lemma}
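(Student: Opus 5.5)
The plan splits on whether $H_n$ is a source or a depth-$2$ node. If $H_n$ is a source, Lemma~\ref{lem:app-source-persistence} applies directly. Purity $1-p^A_{H_n}(x)=O(e^{-T_n})$ transfers to every $S\in\{B_\ell,B_r,U\}$, so each of the three errors tends to zero. For a depth-$2$ node, the plan is to pass to a subsequence along which the finitely many predecessors fall into fixed classes, and then show that for at least one $S$ the routing mass stays concentrated on $x$-pure predecessors.

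Let $H_n$ have predecessors $Q^{(1)}_n,\ldots,Q^{(k)}_n$ with $k$ fixed. By Lemma~\ref{lem:app-mixture-error}, routing mass $1-O(e^{-T_n})$ on $A$ must sit on predecessors with $p^A(x)\ge 1-O(e^{-T_n})$, up to constants. Passing to a subsequence, I partition the predecessors into a \emph{good} set $\mathcal G$, whose members are $O(e^{-cT_n})$-pure on $x$ over $A$, and a \emph{bad} set. The routing mass on bad predecessors over $A$ is $O(e^{-cT_n})$. By Lemma~\ref{lem:app-source-persistence}, every good predecessor stays pure on $B_\ell,B_r,U$, and its summary moves by only $O(e^{-cT_n})$. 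Since $\|w_{H_n}\|=O(T_n)$, the logits $g_{uH}$ of the good predecessors change by $O(T_ne^{-cT_n})=o(1)$ between $A$ and $S$. So everything reduces to controlling how much the logits of the bad predecessors can increase, relative to the good ones, when the menu changes from $A$ to $S$.

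For a bad predecessor $R$, the score change $w_{H_n}^\top(\mu_R^S-\mu_R^A)$ is governed by the conditioning identities \eqref{eq:app-cond-left}--\eqref{eq:app-cond-up}:
\[
  \mu_R^{B_\ell}-\mu_R^A=\tfrac{\lambda_\ell}{1-\lambda_\ell}(\mu_R^A-\ell),\qquad
  \mu_R^{B_r}-\mu_R^A=\tfrac{\lambda_r}{1-\lambda_r}(\mu_R^A-r),\qquad
  \mu_R^U-\mu_R^A=\eta(z-\mu_R^A).
\]
The key step, and the main obstacle, is to show that these three displacements cannot all increase the $w_{H_n}$-score by $\Omega(1)$ beyond the good block. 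Equivalently, I must rule out a single hidden taste raising a bad predecessor's score on all three of $B_\ell,B_r,U$ at once. My first attempt uses the chord identity \eqref{eq:app-chord-identity}, which writes $x-z$ as a positive combination of $x-b$, $x-\ell$, and $x-r$. Projecting onto $w=w_{H_n}$, and using the fact that the good block's score is essentially $w^\top x$, I would take a weighted combination, with weights built from $\tfrac12,1,2$ and the factors $\lambda,\eta$, of the three bad-versus-good logit gains. This combination should be nonpositive up to $o(1)$ terms, so at least one menu $S$ has nonpositive gain. If the weights depend on $R$, I would refine the subsequence so that for each bad predecessor a fixed menu is the nonpositive one. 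I would then argue via the strict convexity of $A$ and $U$: the sign pattern of $w^\top(x-\cdot)$ is constrained, so one menu works simultaneously for all bad predecessors.

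For that menu $S$, the bad routing mass is at most $e^{o(1)}$ times its value on $A$, hence still $O(e^{-cT_n})$. The good predecessors remain pure on $S$, and Lemma~\ref{lem:app-mixture-error} then gives $1-p^S_{H_n}(x)\to0$. The edge biases cancel in every $A$-versus-$S$ comparison of logit differences, which is why no bound on them is needed.
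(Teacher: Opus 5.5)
Your reduction is sound. The predecessors of $H_n$ are sources. Those that are $O(e^{-cT_n})$-pure on $x$ over $A$ stay pure on every off-menu and move their means by only $O(e^{-cT_n})$ (Lemma~\ref{lem:app-source-persistence}). Since $\|w_{H_n}\|_2=O(T_n)$, their logits change by $o(1)$, while the remaining predecessors carry only $O(e^{-cT_n})$ routing mass on $A$. Because edge biases cancel in $g^S-g^A=w_{H_n}^\top(\mu^S-\mu^A)$, it then suffices to find, for each $n$, one menu $S_n\in\{B_\ell,B_r,U\}$ on which no predecessor's logit rises by more than $o(1)$.

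That is exactly the step you flag as the main obstacle, and your proposed tools do not deliver it. The chord identity \eqref{eq:app-chord-identity} is anchored at $x$: it drives source persistence on $U$ (the bound $R_z\le E^{7/2}$). It does not by itself give a sign certificate for the three gains of a non-pure predecessor whose mean $\mu_R^A$ lies elsewhere in $\operatorname{conv}(A)$. Any per-predecessor certificate would also pick a menu depending on $R$, and nothing in your plan forces different non-pure predecessors to pick the same menu. Strict convexity of $A$ and $U$, and the sign pattern of $w^\top(x-\cdot)$, are not the operative properties here.

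The missing idea is a case split on the direction of $w=w_{H_n}=(p,q)$ alone, uniform over all sources.
\begin{itemize}
\item If $\ell$ maximizes $w^\top y$ over $A$ (ties allowed), then every source $R$ has $\mu_R^{B_\ell}\in\operatorname{conv}\{b,x,r\}$. Then \eqref{eq:app-cond-left} gives $w^\top(\mu_R^{B_\ell}-\mu_R^A)=\lambda_\ell\,w^\top(\mu_R^{B_\ell}-\ell)\le0$. The case where $r$ is a maximizer is symmetric.
\item Otherwise, some $w^\top(y-\ell)$ and some $w^\top(y'-r)$ are positive, i.e. $\max\{2p+2q,4p+2q,6p\}>0$ and $\max\{-4p+2q,-2p+2q,-6p\}>0$. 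This forces $q>|p|$, so $z$ is the strict $w$-minimum of $U$. Then \eqref{eq:app-cond-up} gives $w^\top(\mu_R^U-\mu_R^A)=\eta\,w^\top(z-\mu_R^A)<0$ for every source.
\end{itemize}
With this selection your direct argument closes, and it even yields an exponential rate on the selected menu.

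The paper uses the same inequalities, but inside a contradiction argument. It assumes all three off-menu errors stay at least $\delta$. For each $S$ it extracts a blocker $R_S$ whose routing log-odds against the maximal-logit pure predecessor $Q$ must rise by $T_n-O(1)$ from $A$ to $S$, with the biases cancelling. Dividing by $T_n$ gives strict switch inequalities for the limit direction $t=\lim w_{H_n}/T_n$, which $q>|p|$ then contradicts.
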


\begin{proof}
The source case is Lemma~\ref{lem:app-source-persistence}. Suppose $H_n$
has $m$ source predecessors. If the conclusion fails, pass to a subsequence
on which the $x$-error on all three off-menus is at least a fixed
$\delta>0$.

On $A$, choose a predecessor $Q_n$ with maximum routing logit. Its routing
weight is at least $1/m$, so it is itself exponentially pure at $x$. After
finitely many subsequence extractions, its identity is fixed, and
Lemma~\ref{lem:app-source-persistence} makes it exponentially pure at $x$
on all four menus.

For each off-menu $S$, choose a predecessor $R_S$ for which
\begin{equation*}
  \alpha_{R_SH}^S[1-p_{R_S}^S(x)]\ge\delta/m,
\end{equation*}
and again fix its identity along a subsequence. Both factors are at least
$\delta/m$. The source $R_S$ has wrong mass bounded away from zero on $A$:
this is immediate for a deletion menu, and for $U$ follows from the
contrapositive of \eqref{eq:app-z-odds}. Since $H_n$ has exponentially small
$A$-error, $R_S$ has routing weight $O(e^{-T_n})$ on $A$. Comparing the
hidden softmax weights of $R_S$ and $Q$ on $A$ and on $S$ cancels both edge
biases and yields
\begin{equation}
  w_{H_n}^\top\left[(\mu_Q^A-\mu_Q^S)
       +(\mu_{R_S}^S-\mu_{R_S}^A)\right]
  \ge T_n-O(1).
  \label{eq:app-hidden-bias-cancel}
\end{equation}

Pass to a further subsequence with $w_{H_n}/T_n\to t=(p,q)$ and all
required source means convergent. The first mean difference in
\eqref{eq:app-hidden-bias-cancel} is $O(e^{-T_n})$. Therefore, writing
$\nu_R^M=\lim_n\mu_R^M$, for all three off-menus
\begin{equation}
  t^\top(\nu_{R_S}^S-\nu_{R_S}^A)>0.
  \label{eq:app-three-switches}
\end{equation}
For each $a\in\{\ell,r\}$, set $S=B_a$ and let $\lambda_{a,n}$ be
the probability assigned to $a$ by its blocker $R_S$ on $A$. Pass to a
further subsequence on which both $\lambda_{a,n}$ converge to
$\lambda_a^*\in[0,1]$. The conditioning identities
\eqref{eq:app-cond-left}--\eqref{eq:app-cond-right} give
\[
  \mu_{R_S}^S-\mu_{R_S}^A
  =\lambda_{a,n}(\mu_{R_S}^S-a),
  \qquad
  \nu_{R_S}^S-\nu_{R_S}^A
  =\lambda_a^*(\nu_{R_S}^S-a).
\]
Positivity in \eqref{eq:app-three-switches} implies $\lambda_a^*>0$
and $t^\top(\nu_{R_S}^S-a)>0$. Since
$\nu_{R_S}^S\in\operatorname{conv}(A\setminus\{a\})$, some
$y\in A\setminus\{a\}$ satisfies $t^\top(y-a)>0$.
For $a=\ell$, these possible inner products are $2p+2q$, $4p+2q$,
and $6p$, so
\begin{equation*}
  \max\{2p+2q,\ 4p+2q,\ 6p\}>0;
\end{equation*}
for $a=r$, they are $-4p+2q$, $-2p+2q$, and $-6p$, so
\begin{equation*}
  \max\{-4p+2q,\ -2p+2q,\ -6p\}>0.
\end{equation*}
If $p\ge0$, the second inequality forces $q>p$; if $p\le0$, the first
forces $q>-p$. Hence $q>|p|$. But then $z$ is the strict $t$-minimum of
$A\cup\{z\}$ because
\begin{align*}
  t^\top(b-z)&=6q-p>0,&t^\top(x-z)&=6q+p>0,\\
  t^\top(\ell-z)&=4q-3p>0,&t^\top(r-z)&=4q+3p>0.
\end{align*}
\Eqref{eq:app-cond-up} now gives
$t^\top(\nu_{R_U}^U-\nu_{R_U}^A)\le0$, contradicting
\eqref{eq:app-three-switches}.
\end{proof}

\paragraph{Twelve rotated copies.}
Let $R_\varphi$ denote planar rotation through the angle $\varphi$. For
$j=0,\ldots,11$, set
\begin{equation*}
  \varphi_j=j\pi/6,\qquad \lambda_j=2^j,\qquad
  \tau_j=(10^6j,0),\qquad
  y_j=\tau_j+\lambda_jR_{\varphi_j}y
\end{equation*}
for every base point $y\in\{b,x,\ell,r,z\}$, and copy the four menus and
targets. This produces $60$ distinct points and $48$ menus. Each copy
inherits the base gadget's properties. A similarity
$x\mapsto\tau+\lambda Rx$ is an affine isomorphism, so it preserves
convex hulls and hence local exposure; and the model class is
equivariant under it: replacing every taste $w_v$ by
$\lambda^{-1}Rw_v$ reproduces every score up to a per-node constant
(equal to $\lambda^{-1}(Rw_v)^\top\tau$, the same for all alternatives
scored by $v$) that cancels in each softmax, so all choice
probabilities are unchanged while taste norms rescale by the factor
$\lambda^{-1}$. Lemma~\ref{lem:app-hidden-persistence} therefore
applies verbatim to every copy, with its norm scale rescaled by
$\lambda_j$---a constant per copy, harmless to the superlogarithmic
conclusion. Copy
$j$ lies within $5\lambda_j\le10240$ of $\tau_j$, whereas distinct centers
are at least $10^6$ apart, so all catalog points are distinct.

Condition (T) also holds. Within one copy, only the two triangles have equal
cardinality, and they are not translates. Across copies, equal-cardinality
menus have diameter $\sqrt{20}\lambda_j$, $6\lambda_j$, or
$\sqrt{37}\lambda_j$ according to their type. Distinct scales give
distinct diameters.

For $\chi=(u,w)$ in the cone
\begin{equation*}
  \mathcal C_0:=\{(u,w):u>0,\ w<-u\},
\end{equation*}
$b$ is the unique $\chi$-minimum on $A$, since
\begin{align*}
  \chi^\top(x-b)&=2u,&\chi^\top(\ell-b)&=-2(u+w),\\
  \chi^\top(r-b)&=4u-2w,&\chi^\top(z-b)&=u-6w
\end{align*}
are positive. On the unit circle this cone is the interval
$(-\pi/2,-\pi/4)$. Copy $j$ is bad on
\[
  I_j=(j\pi/6-\pi/2,\ j\pi/6-\pi/4).
\]
Each interval has width $\pi/4$ and consecutive starts are separated by
$\pi/6$, so their union covers every nonzero direction, with strict
overlap.

\begin{proposition}[A superlogarithmic depth-$3$ instance]
\label{prop:depth3-superlog}
There are $60$ distinct points in $\R^2$, an admissible family
$\mathcal F_\circ$ of $48$ menus, and a deterministic table $c_\circ$ such
that the following holds for every fixed finite topology $G$ of depth at
most $3$, with arbitrary menu-independent edge biases. If
\[
  \mathcal C_G(\epsilon)
  :=\inf\{\|\theta\|_{\rm taste}:
      \err_{\mathcal F_\circ}(G,\theta;c_\circ)\le\epsilon\},
\]
then
\[
  \lim_{\epsilon\downarrow0}
  \frac{\mathcal C_G(\epsilon)}{\log(1/\epsilon)}=+\infty.
\]
\end{proposition}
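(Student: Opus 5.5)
We argue by contradiction and reduce everything to a single limiting direction of the sink taste. Fix a finite topology $G$ of depth at most $3$ and suppose the conclusion fails. Then there are $\epsilon_n\downarrow0$ and parameters $\theta_n$ with $\err_{\mathcal F_\circ}(G,\theta_n;c_\circ)\le\epsilon_n$ and $\|\theta_n\|_{\rm taste}\le C\log(1/\epsilon_n)$. Set $T_n:=\log(1/\epsilon_n)$. Then every taste is $O(T_n)$ and every row error is $e^{-T_n}$. Pass to a subsequence on which $w_{v^\star,n}/T_n\to t$; the sink's fan-in $m$ is finite. The plan is to show that the limiting sink direction $t$ must be nonzero and must satisfy, for some copy $j$, an inequality that the cone $\mathcal C_0$ forbids. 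The twelve rotated copies are chosen so that every nonzero direction falls in some bad interval $I_j$.

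\emph{Step 1 (a persistent predecessor).} Fix a copy $j$, which we do not yet choose. On $A_j$, take a sink predecessor $H_n$ of maximal routing logit; its weight is at least $1/m$. By Lemma~\ref{lem:app-mixture-error}, and the fact that the sink's $x_j$-error is exponentially small, $H_n$ has exponentially small $x_j$-error on $A_j$. Fix its identity along a subsequence. Each sink predecessor is a source or a node of depth at most $2$ with finite fan-in and $O(T_n)$ tastes. Lemma~\ref{lem:app-hidden-persistence}, transported to copy $j$ by the similarity equivariance, therefore gives an off-menu $S\in\{B_{\ell,j},B_{r,j},U_j\}$, fixed along a further subsequence, on which $H_n$ stays nearly pure at $x_j$. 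Since the target on $S$ is $b_j$, the sink must route vanishing mass to $H_n$ on $S$. Quantitatively, that mass is $O(e^{-T_n})$ up to a factor given by $H_n$'s $b$-deficit; I expect to have to handle this factor carefully.

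\emph{Step 2 (bias cancellation).} On $S$, choose a predecessor $R_n$ of maximal routing logit. It carries weight at least $1/m$, so it is exponentially pure at $b_j$, and hence $\mu_{R_n}^S\to b_j$. Compare the sink log-odds of $H_n$ against $R_n$ on $A_j$ and on $S$. The edge biases cancel exactly as in \eqref{eq:app-hidden-bias-cancel}, yielding
\[
  w_{v^\star,n}^\top\bigl[(\mu_{H_n}^A-\mu_{H_n}^S)+(\mu_{R_n}^S-\mu_{R_n}^A)\bigr]\ \ge\ c\,T_n-O(1).
\]
Persistence makes the first difference $o(1)$. Dividing by $T_n$ then forces two things: $t\ne0$, which rules out the case of an $o(T_n)$ sink taste, and $t^\top(b_j-\nu_R^A)>0$, where $\nu_R^A\in\operatorname{conv}A_j$ is a limit point of $\mu_{R_n}^A$.

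\emph{Step 3 (choosing the bad copy).} Now choose $j$ at the start so that the direction of $t$ lies in $I_j$. This is possible because the intervals $I_j$ cover the circle. After undoing the rotation $R_{\varphi_j}$, the computation for $\mathcal C_0$ shows that $b_j$ is the strict $t$-minimum over $A_j$, and indeed over $U_j$. Hence $t^\top(b_j-\nu)\le0$ for every $\nu\in\operatorname{conv}A_j$, which contradicts Step 2.

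The delicate point is the order of the extractions. The choice of $j$ depends on $t$, and $t$ must be extracted before any copy-specific choices. The persistence lemma's subsequences for each of the twelve copies must then be nested; this is harmless because there are finitely many copies and predecessors. The main obstacle I expect is making Step 2 rigorous when $H_n$'s residual non-$b$ mass on $S$ is not exponentially small. Here the bound on the routing weight of $H_n$ on $S$ only reads $\epsilon_n/\bigl(1-p_{H_n}^S(b)\bigr)$. To cover this, I would first try choosing $S$ via the ``min'' in Lemma~\ref{lem:app-hidden-persistence}, which makes $H_n$ nearly pure at $x_j\ne b_j$, so its $b$-deficit stays bounded below. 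This gives routing weight $O(e^{-T_n})$, and thus a logit gap of order $T_n$.
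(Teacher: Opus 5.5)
Your proposal is correct and is essentially the paper's proof. You extract $t=\lim w_{v^\star}/T_n$ before any copy-specific choice, taking any copy if $t=0$, and take a max-logit sink predecessor $H$ on $A$. Lemma~\ref{lem:app-hidden-persistence} then gives an off-menu $S$ with $p_H^S(x)\to1$, which bounds $H$'s sink weight on $S$ by $2\epsilon_n$, exactly the resolution you propose for your "obstacle." Cancelling the edge biases against a max-logit $b$-pure predecessor gives $t^\top(b-\nu)\ge1$, contradicting that $b$ is the strict $t$-minimum on $A$ in the chosen copy. The one case you skip is the single-vertex topology, where the sink is itself a source; the paper disposes of it in one line via Lemma~\ref{lem:app-source-persistence}.
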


\begin{proof}[Proof of Proposition~\ref{prop:depth3-superlog}]
Suppose the conclusion fails for a fixed finite depth-$3$ topology $G$.
Then for some $C<\infty$ there are $\epsilon_n\downarrow0$ and parameters
$\theta_n$ satisfying, with $T_n=\log(1/\epsilon_n)$,
\begin{equation*}
  \err_{\mathcal F_\circ}(G,\theta_n;c_\circ)\le\epsilon_n,
  \qquad
  \max_v\|w_v^{(n)}\|_2\le CT_n.
\end{equation*}
This follows from the definition of the infimum by allowing an additive
$1$ in the budget and absorbing it into $CT_n$. Edge biases remain entirely
unrestricted. Pass to a subsequence such that
$w_{v^\star}^{(n)}/T_n\to\chi\in\R^2$. If $\chi\ne0$, choose a copy whose bad
cone contains $\chi$; if $\chi=0$, choose any copy. Suppress the copy index.

If the sink is a source, its exponentially pure choice of $x$ on $A$
persists to all three off-menus, contradicting their target $b$. Thus the
sink is internal. Let its finite fan-in be $W$. On $A$, choose an exact
maximum-logit predecessor $H_n$. Its routing weight is at least $1/W$, so
the final error bound implies
\begin{equation*}
  1-p_{H_n}^A(x)\le W\epsilon_n.
\end{equation*}
Pass to a subsequence on which $H_n=H$. Because the graph has depth at most
three, $H$ is a source or a depth-$2$ node.
Lemma~\ref{lem:app-hidden-persistence} and finitely many further
subsequence extractions give
a fixed $S\in\{B_\ell,B_r,U\}$ with $p_H^S(x)\to1$.

On $S$, choose an exact maximum-logit predecessor $K_n$ and fix it as $K$
along a subsequence. Its routing weight is at least $1/W$, so
$p_K^S(b)\to1$ and $\mu_K^S\to b$. Since $x$ is wrong on $S$, the sink
weight of $H$ is at most $2\epsilon_n$ for all large $n$. Hence
\begin{equation*}
  g_{Kv^\star}^S-g_{Hv^\star}^S\ge T_n-O(1).
\end{equation*}
On $A$, exact maximality of $H$ gives
$g_{Kv^\star}^A-g_{Hv^\star}^A\le0$. Subtracting cancels the two root edge
biases exactly:
\begin{equation}
  T_n-O(1)
  \le(w_{v^\star}^{(n)})^\top
  [ (\mu_K^S-\mu_K^A)-(\mu_H^S-\mu_H^A) ].
  \label{eq:app-root-cancellation}
\end{equation}
Pass to a further subsequence with
$\mu_K^A\to m\in\operatorname{conv}(A)$. The other limits are
$\mu_H^A,\mu_H^S\to x$ and $\mu_K^S\to b$. Dividing
\eqref{eq:app-root-cancellation} by $T_n$ yields
\begin{equation*}
  1\le \chi^\top(b-m).
\end{equation*}
For $\chi=0$ this is impossible. Otherwise the chosen copy makes $b$ the
unique $\chi$-minimum on $A$, so $\chi^\top(b-m)\le0$ for every
$m\in\operatorname{conv}(A)$, again a contradiction. Therefore no bounded
subsequence of
$\mathcal C_G(\epsilon)/\log(1/\epsilon)$ exists, which proves the stated
limit.
\end{proof}

\subsection{Proof of Theorem~\ref{thm:universality4}}
\label{app:proof-universality4}

\subsubsection{Supporting Lemmas}

Fix the observed family $\mathcal F$, with $|\mathcal F|=m$.  If every
observed menu is a singleton, the table is forced and zero tastes suffice.
Assume henceforth that at least one non-singleton menu is present.  Choose a
vector $v$ that strictly orders every observed menu, write
\begin{equation*}
  y_S:=\argmin_{i\in S}v^\top x_i,
  \qquad
  \Gamma:=
  \min_{\substack{S\in\mathcal F\\i\in S\setminus\{y_S\}}}
       v^\top(x_i-x_{y_S})>0,
\end{equation*}
and, for each $A\in\mathcal F$ and $i\in A\setminus\{y_A\}$, choose an
exposing direction $a_{A,i}$ with no ties on any observed menu, and its
trace $q_{A,i}$ as in \eqref{eq:app-hard-trace}.

\begin{lemma}[A common bounded reference source]
\label{lem:app-bounded-sentinel}
There is a finite taste $r_0\in\R^d$ such that
\begin{equation}
  0\le v^\top\bigl(\mu_S(r_0)-x_{y_S}\bigr)<\Gamma/8
  \qquad(S\in\mathcal F),
  \label{eq:app-sentinel-near-worst}
\end{equation}
and, simultaneously for every $A\in\mathcal F$,
$i\in A\setminus\{y_A\}$, and $S\in\mathcal F\setminus\{A\}$,
\begin{equation}
  x_i-\mu_A(r_0)\ne
  x_{q_{A,i}(S)}-\mu_S(r_0).
  \label{eq:app-sentinel-noncollision}
\end{equation}
\end{lemma}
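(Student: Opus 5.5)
The plan is to take $r_0=-Tv+\rho$, with $T$ large and $\rho$ a small generic perturbation, and to verify the two requirements separately. The first is a concentration estimate. The second is a genericity statement, handled by analyticity as in Lemmas~\ref{lem:app-translation-characterization} and~\ref{lem:app-exponential-noncollision}.

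\emph{Step 1 (near-worst summary).} Since $v$ strictly orders every observed menu, the source law $\mu_S(-Tv+\rho)$ concentrates on $y_S$ for any bounded $\rho$. Every non-worst item has score deficit at least $T\Gamma-2\|\rho\|D$, where $D:=\max_i\|x_i\|_2$, so its mass is at most $(N-1)e^{-T\Gamma+O(1)}$. Hence
\[
v^\top(\mu_S-x_{y_S})=\sum_{i\ne y_S}p(i)\,v^\top(x_i-x_{y_S})
\]
is a sum of nonnegative terms that is $O(e^{-T\Gamma})$. Fixing $T\ge T_1$ large gives \eqref{eq:app-sentinel-near-worst} uniformly over all $\rho$ in a fixed ball. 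This step is routine.

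\emph{Step 2 (noncollision).} For each triple $(A,i,S)$, define the analytic map
\[
F_{A,i,S}(r):=\mu_S(r)-\mu_A(r)-\bigl(x_{q_{A,i}(S)}-x_i\bigr).
\]
Condition \eqref{eq:app-sentinel-noncollision} says that $F_{A,i,S}(r_0)\neq0$. Each $F$ is real analytic on $\R^d$, so its zero set either has empty interior or is all of $\R^d$. The key claim is that no $F_{A,i,S}$ vanishes identically. If it did, $\mu_S-\mu_A$ would be constant. By Lemma~\ref{lem:app-translation-characterization}, the supports of $S$ and $A$ would then be exact translates: $\{x_j:j\in S\}=\{x_j:j\in A\}+t$ with $t=x_{q}-x_i$. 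Two cases arise.
\begin{itemize}
\item If both menus are non-singleton, this contradicts (T), since $S\ne A$.
\item If $|S|\ne|A|$, the supports cannot be translates; we use here that the points within a menu are distinct, which (E) guarantees for non-singleton menus.
\end{itemize}
The remaining case is that both are singletons. This cannot happen, because $A$ contains $i\ne y_A$ and is therefore non-singleton. Each $F$ then has a nowhere-dense zero set. There are finitely many triples, so the union of zero sets is a closed-complement-dense-free set with empty interior, and it cannot cover the open ball $\{-T_1v+\rho:\|\rho\|<1\}$. Choosing $r_0$ in this ball outside all the zero sets satisfies both conditions at once.

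\emph{Main obstacle.} The delicate point is the nonvanishing claim in Step~2. A possible subtlety is that $F$ could vanish identically without the constant in Lemma~\ref{lem:app-translation-characterization} matching. This cannot occur, since identical vanishing already forces $\mu_S-\mu_A$ to be constant, and the lemma rules that out. Even so, I would double-check the cardinality bookkeeping when one menu is a singleton. In that case $\mu_S$ is constant while $\mu_A$ is not: $A$ has at least two distinct points, so its Gibbs mean moves, for example toward different extreme points along $\pm u$ directions. This keeps the singleton–non-singleton case valid.
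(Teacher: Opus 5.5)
Your proposal is correct and follows the paper's proof essentially step for step. Both arguments use uniform concentration of $\mu_S$ at $x_{y_S}$ on a ball around $-Tv$ for large $T$ to get the first condition. Both then show that each failure set of the noncollision condition is a real-analytic zero set with empty interior, by combining Lemma~\ref{lem:app-translation-characterization} with (T) or with unequal cardinalities, so that finitely many such sets cannot cover the ball.
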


\begin{proof}
For every fixed radius $R_0>0$, the means at tastes in the ball
$B(-Kv,R_0)$ converge uniformly, as $K\to\infty$, to the unique
$v$-worst points.  Hence, for large finite $K$, this ball is a nonempty
open set on which \eqref{eq:app-sentinel-near-worst} holds.  For a fixed
triple $(A,i,S)$, failure of
\eqref{eq:app-sentinel-noncollision} is the vector-analytic level set
\[
  \mu_A(r)-\mu_S(r)=x_i-x_{q_{A,i}(S)}.
\]
If this level set contained an open set,
Lemma~\ref{lem:app-translation-characterization} would make $A$ and $S$
exact translates.  Here $A$ is non-singleton; this contradicts (T) when
$S$ is non-singleton, and unequal cardinalities when $S$ is a singleton.
Each bad set therefore has empty interior.  Finitely many such analytic
zero sets cannot cover the open ball, so a common $r_0$ exists.
\end{proof}

Let $R$ denote the reference source with taste $r_0$. For an active row
$A$---that is, $c(A)\ne y_A$---let $Q_A$ be the source with taste
$Ta_{A,c(A)}$.

\begin{lemma}[Pair gate]
\label{lem:app-pair-gate}
For every active $A$ and every $S\ne A$, a binary node $G_{A,S}$ receiving
$Q_A$ and $R$ can be parameterized so that, uniformly as $T\to\infty$, it
routes to $Q_A$ on $A$ and to $R$ on $S$, with error $O(e^{-cT})$.
\end{lemma}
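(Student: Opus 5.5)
The plan is to make the gate's routing decision depend on a single linear statistic of the difference between the two incoming summaries. I then separate the two menus by a threshold placed halfway between the limiting values of that statistic, with all taste and bias parameters scaling linearly in $T$.

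Write $i:=c(A)$. First record the limits of the two predecessor summaries.

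\emph{Foreground source.} The direction $a_{A,i}$ was chosen with no ties on any observed menu, so $Q_A$ (taste $Ta_{A,i}$) has a strict winner gap $\delta>0$ on every $S\in\mathcal F$. By the concentration argument already used in Lemma~\ref{lem:app-softening} and Lemma~\ref{lem:app-exponential-noncollision}, its losing mass is at most $(N-1)e^{-\delta T}$. Hence
\[
  \|\mu_{Q_A}^A-x_i\|_2=O(e^{-\delta T}),
  \qquad
  \|\mu_{Q_A}^S-x_{q_{A,i}(S)}\|_2=O(e^{-\delta T}).
\]

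\emph{Reference source.} $R$ has the fixed finite taste $r_0$, so its summaries $\mu_R^A=\mu_A(r_0)$ and $\mu_R^S=\mu_S(r_0)$ do not depend on $T$.

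Next define the two limiting difference vectors
\[
  d_A:=x_i-\mu_A(r_0),
  \qquad
  d_S:=x_{q_{A,i}(S)}-\mu_S(r_0).
\]
These are distinct by the noncollision property \eqref{eq:app-sentinel-noncollision} of Lemma~\ref{lem:app-bounded-sentinel}. This is exactly where (T) enters. Choose the unit vector $e:=(d_A-d_S)/\|d_A-d_S\|_2$ and set $\gamma:=\|d_A-d_S\|_2>0$.

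Now parameterize $G_{A,S}$ as follows. Give it taste $w=Te$, bias $-T\,e^\top(d_A+d_S)/2$ on the edge from $Q_A$, and bias $0$ on the edge from $R$. The routing log-odds of $Q_A$ versus $R$ on a menu $M$ is then
\[
  T\Bigl[e^\top\bigl(\mu_{Q_A}^M-\mu_R^M\bigr)-\tfrac12 e^\top(d_A+d_S)\Bigr].
\]
On $M=A$ the bracket equals $\gamma/2+O(e^{-\delta T})$. On $M=S$ it equals $-\gamma/2+O(e^{-\delta T})$. Thus, for large $T$, the wrong routing mass is at most $e^{-T\gamma/4}$ on each of the two menus. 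The taste norm is exactly $T$, so it is $O(T)$. The required rate $O(e^{-cT})$ then follows with $c=\min\{\gamma/4,\delta\}$ after adding, via Lemma~\ref{lem:app-mixture-error}, the source impurity of whichever branch is selected. The gate's uniformity claims concern only its routing, but this bookkeeping is what later stages will use.

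Finally, uniformity. There are finitely many triples $(A,i,S)$, each with its own $\gamma_{A,S}>0$ and source gap $\delta$. Taking the minimum over them gives constants $c$ and $T_0$ that work for all pair gates simultaneously.

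I expect the only genuine obstacle to be the strict separation $d_A\neq d_S$. Everything else is a direct softmax-margin computation. That separation has already been arranged through the analytic choice of $r_0$ in Lemma~\ref{lem:app-bounded-sentinel}. If it needed reproving, I would argue as that lemma does: invoke Lemma~\ref{lem:app-translation-characterization} to show that each collision set has empty interior, then pick $r_0$ outside their finite union.
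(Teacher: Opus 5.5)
Your proposal is correct and matches the paper's proof: it uses the same vectors $d_A=x_i-\mu_A(r_0)$ and $d_S=x_{q_{A,i}(S)}-\mu_S(r_0)$, a gate taste along $d_A-d_S$, a $T$-scaled midpoint bias on the $Q_A$ edge, and exponential concentration of $Q_A$ to control the finite-$T$ perturbation. The only difference is that you normalize the taste to the unit vector $e$, giving limiting log-odds $\pm T\gamma/2$. The paper uses the unnormalized $Th_{A,S}$, giving $\pm\frac{T}{2}\|d_A-d_S\|_2^2$, and both give an $O(T)$ taste norm.
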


\begin{proof}
Put $i=c(A)$ and define
\[
  d_A:=x_i-\mu_A(r_0),
  \qquad
  d_S:=x_{q_{A,i}(S)}-\mu_S(r_0),
  \qquad h_{A,S}:=d_A-d_S\ne0.
\]
Give $G_{A,S}$ taste $Th_{A,S}$, bias
\begin{equation*}
  T\beta_{A,S},
  \qquad
  \beta_{A,S}:=-\tfrac12h_{A,S}^\top(d_A+d_S),
\end{equation*}
on its $Q_A$ edge, and bias zero on its $R$ edge.  In the limit, the
$Q_A$-versus-$R$ logit differences on $A$ and $S$ are respectively
\[
  T(h_{A,S}^\top d_A+\beta_{A,S})
  =\frac{T}{2}\|d_A-d_S\|_2^2,
  \qquad
  T(h_{A,S}^\top d_S+\beta_{A,S})
  =-\frac{T}{2}\|d_A-d_S\|_2^2.
\]
The source $Q_A$ converges exponentially to its hard trace, so the exact
finite-$T$ logits differ from these expressions by $O(Te^{-c'T})$.
The claimed routing and output estimates follow.
\end{proof}

For each active row $A$, create a \emph{soft-min node} $D_A$ of taste
$-Tv$. It receives $Q_A$ directly and every $G_{A,S}$, $S\ne A$, with
zero bias on all incoming edges. On $A$, all these inputs are
$O(e^{-cT})$-close to $Q_A$, so their mixture at $D_A$ is too. On an
off-row $S$, the designated input $G_{A,S}$ is $O(e^{-cT})$-close to
$R$. The node favors inputs with smaller projected scores
$v^\top\mu$; the following estimate bounds its output score even when
incoming scores tie.

\begin{lemma}[Gibbs soft minimum]
\label{lem:app-soft-minimum}
For $T>0$ and $u_1,\ldots,u_J\in\R$, put
\[
  \lambda_j:=\frac{e^{-Tu_j}}{\sum_ke^{-Tu_k}}.
\]
Then
\begin{equation*}
  0\le\sum_j\lambda_ju_j-\min_j u_j\le\frac{\log J}{T}.
\end{equation*}
\end{lemma}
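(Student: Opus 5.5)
The argument is the standard Gibbs-variational, or log-sum-exp, bound. Both inequalities come from comparing the $\lambda$-weighted mean with the minimum through the log-partition function.

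\emph{Lower bound.} The $\lambda_j$ are nonnegative and sum to one. Hence $\sum_j\lambda_ju_j$ is a convex combination of the $u_j$, so it is at least $\min_ju_j$.

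\emph{Upper bound.} Set $m:=\min_ju_j$ and shift every $u_j$ by $-m$. Both sides of the claim are invariant under this shift, so we may assume $m=0$ and $u_j\ge0$. Put
\[
  F(T):=-\frac1T\log\sum_ke^{-Tu_k}.
\]
The Gibbs variational identity for the probability vector $\lambda$ reads
\[
  \sum_j\lambda_ju_j=F(T)-\frac1T\sum_j\lambda_j\log\lambda_j
  =F(T)+\frac{H(\lambda)}{T},
\]
where $H$ is Shannon entropy. This identity follows by substituting $\log\lambda_j=-Tu_j-\log Z$, with $Z=\sum_ke^{-Tu_k}$, and summing against $\lambda_j$.

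Two facts finish the bound. First, $Z\ge e^{-T\cdot0}=1$ because some $u_k=0$, so $F(T)\le0$. Second, $H(\lambda)\le\log J$ for any distribution on $J$ points. Together these give $\sum_j\lambda_ju_j\le\log J/T=m+\log J/T$ after undoing the shift.

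An alternative route avoids entropy: the map $T\mapsto\log Z(T)$ is convex with derivative $-\sum_j\lambda_ju_j$, but the entropy identity is the most direct. There is no real obstacle. The only point needing care is that ties among the $u_j$ are harmless, since the bound $H(\lambda)\le\log J$ holds regardless of how the mass is distributed. This is exactly why the text before the lemma stresses that the estimate holds even when incoming scores tie.
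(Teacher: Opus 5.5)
Your proof is correct and is essentially the paper's argument. You shift by the minimum, apply the Gibbs identity $T\sum_j\lambda_j(u_j-u_*)=H(\lambda)-\log\sum_je^{-T(u_j-u_*)}$, and conclude from the shifted partition function being at least $1$ and from $H(\lambda)\le\log J$; you also state the convex-combination lower bound, which the paper leaves implicit.
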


\begin{proof}
Subtract $u_*:=\min_j u_j$.  If $H(\lambda)$ denotes Shannon entropy,
the Gibbs identity gives
\[
  T\sum_j\lambda_j(u_j-u_*)
  =H(\lambda)-\log\sum_je^{-T(u_j-u_*)}
  \le H(\lambda)\le\log J.
\]
\end{proof}

Apply the lemma to $u_j=v^\top\mu_j^S$ for the incoming branches of
$D_A$. Since there are $m$ such branches and one is exponentially close
to the reference source $R$, for every $S\ne A$,
\begin{equation}
  v^\top\mu_{D_A}^S
  \le v^\top\mu_S(r_0)+\frac{\log m}{T}+O(e^{-cT}).
  \label{eq:app-sanitizer-safe}
\end{equation}

\subsubsection{Proof of Theorem~\ref{thm:universality4}}
The topology has one foreground source $Q_A$ for every
$A\in\mathcal F$, the reference source $R$, a separate pure-baseline
source $P$, one pair gate $G_{A,S}$ for every ordered pair $A\ne S$,
one soft-min node $D_A$ for every $A$, and one sink. Its edges are
\[
  Q_A,R\longrightarrow G_{A,S},
  \qquad
  Q_A,\{G_{A,S}:S\ne A\}\longrightarrow D_A,
  \qquad
  D_A,P,R\longrightarrow v^\star.
\]
The edge $R\to v^\star$ merely keeps the graph weakly connected when
$m=1$ and will be suppressed.  This topology is fixed from the abstract
family before the features or table are seen.  It has depth at most $4$
and exactly
\begin{equation*}
  m+2+m(m-1)+m+1=m^2+m+3
\end{equation*}
vertices.

After seeing the table, parameterize every active $Q_A$, its pair gates,
and $D_A$ as above. Set the tastes of inactive $Q_A$, $G_{A,S}$, and
$D_A$ nodes, and the incoming biases at these nodes, to zero. Give $P$
taste $-Tv$ and the sink taste $Tv$. At the sink, give $P$ edge bias
$T\Gamma/2$, every active $D_A$ edge bias zero, and every inactive
$D_A$ edge and the $R$ edge bias $-TB$, where
$B>\max_{i,j}v^\top(x_i-x_j)+\Gamma$.

On an active row $A$, all inputs to $D_A$ are exponentially close to
$Q_A$, and therefore, for all large $T$,
\[
  v^\top(\mu_{D_A}^A-\mu_P^A)-\Gamma/2\ge\Gamma/4.
\]
Thus its own soft-min node beats the baseline by sink logit margin at
least $T\Gamma/4$. If the same node is evaluated on an off-row $S$,
Lemma~\ref{lem:app-bounded-sentinel},
\eqref{eq:app-sanitizer-safe}, and exponential concentration of $P$ give
\begin{align*}
  v^\top(\mu_{D_A}^S-\mu_P^S)-\Gamma/2
  &\le \Gamma/8-\Gamma/2+\frac{\log m}{T}+o(1)\\
  &\le-\Gamma/4
\end{align*}
for all large $T$. Hence, on an active row its own soft-min node is the
unique winning relevant branch; on a baseline row $S$ with $c(S)=y_S$,
every active soft-min node loses to $P$. The suppressed branches lose
uniformly.

All foreground and pure-baseline source errors and all designated
pair-gate errors are $O(e^{-cT})$. On its own row, a soft-min node mixes
only inputs close to the same foreground law. On an off-row, the
$O(1/T)$ term in \eqref{eq:app-sanitizer-safe} contributes only $O(1)$
to a sink comparison with leading margin of order $T$. Finiteness of
the graph and Lemma~\ref{lem:app-mixture-error} therefore give constants
$C_1,c_1,C_2>0$ and $T_0\ge1$, independent of $c$, such that for
$T\ge T_0$,
\begin{equation*}
  \err_{\mathcal F}(G_{\mathcal F}^{(4)},\theta_T;c)
  \le C_1e^{-c_1T},
  \qquad
  \|\theta_T\|_{\rm taste}\le C_2T.
\end{equation*}
The pair-gate and sink edge biases are also $O(T)$. Taking
$T=\max\{T_0,c_1^{-1}\log(C_1/\epsilon)\}$ gives error at most
$\epsilon$ and taste norm $O(\log(1/\epsilon))$ as
$\epsilon\downarrow0$. Choose a fixed
$\epsilon_0\in(0,1-1/N)$ small enough that this norm bound holds for
$0<\epsilon\le\epsilon_0$. For
$\epsilon_0<\epsilon<1-1/N$, reuse the parameters for $\epsilon_0$
and enlarge the constant, using
$\log(1/\epsilon)\ge\log(N/(N-1))>0$. For
$1-1/N\le\epsilon<1$, set all tastes and edge biases to zero. Every
menu then has the uniform choice law, whose error is at most $1-1/N$.
Thus the logarithmic norm bound holds for every $0<\epsilon<1$.
Since the fixed family has finitely many tables, one constant works
for all $c$.

For optimality, take $0<\epsilon<1/2$ and suppose $\mathcal F$ contains
a non-singleton menu $S$ on which the sink assigns its target $i$
probability at least $1-\epsilon$.
Lemma~\ref{lem:app-source-mixture} gives an upstream source $s$ with the
same probability bound, and the argument of
Lemma~\ref{lem:app-source-purity} applies verbatim: for any
$j\in S\setminus\{i\}$,
\[
  e^{w_s^\top(x_i-x_j)}
  =\frac{p_s^S(i)}{p_s^S(j)}
  \ge\frac{1-\epsilon}{\epsilon}.
\]
Writing $\operatorname{diam}(X):=\max_{k,\ell}\|x_k-x_\ell\|_2$ yields
\begin{equation*}
  \|\theta\|_{\rm taste}\ge\|w_s\|_2
  \ge\frac{1}{\operatorname{diam}(X)}
       \log\frac{1-\epsilon}{\epsilon}.
\end{equation*}
Thus the logarithmic order is optimal whenever a non-singleton menu is
observed.  If every menu is a singleton, the table is forced and zero
taste norm suffices.

\section{Experimental Details}
\label{app:protocol}

\subsection{Datasets}
\label{app:datasets}

Attribute features are $z$-normalized using train-pool statistics;
the one-hot item indicators in group A below are used directly.

\paragraph{Expedia.}
Source: the ICDM 2013 ``Personalize Expedia Hotel Searches'' data
\citep{expedia2013}. We keep searches with at least one booking,
price at most $\$1{,}000$, booking window at most $365$ days, and
menu size at least $2$; the booked hotel is the observed choice, and
no outside option is modeled. Sorted listings are kept, so
\texttt{position} is an endogenous feature---the same caveat as
Trivago below. Features ($d=7$): \texttt{position},
\texttt{prop\_starrating}, \texttt{prop\_location\_score1},
\texttt{prop\_log\_historical\_price}, \texttt{prop\_brand\_bool},
\texttt{promotion\_flag}, \texttt{price\_usd}; $B_{\max}=37$. The
split forms $75/25$ pools by search time and samples $60{,}000$
train / $20{,}000$ test sets (seed $42$); the validation set is the
last $10\%$ of the train pool by time. A random-forest two-sample
classifier distinguishes train-period from test-period session summaries
with five-fold cross-validated AUC $0.84$; logistic regression gives $0.81$.

\paragraph{Trivago.}
Source: the RecSys 2019 challenge logs \citep{knees2019recsys}, US
platform only. One observation per session: the last click-out
event with recorded impressions, whose impression list (up to $25$ hotels with displayed
prices) is the assortment and whose clicked hotel is the choice; the
clicked item must appear in the list and the menu must have at least
$2$ items. After filtering: $81{,}519$ sessions, split $75/25$ by
time into $61{,}139$ train / $20{,}380$ test sessions (used in full,
no subsampling); the validation set is the last $10\%$ of the train
pool by time. Features ($d=7$) are \texttt{position},
\texttt{log\_price}, \texttt{stars}, \texttt{rating},
\texttt{n\_properties}, \texttt{free\_wifi}, and \texttt{hotel\_flag}.
The log spans six days (2018-11-01 to 11-06). Under the same five-fold
diagnostic protocol, random forest gives AUC $0.56$ and logistic
regression gives $0.52$. Both detect less temporal shift than on
Expedia. \emph{Caveat}:
\texttt{position} reflects the platform's ranking and is endogenous;
all models receive the same features, so the comparison remains
internally valid.

\paragraph{Fixed-pool benchmarks.}
Group A (featureless; items are one-hot indicators of the pool):
\emph{SFwork} ($5{,}029$ observations, $6$ commute modes, menu sizes
$3$--$6$) and \emph{SFshop} ($3{,}157$ observations, $8$
alternatives, menu sizes $4$--$8$), the San Francisco Bay Area 1990
travel-survey pair released with the PCMC paper; cross-sectional
split. Group B (numeric attributes; item identity enters only
through intercepts): \emph{Swissmetro} ($10{,}719$ observations, $3$
alternatives, $d=4$: travel time, cost with the GA rail-pass
discount, headway, seats; panel data, split by respondent) and
\emph{ModeCanada} ($4{,}324$ cases, $4$ alternatives, $d=4$: cost,
in-vehicle time, out-of-vehicle time, frequency; menu sizes
$2$--$4$). All splits are $75/25$ with seed $42$, group-aware where
groups exist, with validation $=10\%$ of train groups.

\subsection{Unified Training Protocol}
\label{app:training}

The benchmark experiments use the following training protocol; the
synthetic decoy fit is specified separately in Appendix~\ref{app:decoy-fit}.
We train with Adam at
batch size $256$ on the NLL of the observed choices, at most $150$
epochs, early stopping on validation NLL with patience $15$ (minimum
$30$ epochs), and the best-validation weights restored at the end.
The depth ladder, fixed-pool rows, and fixed-configuration rows of
Table~\ref{tab:deep} use learning rate $10^{-2}$ and $10$ restarts.
For MMNL, PCMC, and MCCM on the fixed pools, the archived runs also
include $10^{-3}$; $10^{-2}$ has the better mean validation NLL on
all four datasets. The DeepMNL, RUMnet, DeepHalo, and TCNet grids
use $3$ restarts per configuration and select by mean validation NLL
(Appendix~\ref{app:configs}); DeepSets uses a fixed configuration
and $10$ restarts. The encoder--head matrix uses $10$ restarts and
one learning rate per encoder, held fixed across heads.
Each reported entry averages all restarts of one configuration,
without selecting individual restarts. The fixed-pool LC-MNL and
LowRankCDM size comparisons use the reporting rule specified in
Appendix~\ref{app:fixedpool-grid-details}.
Standard deviations use divisor $n$ for the fixed-pool tables and
$n-1$ for the hotel experiments, matching the archived summaries.
Seeds are deterministic. Table~\ref{tab:deep} reuses the MNL,
LC-MNL, DeepSets, and circuit runs from the corresponding cells of
Table~\ref{tab:head}; other experiments retain their recorded seed
namespaces. The code package provides per-run records and the
mapping from each reported table to these records.
Experiments ran on an Apple M4 CPU, with TCNet and DeepHalo
trained on the MPS backend and the fixed-pool sweep on a SLURM
cluster.

\subsection{Structures of the Depth Ladder
(Table~\ref{tab:depth-ladder})}
\label{app:structures}

All ladder graphs have $63$ nodes, arranged in levels of widths
$62{-}1$ (depth $2$), $55{-}7{-}1$ (depth $3$),
$45{-}13{-}4{-}1$ (depth $4$), $36{-}16{-}7{-}3{-}1$ (depth $5$),
and $32{-}16{-}8{-}4{-}2{-}1$ (depth $6$, the complete binary
tree). Trees connect each node to one parent in the next level;
dense graphs connect every node to every node of the next level. At
$d=7$ the trees all have $503$ parameters; the dense graphs pay one
bias per edge, for $503/833/1082/1153/1123$ parameters at depths
$2$--$6$.

\subsection{Model Configurations}
\label{app:configs}

The grids for Table~\ref{tab:deep} are listed below; selection uses
mean validation NLL across the three restarts of each configuration.
Here E denotes Expedia and T denotes Trivago.

\begin{center}\footnotesize
\setlength{\tabcolsep}{4pt}
\begin{tabular}{lll}
\toprule
Model & grid & selected \\
\midrule
DeepMNL (E) & depth $2$, width $\in\{16,64\}$, lr $10^{-2}$ & width $64$ \\
DeepMNL (T) & width $64$, depth $\in\{2,3\}$, lr $10^{-3}$ & depth $2$ \\
RUMnet & $K\in\{5,10\}$, lr $10^{-2}$ & $K=10$ on both \\
DeepHalo & $H=4$, $L=2$, embed $128$, lr $10^{-3}$ & fixed on both \\
TCNet (E) & width $32$, $4$ heads; $(L1,10^{-3})$, $(L2,10^{-2})$ & $L1$ \\
TCNet (T) & $L2$, $4$ heads, width $\in\{32,64\}$, lr $10^{-2}$ & width $32$ \\
\bottomrule
\end{tabular}
\end{center}

Parameter counts at $d=7$: DeepMNL $4{,}737$; RUMnet $4{,}510$;
DeepHalo $201{,}097$; TCNet $20{,}001$ ($L1$) / $37{,}217$ ($L2$).
The encoders of Table~\ref{tab:head} are trained jointly
(end-to-end) with their heads under the protocol above:
$\mathrm{MLP}(16)$ is an item-wise single-hidden-layer encoder
($145$ parameters); $\mathrm{MLP}(64,64)$ is the DeepMNL trunk
applied item-wise ($4{,}737$); DeepSets is a pooled-context set
encoder ($3{,}681$); these counts include each encoder's native
linear readout. TCNet and DeepHalo use the selected configurations
above. For all six heads, the learning rate is $10^{-2}$ for Id,
MLP(16), and DeepSets, and $10^{-3}$ for MLP(64,64) and DeepHalo;
TCNet uses $10^{-3}$ on Expedia and $10^{-2}$ on Trivago.
On the fixed pools, the classics carry item intercepts on
group B (ASCs), our trees carry per-item source biases ($+$ib), and
on group A the one-hot input makes intercepts implicit.
ConsiderMNL is the classical independent-consideration model
\citep{swait1987incorporating,benakiva1995discrete}: item
$i$ enters the consideration set with probability
$\sigma(a_i+b^{\top}x_i)$, the choice is an MNL among the considered
items conditioned on a nonempty set, and at these pool sizes the
$2^{|A|}$ consideration sets are enumerated exactly. PCMC and the
Markov chain model are trained with the same Adam$+$ES recipe (the
literature uses ILSR resp.\ EM on the same likelihood).

\subsection{Parameter Counts on the Fixed Pools}
\label{app:fixedpool-params}

\begin{table}[H]
\centering\scriptsize
\setlength{\tabcolsep}{5pt}
\begin{tabular}{l cccc}
\toprule
model & SFwork & SFshop & Swissmetro & ModeCanada\\
\midrule
MNL                      & 6 & 8 & 4 & 4\\
MNL (with intercepts)    & --- & --- & 7 & 8\\
NestedLogit              & 9 & 11 & 9 & 11\\
LC-MNL(2)                & 14 & 18 & 16 & 18\\
LC-MNL(3)                & 21 & 27 & 24 & 27\\
LC-MNL(8)                & 56 & 72 & 64 & 72\\
LC-MNL(16)               & 112 & 144 & 128 & 144\\
LC-MNL(32)               & 224 & 288 & 256 & 288\\
MMNL                     & 12 & 16 & 11 & 12\\
Exponomial               & 6 & 8 & 7 & 8\\
ConsiderMNL              & 18 & 24 & 14 & 16\\
CDM-id                   & 42 & 72 & 16 & 24\\
LowRankCDM(1)            & 18 & 24 & 15 & 16\\
LowRankCDM(2)            & 30 & 40 & 23 & 24\\
LowRankCDM(3)            & 42 & 56 & 31 & 32\\
LowRankCDM(4)            & 54 & 72 & 39 & 40\\
PCMC                     & 36 & 64 & 9 & 16\\
MCCM                     & 42 & 72 & 12 & 20\\
$\mathrm{Tree}_{15}$     & 104 & 134 & 98 & 106\\
$\mathrm{Tree}_{63}$     & 440 & 566 & 410 & 442\\
\bottomrule
\end{tabular}
\caption{Parameter counts behind Table~\ref{tab:fixedpool}. On the
featured pair the tree counts include the per-item source biases.}
\label{tab:fixedpool-params}
\end{table}

\subsection{Full Size Grids (Table~\ref{tab:fixedpool})}
\label{app:fixedpool-grid-details}

For LC-MNL and LowRankCDM, Table~\ref{tab:fixedpool} reports the
lowest mean test NLL over the size grid below. This gives each
baseline its strongest reported size; it is a comparison across
tested sizes, not a validation-selected estimate. Training
checkpoints are still selected by validation NLL. Selecting size
by mean validation NLL instead changes only two entries:
LowRankCDM on SFwork selects rank $3$, giving $0.8213$ ($0.0010$),
and LC-MNL on Swissmetro selects $K=8$, giving $0.7522$ ($0.0091$).
The code package exports both summaries. Neither changes which
model has the lowest mean NLL on any of the four datasets.

\begin{table}[H]
\centering\small
\setlength{\tabcolsep}{6pt}
\begin{tabular}{l cc cc}
\toprule
& \multicolumn{2}{c}{featureless} & \multicolumn{2}{c}{featured}\\
\cmidrule(lr){2-3}\cmidrule(lr){4-5}
model & SFwork & SFshop & Swissmetro & ModeCanada\\
\midrule
LC-MNL(2)      & \ms{0.8275}{.0014} & \ms{1.5872}{.0049} & \ms{0.7838}{.0029} & \ms{0.6098}{.0023}\\
LC-MNL(3)      & \ms{0.8265}{.0008} & \ms{1.5844}{.0058} & \ms{0.7628}{.0082} & \ms{0.6125}{.0075}\\
LC-MNL(8)      & \ms{0.8263}{.0002} & \ms{1.5735}{.0089} & \ms{0.7522}{.0091} & \ms{0.6021}{.0057}\\
LC-MNL(16)     & \ms{0.8264}{.0004} & \ms{1.5670}{.0005} & \ms{0.7518}{.0083} & \ms{0.5977}{.0025}\\
LC-MNL(32)     & \ms{0.8262}{.0005} & \ms{1.5669}{.0010} & \ms{0.7536}{.0073} & \ms{0.5929}{.0034}\\
\midrule
LowRankCDM(1)  & \ms{0.8226}{.0028} & \ms{1.5747}{.0080} & \ms{0.7635}{.0014} & \ms{0.5843}{.0027}\\
LowRankCDM(2)  & \ms{0.8217}{.0021} & \ms{1.5693}{.0093} & \ms{0.7573}{.0023} & \ms{0.5851}{.0032}\\
LowRankCDM(3)  & \ms{0.8213}{.0010} & \ms{1.5681}{.0075} & \ms{0.7581}{.0024} & \ms{0.5921}{.0081}\\
LowRankCDM(4)  & \ms{0.8211}{.0015} & \ms{1.5648}{.0017} & \ms{0.7573}{.0009} & \ms{0.5900}{.0069}\\
\bottomrule
\end{tabular}
\caption{Full size grids behind the LC-MNL and LowRankCDM rows of
Table~\ref{tab:fixedpool} (test NLL, mean {\scriptsize(sd)} over $10$
restarts).}
\label{tab:fixedpool-grid}
\end{table}

\subsection{The Full Encoder-by-Head Matrix}
\label{app:head-matrix}

Figure~\ref{fig:pipeline} illustrates the encoder--circuit
composition used in the output-head experiments.
Tables~\ref{tab:matrix-expedia} and~\ref{tab:matrix-trivago} report
every encoder--head combination behind Table~\ref{tab:head} and the
gated-readout comparison of \S\ref{sec:head}. Heads:
\emph{linear} (the encoder's native readout), $\mathrm{Mix}(8)$ (a
constant mixture of eight MNLs), $\mathrm{MoE}(32)$ (a gated
mixture over $32$ MNLs), $\mathrm{HME}(63)$ (a hierarchical
mixture of experts on the $63$-node binary tree), and our
$\mathrm{Tree}_{15}$/$\mathrm{Tree}_{63}$.

\begin{figure}[htbp]
\centering
\includegraphics[width=370.92784119pt,height=101.98229980pt]{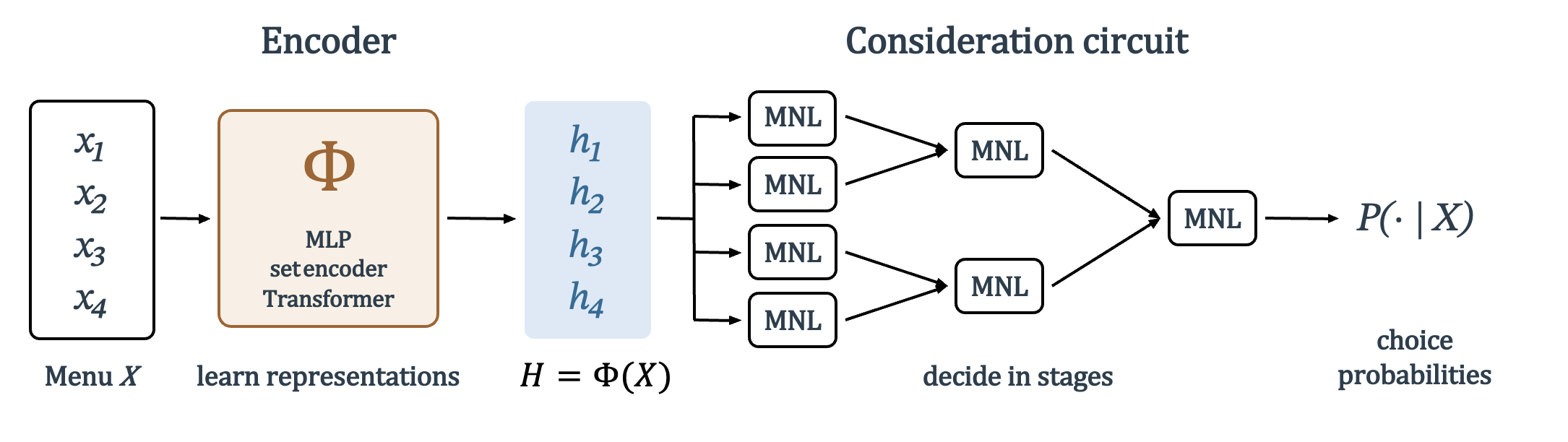}
\caption{The encoder $\Phi$ maps the menu to transformed item
representations $h_i$, which are passed to a consideration-circuit
output head.}
\label{fig:pipeline}
\end{figure}

\begin{table}[H]
\centering\scriptsize
\setlength{\tabcolsep}{2.5pt}
\resizebox{\textwidth}{!}{\begin{tabular}{l cccccc}
\toprule
encoder & linear & Mix(8) & MoE(32) & HME(63)
& $\mathrm{Tree}_{15}$ & $\mathrm{Tree}_{63}$\\
\midrule
Id& \ms{2.6574}{.0008}& \ms{2.6153}{.0006}& \ms{2.6991}{.0506}& \ms{2.7017}{.0693}& \ms{2.6009}{.0026}& \ms{2.5983}{.0018}\\
MLP(16)& \ms{2.6798}{.0398}& \ms{2.6786}{.0215}& \ms{2.7462}{.0997}& \ms{2.7450}{.1234}& \ms{2.6515}{.0349}& \ms{2.6469}{.0152}\\
MLP(64,64)& \ms{2.7006}{.0401}& \ms{2.7224}{.0748}& \ms{2.7887}{.0786}& \ms{2.8022}{.0585}& \ms{2.6607}{.0138}& \ms{2.6621}{.0181}\\
DeepSets& \ms{2.9113}{.2282}& \ms{2.8807}{.2945}& \ms{2.7157}{.0731}& \ms{2.8083}{.1816}& \ms{2.8132}{.1640}& \ms{2.8957}{.1787}\\
DeepHalo& \ms{2.8272}{.1744}& \ms{2.6854}{.0367}& \ms{2.7594}{.0953}& \ms{2.8182}{.1155}& \ms{2.6490}{.0337}& \ms{2.6351}{.0117}\\
TCNet& \ms{2.7377}{.0686}& \ms{2.7063}{.0398}& \ms{2.6809}{.0413}& \ms{2.6852}{.0342}& \ms{2.6834}{.0380}& \ms{2.6611}{.0322}\\
\bottomrule
\end{tabular}}
\caption{Encoder $\times$ head on Expedia (temporal split): test
NLL, mean {\scriptsize(sd)} over $10$ restarts.}
\label{tab:matrix-expedia}
\end{table}

\begin{table}[H]
\centering\scriptsize
\setlength{\tabcolsep}{2.5pt}
\resizebox{\textwidth}{!}{\begin{tabular}{l cccccc}
\toprule
encoder & linear & Mix(8) & MoE(32) & HME(63)
& $\mathrm{Tree}_{15}$ & $\mathrm{Tree}_{63}$\\
\midrule
Id& \ms{2.6400}{.0006}& \ms{2.4631}{.0006}& \ms{2.4322}{.0013}& \ms{2.4348}{.0015}& \ms{2.4417}{.0015}& \ms{2.4331}{.0023}\\
MLP(16)& \ms{2.4809}{.0007}& \ms{2.4491}{.0018}& \ms{2.4197}{.0024}& \ms{2.4202}{.0027}& \ms{2.4229}{.0032}& \ms{2.4185}{.0027}\\
MLP(64,64)& \ms{2.4751}{.0010}& \ms{2.4469}{.0009}& \ms{2.4156}{.0010}& \ms{2.4156}{.0016}& \ms{2.4194}{.0019}& \ms{2.4148}{.0016}\\
DeepSets& \ms{2.4179}{.0032}& \ms{2.4224}{.0097}& \ms{2.4077}{.0024}& \ms{2.4085}{.0021}& \ms{2.4119}{.0115}& \ms{2.4114}{.0118}\\
DeepHalo& \ms{2.4151}{.0027}& \ms{2.4273}{.0032}& \ms{2.4129}{.0040}& \ms{2.4164}{.0051}& \ms{2.4141}{.0024}& \ms{2.4133}{.0024}\\
TCNet& \ms{2.4093}{.0031}& \ms{2.4067}{.0021}& \ms{2.4053}{.0017}& \ms{2.4045}{.0030}& \ms{2.4035}{.0049}& \ms{2.4013}{.0027}\\
\bottomrule
\end{tabular}}
\caption{Encoder $\times$ head on Trivago: test NLL, mean
{\scriptsize(sd)} over $10$ restarts.}
\label{tab:matrix-trivago}
\end{table}

\subsection{Scale Ablation: Is Robustness Just Small Parameter
Count?}
\label{app:scale}

To test whether parameter count explains the Expedia results in
\S\ref{sec:vs-deep}, we increase circuit size and reduce the size of
TCNet and DeepHalo under the same training protocol. We evaluate
complete binary trees $\mathrm{Tree}_{127}$ through
$\mathrm{Tree}_{4095}$ and fully connected layered circuits
$\mathrm{FC}[n_1,\dots,n_k]$, where $n_1,\dots,n_k$ are the level
widths. Every node connects to every node in the next level; the
edge-bias matrix between levels $\ell$ and $\ell+1$ has
$n_{\ell+1}\times n_\ell$ entries, for $\ell=1,\dots,k-1$.
We also reduce TCNet and DeepHalo to $1.5$k--$5.8$k parameters.
All configurations use the protocol of \S\ref{app:training} with
$10$ restarts.

\paragraph{Level-synchronous evaluation.}
For large circuits, we batch the recursive computation of
\S\ref{sec:model} by topological level. Sources are at level $0$;
each interior node is one level above its highest-level predecessor.
All nodes in a level are evaluated in a batched gather--score--softmax
step. In-neighborhoods are padded to a common size, and padded entries
receive score $-\infty$ in the softmax. This computes the same nodewise
recursion while making the number of batched steps depend on circuit
depth rather than node count.

Table~\ref{tab:scale} reports test NLL. Across $311$--$44{,}167$
parameters, circuit test NLL ranges from $2.5973$ to $2.5996$ and
remains below the MNL baseline. The reduced TCNet and DeepHalo
models still have higher test NLL than MNL. Thus parameter count
alone does not explain the performance difference: the comparison
supports the circuit's staged choice structure as an effective
inductive bias on this temporal split.

\begin{table}[H]
\centering\small
\setlength{\tabcolsep}{6pt}
\begin{tabular}{l r c}
\toprule
model & params & test NLL\\
\midrule
MNL & 8 & \ms{2.6574}{.0008}\\
\midrule
$\mathrm{Tree}_{127}$  & 1{,}015  & \ms{2.5985}{.0030}\\
$\mathrm{Tree}_{255}$  & 2{,}039  & \ms{2.5991}{.0029}\\
$\mathrm{Tree}_{511}$  & 4{,}087  & \ms{2.5980}{.0018}\\
$\mathrm{Tree}_{1023}$ & 8{,}183  & \ms{2.5981}{.0015}\\
$\mathrm{Tree}_{2047}$ & 16{,}375 & \ms{2.5976}{.0023}\\
$\mathrm{Tree}_{4095}$ & 32{,}759 & \ms{2.5973}{.0016}\\
\midrule
$\mathrm{FC}[16,8,1]$        & 311      & \ms{2.5996}{.0015}\\
$\mathrm{FC}[32,16,1]$       & 871      & \ms{2.5992}{.0018}\\
$\mathrm{FC}[64,32,1]$       & 2{,}759  & \ms{2.5986}{.0022}\\
$\mathrm{FC}[64,64,64,64,1]$ & 14{,}151 & \ms{2.5977}{.0011}\\
$\mathrm{FC}[256,128,64,1]$  & 44{,}167 & \ms{2.5991}{.0026}\\
\midrule
TCNet (w$8$)     & 1{,}545 & \ms{2.6852}{.0506}\\
TCNet (w$16$)    & 5{,}393 & \ms{2.7215}{.0545}\\
DeepHalo (e$16$) & 1{,}603 & \ms{2.8416}{.1421}\\
DeepHalo (e$32$) & 5{,}763 & \ms{2.8310}{.2132}\\
\bottomrule
\end{tabular}
\caption{The scale ablation on the Expedia temporal axis: test NLL,
mean {\scriptsize(sd)} over $10$ restarts; MNL row from
Table~\ref{tab:deep}. Shrunk configurations: TCNet width $8/16$, one
layer, two heads; DeepHalo embed $16/32$, two heads, one round.}
\label{tab:scale}
\end{table}

\section{Decoy Fit Details}
\label{app:decoy-fit}

\paragraph{Instance.}
Three items described by quality and price, with target choice
probabilities exhibiting the decoy effect
(Table~\ref{tab:decoy-instance}): item $3$ is dominated by item $2$
(lower quality at a higher price) but not by item $1$, and its entry
raises item $2$'s share from $50\%$ to $60\%$. Features are
$z$-normalized using the mean and population standard deviation
across all three items, with the same transformation used for both menus:
\[
  \bar x=(8.5,\,385/3),\qquad
  \sigma=(1/\sqrt{6},\,\sqrt{3650}/3).
\]

\begin{table}[H]
\centering\small
\begin{tabular}{lcccc}
\toprule
item & quality & price & $P(\cdot\mid\{1,2\})$ & $P(\cdot\mid\{1,2,3\})$ \\
\midrule
$1$          & $8.0$ & $\$100$ & $50\%$ & $35\%$ \\
$2$          & $9.0$ & $\$140$ & $50\%$ & $60\%$ \\
$3$ (decoy)  & $8.5$ & $\$145$ & ---    & $\phantom{0}5\%$ \\
\bottomrule
\end{tabular}
\caption{The decoy instance: features and target choice
probabilities.}
\label{tab:decoy-instance}
\end{table}

\paragraph{Training.}
The smallest consideration circuit (two sources and a sink; $d=2$) is fit
with both edge biases fixed to zero, so six taste parameters are
trained, by minimizing the population NLL---the
cross-entropy between the target and model distributions, averaged
over the two menus with equal weight---using Adam (learning rate
$0.05$, $2{,}500$ steps) from $80$ random initializations with
scales in $\{0.1,0.5,1,2\}$, followed by a polish of the best run
($30{,}000$ steps, learning rate $0.01$). Using the fitted parameters,
the mean KL divergence from the target distributions is below
$10^{-14}$, and the maximum absolute choice-probability error is below
$5\times10^{-8}$.

\paragraph{Fitted parameters.}
In $z$-scored feature coordinates (quality, price), the stored
double-precision parameters are
\begin{align*}
  w_A &= (-1.0101429662092325,\, -0.3345334746021139),\\
  w_B &= (+4.3142002721689385,\, -1.671384212169921),\\
  w_0 &= (+2.4650095689934712,\, -3.0847402667907686).
\end{align*}
Table~\ref{tab:decoy-fit-params} reports the per-source choice
distributions, sink attention, and resulting shares, rounded for
display. Figure~\ref{fig:decoy} shows selected values at lower precision.

\begin{table}[H]
\centering\small
\begin{tabular}{lcccc}
\toprule
 & source $A$ & source $B$ & model & target \\
\midrule
\multicolumn{5}{@{}l}{\emph{menu} $\{1,2\}$}\\
$P(1)$ & $0.958$ & $0.001$ & $50.0\%$ & $50\%$ \\
$P(2)$ & $0.042$ & $0.999$ & $50.0\%$ & $50\%$ \\
sink attention & $0.521$ & $0.479$ & & \\
\midrule
\multicolumn{5}{@{}l}{\emph{menu} $\{1,2,3\}$}\\
$P(1)$ & $0.847$ & $0.001$ & $35.0\%$ & $35\%$ \\
$P(2)$ & $0.037$ & $0.996$ & $60.0\%$ & $60\%$ \\
$P(3)$ & $0.116$ & $0.003$ & $\phantom{0}5.0\%$ & $\phantom{0}5\%$ \\
sink attention & $\bm{0.413}$ & $\bm{0.587}$ & & \\
\bottomrule
\end{tabular}
\caption{The fitted model on the two menus: per-source choice
distributions, sink attention, and model vs.\ target shares.}
\label{tab:decoy-fit-params}
\end{table}

\end{APPENDICES}
\end{document}